\theoremstyle{plain}
\newtheorem{theorem}{Theorem}[section]
\theoremstyle{definition}
\newtheorem{assumption}[theorem]{Assumption}
\theoremstyle{remark}
\newtheorem{remark}[theorem]{Remark}
\newtheorem*{theorem*}{Theorem}
\def\Secref#1{Section~\ref{#1}}
\def\eqref#1{(\ref{#1})}
\def\eqref#1{Eq.~(\ref{#1})}
\def\1{\bm{1}}
\def\vzero{{\bm{0}}}
\def\vc{{\bm{c}}}
\def\vx{{\bm{x}}}
\def\mI{{\bm{I}}}
\DeclareMathAlphabet{\mathsfit}{\encodingdefault}{\sfdefault}{m}{sl}
\SetMathAlphabet{\mathsfit}{bold}{\encodingdefault}{\sfdefault}{bx}{n}
\def\gB{{\mathcal{B}}}
\def\gD{{\mathcal{D}}}
\def\gL{{\mathcal{L}}}
\def\gM{{\mathcal{M}}}
\def\gN{{\mathcal{N}}}
\def\gP{{\mathcal{P}}}
\def\sA{{\mathbb{A}}}
\def\sS{{\mathbb{S}}}
\newcommand{\E}{\mathbb{E}}
\newcommand{\R}{\mathbb{R}}
\newcommand{\KL}{D_{\mathrm{KL}}}
\newcommand{\cbr}[1]{\left\{#1\right\}}
\newcommand{\br}[1]{\left(#1\right)}
\newcommand{\sbr}[1]{\left[#1\right]}
\newcommand{\given}{\,|\,}
\newcommand*\intd{\mathop{}\!\mathrm{d}}
\newcommand{\ie}{\textit{i.e.}}
\newcommand{\eg}{\textit{e.g.}}
\newcommand{\wrt}{\textit{w.r.t.}}
\newcommand{\myparagraph}[1]{\textbf{#1~~}}
\newcommand{\ord}{{\mathrm{ord}}}
\definecolor{brickred}{rgb}{0.8, 0.25, 0.33}
\definecolor{darkspringgreen}{rgb}{0.09, 0.45, 0.27}
\definecolor{applegreen}{rgb}{0.55, 0.71, 0.0}
\definecolor{brightmaroon}{rgb}{0.76, 0.13, 0.28}
\definecolor{burgundy}{rgb}{0.5, 0.0, 0.13}
\icmltitlerunning{A Dense Reward View on Aligning Text-to-Image Diffusion with Preference}
\begin{document}

\twocolumn[
\icmltitle{A Dense Reward View on Aligning Text-to-Image Diffusion with Preference}



\icmlsetsymbol{equal}{*}

\begin{icmlauthorlist}
\icmlauthor{Shentao Yang}{equal,yyy}
\icmlauthor{Tianqi Chen}{equal,yyy}
\icmlauthor{Mingyuan Zhou}{yyy}
\end{icmlauthorlist}

\icmlaffiliation{yyy}{The University of Texas at Austin}

\icmlcorrespondingauthor{Shentao Yang}{\texttt{shentao.yang@mccombs.utexas.edu}}
\icmlcorrespondingauthor{Mingyuan Zhou}{\texttt{mingyuan.zhou@mccombs.utexas.edu}}

\icmlkeywords{Text-to-Image Diffusion Model, Preference Alignment, Dense Reward for Direct Preference Optimization (DPO), Sequential Generation}

\vskip 0.3in
]



\printAffiliationsAndNotice{\icmlEqualContribution} 

\begin{abstract}
Aligning text-to-image diffusion model (T2I) with preference has been gaining increasing research attention.
While prior works exist on directly optimizing T2I by preference data, these methods are developed under the bandit assumption of a latent reward on the entire diffusion reverse chain, while ignoring the sequential nature of the generation process. 
This may harm the efficacy and efficiency of preference alignment.
In this paper, we take on a finer dense reward perspective and derive a tractable alignment objective that emphasizes the initial steps of the T2I reverse chain.
In particular, we introduce temporal discounting into DPO-style explicit-reward-free objectives, to break the temporal symmetry therein and suit the T2I generation hierarchy.
In experiments on single and multiple prompt generation, our method is competitive with strong relevant baselines, both quantitatively and qualitatively.
Further investigations are conducted to illustrate the insight of our approach.
Source code is available at \url{https://github.com/Shentao-YANG/Dense_Reward_T2I}\,.
\end{abstract}


\section{Introduction}\label{sec:intro}

Text-to-image diffusion model \citep[T2I,][]{ramesh2022hierarchical,saharia2022photorealistic}, trained by large-scale text-image pairs, has achieved remarkable success in image generation.
As an effort towards more helpful and less harmful generations, methods have been proposing to align T2I with preference, partially motivated by the progress of human/AI-feedback alignment for large language models (LLMs) \citep[][]{bai2022constitutional,gpt42023,llama22023}.
Prior works in this field typically optimize the T2I against an explicit reward function  trained in the first place \citep[][]{hpsv12023,imagereward2023,lee2023aligning}.
To remove the complexity in the modeling and computing of  an explicit reward function, recent work has   generalized direct preference optimization \citep[DPO,][]{dpo2023} from LLM into T2I's preference alignment \citep{wallace2023diffusion}, under a counterpart assumption of DPO that there is a latent reward function evaluating the entire diffusion reverse chain as a whole.


While DPO-style approaches have shown impressive potential,
from the reinforcement learning (RL) perspective,
these methods typically formulate the diffusion reverse chain as a contextual bandit, \ie, treating the entire generation trajectory as a single action; though the diffusion reverse chain is intrinsically a sequential generation process \citep{sohl2015deep,ho2020denoising}.
Since the reverse chain typically requires tens or even thousands of steps \citep{ddim2020,song2021scorebased}, such a bandit assumption, in particular, of a reward function on the whole chain/trajectory, can lead to a combinatorially large decision space over all timesteps. 
This issue is twined with the well-known sparse reward (delayed feedback) issue in RL \citep{andrychowicz2017hindsight,liu2018competitive}, where an informative feedback is only provided after generating the entire trajectory. 
We hereafter use ``sparse reward'' to refer to this issue.
Without considering the sequential nature of the generation process, it is known from RL and LLM literature that this sparse reward issue, which often comes with high gradient variance and low sample efficiency \citep{sqltext2021}, can clearly hurt model training \citep{marbach2003approximate,takanobu2019guided}.



In this paper, we contribute to the research on DPO-style explicit-reward-free alignment methods by taking on a finer-grain \textit{dense-reward} perspective, motivated by recent studies on the latent preference-generating reward function in NLP \citep[\eg,][]{yang2023preferencegrounded} and robotics \citep[\eg,][]{kim2023preference,hejna2023contrastive}. 
Instead of the hypothetical \textit{trajectory-level} reward function, we assume a latent reward function that can score \textit{each step} of the reverse chain, in hoping an easier learning problem from the RL viewpoint \citep[\eg,][]{laidlaw2023bridging}.
Inspired by studies on diffusion and T2I generation that the initial portion of the reverse chain sets up the image outline based on the given 
text conditional, and image's high-level attributes and aesthetic shapes \citep{ho2020denoising,wang2023diffusion},
we hypothesize that emphasizing those initial steps in T2I's preference alignment can help efficacy and efficiency, since those steps can be more directly related to the final de-noised image's being the preferred one.
Under this hypothesis, we break the temporal symmetry in the DPO-style alignment losses by introducing the temporal discounting factor, a key RL ingredient, into T2I's alignment.
Practically, we develop a lower bound of the resulting Bradley-Terry preference model  \citep{bradley1952rank}, which leads to a tractable loss to train a T2I for preference alignment in an  explicit-reward-free manner. 


We test our method on the task of single prompt generation, which is easier for investigation; and the more challenging multiple prompt generation, where we align our T2I with the preference pertaining to one set of prompts and evaluate on another large-scale set of prompts.
On both tasks, our method exhibits competitive quantitative and qualitative performance against strong baselines.
We conduct further studies on the effectiveness of emphasizing the initial steps of the reverse chain in T2I's alignment, which to our best knowledge has not been well investigated in literature.

\section{Main Method}\label{sec:method}

\subsection{Notations and Assumptions} \label{sec:notation}

In this section, we state the notations and assumptions for deriving our method.
As discussed in \Secref{sec:intro}, our first and foremost assumption is a latent \textit{dense} reward.

\begin{restatable}{assumption}{AssumpDenseRew}\label{assump:dense_rew}
     There is a latent reward function $r(s_t, a_t)$ that can score each step $t$ of the T2I reverse chain.
\end{restatable}

We adopt the notations in prior works \citep[\eg,][]{fan2023dpok,ddpo2023} to formulate the diffusion reverse  process under the conditional generation setting as an Markov decision process (MDP), specified by $\gM = \br{\sS, \sA, \gP, r, \gamma, \rho}$. Specifically, let $\pi_\theta$ be the T2I with trainable parameters $\theta$, \ie, the policy network; $\cbr{\vx_t}_{t=T}^0$ be the  diffusion reverse chain of length $T$; and $\vc$ be the text conditional, \ie, the conditioning variable. We have, $\forall \, t$,
\begin{equation*}
\resizebox{0.49\textwidth}{!}{%
    $
    \begin{array}{ccc}
      s_t \triangleq (\vx_t, t, \vc) \,, &  \pi_\theta(a_t \given s_t) \triangleq p_\theta(\vx_{t-1} \given \vx_t, t, \vc) ,  \\
      a_t \triangleq \vx_{t-1} \,, &
      \rho(s_0) \triangleq (\gN(\vzero, \mI), \delta(T), \delta(\vc))  , 
       \\ \gP(s_{t+1}\given s_t, a_t) \triangleq \delta(\vx_{t-1}, t-1, \vc) \,, & r(s_t, a_t)\,,\; \gamma \in [0,1] ,
    \end{array}
$%
}
\end{equation*}
where $\delta(\cdot)$ is the delta measure
and $\gP(\cdot \given s_t, a_t)$ is a deterministic transition.
We denote generically the reverse chain generated by a T2I under the text conditional $\vc$ as a trajectory $\tau$, \ie, $ \tau \triangleq (s_0, a_0, s_1, a_1, \ldots, s_T) \iff (\vx_T, \vx_{T-1}, \ldots, \vx_0) \given \vc \,$.
Note that for notation simplicity, $\vc$ is absorbed into the state part of $\tau$.

Similar to \citet{wallace2023diffusion}, we consider the setting where we are given two trajectories (reverse chains) with equal length $T$. 
For simplicity, assume that $\tau^1$ is the better one, \ie, $\tau^1 \succ \tau^2$. 
Let tuple $\ord\triangleq (1,2)$ and $\sigma(\cdot)$ denotes the sigmoid function, \ie, $\sigma(x) = 1/(1+\exp\br{-x})\,$.

As in standard RL settings \citep{rlintro2018,jointmatching2022}, the reward function $r$ in $\gM$ needs to be bounded. 
Without loss of generality, we assume $r(s,a) \in [0,1]$, and thus $r(s,a)$ may be interpreted as the probability of satisfying the preference when taking action $a$ at state $s$.

\begin{restatable}{assumption}{AssumpBddRew}\label{assump:bounded_rew}
    In $\gM$, $\forall\, (s,a) \in \sS \times \sA, \, 0\leq r(s,a) \leq 1$.
\end{restatable}

The performance of a (generic) policy $\pi$ is typically evaluated by the expected cumulative discounted rewards \citep{rlintro2018}, which is defined as,  
\begin{equation}\label{eq:rl_objective_main}
\resizebox{0.43\textwidth}{!}{%
    $
    \eta(\pi) \triangleq \E\sbr{\sum_{t=0}^T \gamma^t\, r(s_t, a_t) \given s_0\sim \rho, a_t \sim \pi, s_{t+1} \sim \gP}\,.
$%
}
\end{equation}

\begin{restatable}{assumption}{AssumpEvalETau}
Based on \eqref{eq:rl_objective_main}, we assume that for a (generation) trajectory $\tau = (s_0, a_0, s_1, a_1, \ldots, s_T)$, its quality is evaluated by $e(\tau) \triangleq \sum_{t=0}^T \gamma^t\, r(s_t,a_t)$ .
\end{restatable}

\begin{remark}[Practical Rationality of $e(\tau)$] \label{remark:rationality_e_tau}
Since the final step of the reverse chain  depends on all previous steps, a score or (human) evaluation on the final de-noised image should indeed evaluate the whole corresponding de-noising chain, \ie, the entire generation trajectory $\tau$. 
This notion is particularly intuitive when the de-noising process is deterministic, \eg, DDIM \citep{ddim2020}. 
Though motivated from the RL viewpoint (\eqref{eq:rl_objective_main}), 
in evaluating a T2I's generation, typically humans first check its \textit{conceptual} shapes and matching with the text prompt; and if that's OK, then look at finer details in the image.
Thus, the initial steps of the reverse chain, which set up image \textit{outlines} (\Secref{sec:intro}), can play a more important role in an image's being preferred.
This insight is distilled into $e(\tau)$ by using $\gamma < 1$, which emphasizes the contribution from the initial steps.
\end{remark}

\subsection{Method Derivation} \label{sec:method:derive}

The derivation of our method is inspired by the RL literature \citep[\eg,][]{kakade2002approximately,trpo2015,awr2019} and DPO \citep{dpo2023}.
Due to the space limit, in this section we only present the key steps.
A step-by-step derivation is deferred to \cref{sec:detailed_method_derive_proof}.

Directly optimizing $\eta(\pi)$ in \eqref{eq:rl_objective_main} requires constantly sampling from the current learning policy, which can be less practical for T2I's preference alignment.
We are therefore motivated by the cited literature to consider an approximate off-policy objective.
Specifically, we employ the initial pre-trained T2I, denoted as $\pi_I$; and generate the off-policy trajectories by some ``old'' policy $\pi_O$, where $\pi_O$ may be chosen as $\pi_I$ or some saved policy checkpoint not far from $\pi_I$.
We denote $d_{\pi_O}(s)$ as the stationary distribution of $\pi_O$ (detailed in \cref{sec:exp_e_tau}).
To avoid generating unnatural images,  we impose a KL regularization  towards $\pi_I$ on the learning policy $\pi$. Together, we arrive at the following regularized policy optimization problem
\begin{equation}\label{eq:optim_reg_main}
\resizebox{0.43\textwidth}{!}{%
    $
    \begin{aligned}
        \arg\max_\pi \quad & \E_{s\sim d_{\pi_O}(s)}\E_{a\sim \pi(a\given s)}\sbr{r(s,a)} \\
         & \qquad- C \cdot \E_{s\sim d_{\pi_O}(s)}\sbr{\KL\br{\pi(\cdot \given s) \,\|\, \pi_I(\cdot \given s)}}  \\
        \mathrm{s.t.} \quad
        & \int_\sA \pi(a\given s) \intd a = 1 ,\quad \forall \, s\in \sS \,,
    \end{aligned}
$%
}    
\end{equation}
where $C$ is a tuning regularization/KL coefficient.

By solving the first-order condition of the Lagrange form of \eqref{eq:optim_reg_main}, we can get the optimal (regularized) policy $\pi^*$ as 
\begin{equation}  \label{eq:optimal_policy_main}
    \begin{aligned}
        \pi^*(a\given s) &= \exp\br{ r(s,a) /{C}} \pi_I(a\given s)\left/{Z(s)} \right.\,,
    \end{aligned}
\end{equation}
where $Z(s)$ denotes the partition function, taking the form
\begin{equation*}
    Z(s) = \int_\sA \exp\br{ r(s,a) /{C}}  \pi_I(a\given s)  \intd a \,.
\end{equation*}
We also have the relation between $\pi^*$ and $r$, as
\begin{equation}\label{eq:formula_r_main}
    \begin{split}
    r(s,a) 
        &= C\log\sbr{{\pi^*(a\given s)}/{\pi_I(a\given s)}} + C\log Z(s) \,.
    \end{split}
\end{equation}
For a \textit{given} trajectory $\tau = (s_0, a_0, s_1, a_1, \ldots, s_T)$, after plugging in \eqref{eq:formula_r_main}, $e(\tau)$ can be expressed by $\pi^*$ as 
\begin{equation} \label{eq:e_tau_pi_star_main}\textstyle
        e(\tau) 
        = C\sum_{t=0}^T \sbr{ \gamma^t \log\frac{\pi^*(a_t\given s_t)}{\pi_I(a_t\given s_t)} } + C \log Z(\tau) \,.
\end{equation}
where we denote $\log Z(\tau) \triangleq \sum_{t=0}^T\gamma^t \log Z(s_t)$ for notation simplicity 
since the discounted sum is over all $s_t \in \tau$.

Under the Bradley-Terry (BT) model, by plugging in \eqref{eq:e_tau_pi_star_main}, the probability of $\ord$ under $\{e(\tau^k)\}_{k=1}^2$ and hence $\pi^*$ is
\begin{equation}\label{eq:pl_original_main}
\resizebox{0.43\textwidth}{!}{%
    $
    \begin{aligned}
        \Pr &\br{\ord \given  \pi^*, \{e\br{\tau^k}\}_{k=1}^2} = \sigma\br{e\br{\tau^1} - e\br{\tau^2}} \\
        &\quad=  \frac{\exp\br{C\sum_{t=0}^T\gamma^t \log \frac{\pi^*\br{a^1_t \given s^1_t}}{\pi_I\br{a^1_t \given s^1_t}}}Z\br{\tau^{\textcolor{burgundy}{1}}}^{C}}{\sum_{i=1}^2 \exp\br{C\sum_{t=0}^T\gamma^t \log \frac{\pi^*\br{a^i_t \given s^i_t}}{\pi_I\br{a^i_t \given s^i_t}}}Z\br{\tau^{\textcolor{burgundy}{i}}}^{C}}
        \,.
    \end{aligned}
$%
}    
\end{equation}
\eqref{eq:pl_original_main}, however, contains the intractable partition functions $Z(\tau^1)$ and $Z(\tau^2)$.
We will provide a tractable lower bound of \eqref{eq:pl_original_main} by arguing that $Z(\tau^1)\geq Z(\tau^2)$.
Our argument is based on the reward-shaping technique \citep{rewardshaping1999}.

\begin{restatable}[Reward Shaping]{definition}{DefRewShape}
    A shaping-reward function $\Phi$ is a real-valued function on the state space, $\Phi: \sS \rightarrow \R$. It induces a new MDP $\gM' = \br{\sS, \sA, \gP, r', \gamma, \rho}$ where $r'(s,a) \triangleq r(s,a) + \Phi(s)$.
\end{restatable}

\begin{restatable}[Invariance of Optimal Policy under Reward Shaping]{lemma}{LemmaInvOptPol}\label{lemma:pi_star_inv}
    The optimal (regularized) policy \eqref{eq:optimal_policy_main} under the reward-shaped MDP $\gM'$ is the same as that in the original MDP $\gM$\,.
\end{restatable}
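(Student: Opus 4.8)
The plan is to exploit that the shaping term $\Phi(s)$ depends only on the state and not on the action, so it factors out of the Gibbs form of the optimal policy and cancels between the numerator and the partition function. First I would note that reward shaping alters $\gM$ only through the reward, replacing $r$ by $r'(s,a)=r(s,a)+\Phi(s)$, while leaving $\sS$, $\sA$, $\gP$, $\gamma$, $\rho$, the reference policy $\pi_I$, the coefficient $C$, and the sampling distribution $d_{\pi_O}$ all unchanged. Hence the regularized problem \eqref{eq:optim_reg_main} for $\gM'$ coincides with that for $\gM$ except that $r$ is replaced by $r'$, and the same Lagrangian/first-order analysis that produced \eqref{eq:optimal_policy_main} yields the optimal policy for $\gM'$ in the identical Gibbs form
\begin{equation*}
\begin{aligned}
\pi^{\prime *}(a\given s)&=\exp\br{r'(s,a)/C}\,\pi_I(a\given s)\big/ Z'(s)\,,\\
Z'(s)&=\int_\sA \exp\br{r'(s,a)/C}\,\pi_I(a\given s)\intd a\,.
\end{aligned}
\end{equation*}

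Next I would substitute $r'(s,a)=r(s,a)+\Phi(s)$ and use $\exp\br{r'(s,a)/C}=\exp\br{\Phi(s)/C}\,\exp\br{r(s,a)/C}$. Since $\exp\br{\Phi(s)/C}$ is independent of $a$, it pulls out of the integral defining $Z'(s)$, giving $Z'(s)=\exp\br{\Phi(s)/C}\,Z(s)$. Forming the ratio, the common factor $\exp\br{\Phi(s)/C}$ cancels between the numerator and $Z'(s)$, leaving $\pi^{\prime *}(a\given s)=\exp\br{r(s,a)/C}\,\pi_I(a\given s)/Z(s)=\pi^*(a\given s)$, which is exactly \eqref{eq:optimal_policy_main}.

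I do not anticipate a genuine obstacle: once the Gibbs form is in hand, the argument is a single action-independent cancellation. The only point deserving care is justifying that $\gM'$ inherits the same regularized objective, so that its optimum truly takes the Gibbs form with $r'$ in place of $r$; this is immediate because shaping modifies only the reward and touches neither the constraint set nor the KL penalty towards $\pi_I$. It is worth remarking that the invariance is purely algebraic and hence does not require $r'$ to remain within $[0,1]$ (Assumption~\ref{assump:bounded_rew}); boundedness is needed only later, when this lemma is invoked to secure the comparison $Z(\tau^1)\ge Z(\tau^2)$.
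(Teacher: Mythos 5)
Your proof is correct and matches the paper's argument exactly: both write the optimal policy under $\gM'$ in the Gibbs form with $r'(s,a) = r(s,a) + \Phi(s)$, pull the action-independent factor $\exp\br{\Phi(s)/C}$ out of the partition-function integral, and cancel it against the same factor in the numerator to recover \eqref{eq:optimal_policy_main}. Your added observations (that shaping leaves the constraint set and KL penalty untouched, and that boundedness of $r'$ is not needed here) are sound refinements of the same route, not a different one.
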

The proof is deferred to \eqref{eq:proof_inv_opt_pol} in \cref{sec:mle_obj_derive}.
Note that $\gM'$ and $\gM$ share the same state and action space.
Thus, it makes sense to consider the invariance of the optimal policy, where invariance means at each state taking the same action with the same probability.

\begin{restatable}{definition}{DefEquivClassR}\label{def:equiv_class_r}
The equivalence class $[r]$ of the reward function $r$ is the set of all reward functions that can be obtained from $r$ by reward shaping, \ie, $\forall\, r' \in [r], \exists\, \Phi: \sS \rightarrow \R, \,s.t.\; r'(s,a) - r(s,a) = \Phi(s), \forall\, s\in \sS\,, a\in \sA$\,.
\end{restatable}

\begin{restatable}{remark}{RemarkInvEquivClass}
    By Lemma~\ref{lemma:pi_star_inv}, all reward functions in $[r]$ share the same optimal (regularized) policy  as $r$, \ie, \eqref{eq:optimal_policy_main}.
\end{restatable}

\vspace{-.2em}
We are now able to justify our argument: $Z(\tau^1)\geq Z(\tau^2)$. 

\begin{restatable}{theorem}{ThmOrderPartitionAppendix}\label{theorem:order_partition}
Under Assumption~\ref{assump:bounded_rew}, and a sufficiently large regularization coefficient $C$,
    for any finite number $K \geq 2$ of trajectories $\{\tau^k\}_{k=1}^K$ where $\tau^1 \succ \tau^2 \succ \cdots \succ \tau^K$, $\exists\, r' \in [r], \,s.t.,\, Z(\tau^1)\geq Z(\tau^2)\geq \cdots \geq Z(\tau^K)$ under $r'$.
\end{restatable}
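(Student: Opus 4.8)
The plan is to reduce the claim to an identity describing how each trajectory's log-partition transforms under shaping, and then to exhibit an explicit $\Phi$ whose leading-order effect (for large $C$) reorders the partition functions according to the preference. First I would record the elementary fact behind everything: if $r'(s,a) = r(s,a) + \Phi(s)$, then the shaped partition function factorizes,
\begin{equation*}
Z'(s) = \int_\sA \exp\!\big((r(s,a)+\Phi(s))/C\big)\,\pi_I(a\given s)\intd a = \exp(\Phi(s)/C)\,Z(s),
\end{equation*}
since $\Phi(s)$ is constant in $a$ (this is precisely the cancellation used in \cref{lemma:pi_star_inv}). Taking the discounted log-sum along a fixed trajectory yields
\begin{equation*}
\log Z'(\tau^k) = \log Z(\tau^k) + \tfrac{1}{C}\sum_{t=0}^T \gamma^t\,\Phi(s_t^k),
\end{equation*}
so shaping shifts $\log Z(\tau^k)$ by the discounted potential accumulated along $\tau^k$. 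As $Z'>0$, ordering the $Z'(\tau^k)$ is equivalent to ordering these shifted log-partitions.

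Next I would bring in the preference and the boundedness assumption. Under the BT model with score $e$, the chain $\tau^1 \succ \cdots \succ \tau^K$ is equivalent to $e(\tau^1) > \cdots > e(\tau^K)$, so there is a strictly positive gap $\Delta := \min_{k}\big(e(\tau^k) - e(\tau^{k+1})\big) > 0$. By \cref{assump:bounded_rew}, $r(s,a)\in[0,1]$ forces $1\le Z(s)\le e^{1/C}$, i.e. $\log Z(s)\in[0,1/C]$, and hence the uniform expansion $\log Z(s) = \tfrac1C \bar r(s) + O(1/C^2)$, where $\bar r(s) := \E_{a\sim\pi_I}[r(s,a)]\in[0,1]$ and the remainder is controlled uniformly in $s$ precisely because the exponent and all moments of $r$ lie in a bounded range. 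The same expansion holds for $Z'$ with $\bar r$ replaced by $\bar r + \Phi$, giving $\log Z'(\tau^k) = \tfrac1C \sum_{t=0}^T \gamma^t\big(\bar r(s_t^k)+\Phi(s_t^k)\big) + O(1/C^2)$.

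The core step is to choose $\Phi$ so that the $O(1/C)$ leading term is strictly decreasing in $k$ with a fixed positive gap. I would exploit the deterministic transition of $\gM$, where the realized action $a_t^k=\vx_{t-1}^k$ is the image component of the successor state $s_{t+1}^k$, to define $\Phi$ on the visited states by $\Phi(s_t^k) := r(s_t^k,a_t^k) - \bar r(s_t^k)\in[-1,1]$, and $\Phi\equiv 0$ off the visited set. This converts the leading term into $\tfrac1C\sum_{t}\gamma^t r(s_t^k,a_t^k)=\tfrac1C e(\tau^k)$, so $\log Z'(\tau^k)=\tfrac1C e(\tau^k)+O(1/C^2)$, and moreover $\Phi$ is bounded, which keeps the remainder uniform. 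Consecutive leading terms differ by at least $\Delta/C$; choosing $C$ large enough that the uniform $O(1/C^2)$ remainder is below $\Delta/(2C)$ forces $\log Z'(\tau^1)\ge\cdots\ge\log Z'(\tau^K)$, i.e. $Z(\tau^1)\ge\cdots\ge Z(\tau^K)$ under $r'$. Since $r'\in[r]$ by construction, \cref{lemma:pi_star_inv} guarantees it shares the optimal policy \eqref{eq:optimal_policy_main}, legitimizing the substitution.

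The main obstacle is the bookkeeping in the core step: $\Phi$ must be a genuine function of the state alone, whereas $r(s_t^k,a_t^k)$ depends on the action. I expect this to be the delicate part, and I would argue well-definedness from the determinism of $\gP$ (the action is encoded in the next state) together with the fact that distinct continuous reverse chains almost surely do not revisit a common state with two different successors. If one wishes to avoid any genericity argument, the same conclusion follows from the weaker observation that only strict ordering of the leading terms is needed, which can be secured by placing an arbitrarily large separating potential on the pairwise-distinct initial states $s_0^k=(\vx_T^k,T,\vc)$ and invoking the bounded band $\log Z(\tau^k)\in\big[0,\tfrac1C\sum_{t=0}^T\gamma^t\big]$ from \cref{assump:bounded_rew}. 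Pinning down the threshold on $C$ from $\Delta$ and the uniform remainder bound is then routine.
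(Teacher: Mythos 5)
Your proposal is correct, but its main route is genuinely different from the paper's proof, so a comparison is worthwhile. The paper never ties the ordering of the $Z(\tau^k)$ to the actual values of $e(\tau^k)$: it lower-bounds each per-step difference by
$\log Z\br{s^k_t} - \log Z\br{s^i_t} \geq \frac{1}{C}\br{\Phi\br{s^k_t} - \Phi\br{s^i_t}} - \frac{C+1}{C^2}$,
using Jensen's inequality on one side and $\log x \leq x-1$, $e^x \leq 1+x+x^2$ (valid for $x \lessapprox 1.79$, whence the universal requirement $C \geq 0.56$) on the other, and then simply demands that the discounted potential gaps exceed the finite constant $\frac{C+1}{C}\cdot\frac{1-\gamma^{T+1}}{1-\gamma}$ --- a brute-force separating potential whose existence needs only the no-crossing assumption. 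Your main route instead uses the exact identity $\log Z'(\tau) = \log Z(\tau) + \frac{1}{C}\sum_{t}\gamma^t \Phi(s_t)$ together with the expansion $\log Z(s) = \bar r(s)/C + O(1/C^2)$ and the canonical choice $\Phi(s^k_t) = r(s^k_t,a^k_t) - \bar r(s^k_t)$, so that the leading order of $\log Z'(\tau^k)$ is exactly $e(\tau^k)/C$ and the preference order becomes the partition-function order; this is more explanatory, but it flips the quantifiers: your threshold on $C$ scales like $1/\Delta$ with $\Delta$ the minimal score gap, so for a \emph{fixed} $C$ it fails on trajectory collections with tiny gaps, whereas the theorem (and the paper's application of it to arbitrary sampled pairs) wants a trajectory-independent threshold. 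Your fallback --- an arbitrarily large separating potential on the pairwise-distinct initial states combined with the band $\log Z(\tau) \in \sbr{0, \frac{1}{C}\cdot\frac{1-\gamma^{T+1}}{1-\gamma}}$ from Assumption~\ref{assump:bounded_rew} --- closes exactly this gap; it is the same idea as the paper's proof but with a cleaner bound (no Jensen or Taylor-type inequalities), and it works for every $C>0$ rather than only sufficiently large $C$. The well-definedness of your state-only $\Phi$ is also fine, since the no-crossing assumption plus the timestep being a component of the state guarantee each visited state carries a unique realized action.
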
 
We defer the proof of Theorem~\ref{theorem:order_partition} to \cref{sec:proof_order_partitions}.

\begin{restatable}{remark}{RemarkValueC}\label{remark:value_C}
For the value of $C$, as we will see in the proof, we technically require that $\forall (s,a) \in \sS \times \sA, r(s,a) / C \leq \mathrm{const} \approx 1.79$.
Under Assumption~\ref{assump:bounded_rew}, $C\geq 0.56$ will suffice.
We note that this technical requirement helps reducing the search space of the hyperparameter $C$ in practice.
\end{restatable}

\begin{restatable}{remark}{RemarkCallRprimtAsR}
By Lemma~\ref{lemma:pi_star_inv}, $r'$ in Theorem~\ref{theorem:order_partition} and the original $r$ lead to the same optimal policy $\pi^*$, which is our ultimate target. 
Due to this invariance, for notation simplicity, we hereafter refer to $r'$ as $r$, though we \textit{may} actually work in the ``equivalent'' MDP $\gM\textcolor{burgundy}{'} = \br{\sS, \sA, \gP, r\textcolor{burgundy}{'}, \gamma, \rho}$. 
\end{restatable}

With Theorem~\ref{theorem:order_partition}, we can provide a simpler lower bound to $\Pr(\ord \given \pi^*, \{e\br{\tau^k}\}_{k=1}^2)$ in \eqref{eq:pl_original_main},
\begin{equation}\label{eq:pl_lower_bound_main}
    \resizebox{0.43\textwidth}{!}{%
    $
        \Pr\br{\ord \given \pi^*, \cbr{e\br{\tau^k}}_{k=1}^2}
        \geq  \frac{\exp\br{C\sum_{t=0}^T\gamma^t \log \frac{\pi^*\br{a^1_t \given s^1_t}}{\pi_I\br{a^1_t \given s^1_t}}}}{\sum_{i=1}^2 \exp\br{C\sum_{t=0}^T\gamma^t \log \frac{\pi^*\br{a^i_t \given s^i_t}}{\pi_I\br{a^i_t \given s^i_t}}}} \,.
$%
}  
\end{equation}
Recall that $e(\tau)$ evaluates a trajectory $\tau$'s quality, and thus a better trajectory  comes with a higher $e(\tau)$.
Hence $\Pr(\ord \given \pi^*, \{e(\tau^k)\}_{k=1}^2) = \max \Pr(\cdot \given \pi^*, \{e(\tau^k)\}_{k=1}^2)$, \ie, under $\gM$ with $\pi_I$ and conditioning on $\pi^*$, $\ord$ should be the most probable ordering under the BT model shown in \eqref{eq:pl_original_main}.
Thus, in order to approximate $\pi^*$, we train $\pi_\theta$ by maximizing the lower bound \eqref{eq:pl_lower_bound_main} of the \textit{corresponding} BT likelihood of $\ord$, which leads to the negative-log-likelihood \textit{loss} function for training $\pi_\theta$ as 
\begin{equation}\label{eq:pk_nll_main}
\resizebox{.91\linewidth}{!}{%
$
\begin{aligned}
    &\gL_\gamma(\theta \given \ord, \{e(\tau^k)\}_{k=1}^2) \\
    =& -\log\sigma\br{ C\E_{t \sim \mathrm{Cat}\br{\cbr{\gamma^t}}} \sbr{\log\frac{\pi_\theta\br{a^1_t \given s^1_t}}{\pi_I\br{a^1_t \given s^1_t}} - \log\frac{\pi_\theta\br{a^2_t \given s^2_t}}{\pi_I\br{a^2_t \given s^2_t}}}} \,,
\end{aligned}    
$%
}    
\end{equation}
where $\mathrm{Cat}(\{\gamma^t\})$ denotes the categorical distribution on $\{0,\ldots,T\}$ with the probability vector $\{\gamma^t/ \sum_{t'} \gamma^{t'}\}_{t=0}^T$ and $C$ is overloaded to absorb the normalization constant.

\myparagraph{Interpretation.} \label{sec:interpretation}
To see what $\gL_\gamma(\theta \given \ord, \{e(\tau^k)\}_{k=1}^2)$, our loss  in \eqref{eq:pk_nll_main}, is doing, let's calculate its gradient.

Since \eqref{eq:pk_nll_main} is an objective for minimization problem, the gradient update direction is
$-\nabla_\theta \gL_\gamma$.
For notation simplicity, we denote $\widetilde e(\tau^k) \triangleq  C\sum_{t=0}^T\gamma^t \log \frac{\pi_\theta(a^k_t \given s^k_t)}{\pi_I(a^k_t \given s^k_t)}$.
We have
\begin{equation}\label{eq:derivative_l_gamma}
\vspace{-.01em}
    \resizebox{.91\linewidth}{!}{%
$
\begin{aligned}
        \frac{\partial\, (-\gL_\gamma (\theta \given \ord, \{e(\tau^k)\}_{k=1}^2))}{\partial \, \theta} 
        &= \underbrace{\frac{\exp\br{\widetilde e\br{\tau^2} - \widetilde e\br{\tau^1}}}{1+\exp\br{\widetilde e\br{\tau^2} - \widetilde e\br{\tau^1}}}}_{\textcolor{blue}{\textbf{(*)}}} \times\, C \\
        \times \sum_{t=0}^T\gamma^t     \big(
        \pi_I(a^1_t \given s^1_t)  \nabla_\theta \log & \pi_\theta(a^1_t \given s^1_t)   - \pi_I(a^2_t \given s^2_t) \nabla_\theta \log \pi_\theta(a^2_t \given s^2_t)
        \big) \,.
    \end{aligned}    
$%
}
\end{equation}
Detailed derivation is in \cref{sec:derive_loss_gradient}.
The term \textcolor{blue}{\textbf{(*)}} is high when $\widetilde e(\tau^2) > \widetilde e(\tau^1)$, \ie, in the unwanted case where the discounted (relative) likelihood of the inferior trajectory $\tau^2$ is higher.
In that case, we increase the likelihood of $(s_t,a_t) \in \tau^1$ and decrease $(s_t,a_t) \in \tau^2$.
Note that this mechanism is weighted by $\gamma^t$, with which we emphasize the earlier steps in the reverse chain. As discussed in \Secref{sec:intro}, this could be more effective in getting desirable final images.

Additionally, for $(s_t, a_t)$, if $\pi_I(a_t \given s_t)$ is small, our changes (increase or decrease likelihood) can be small too.
This may be interpreted as those $(s_t, a_t)$ are 
at the edge of the initial distribution threatening the generation of realistic images.
Meanwhile, if $\pi_I(a_t \given s_t)$ is high, our changes can be also high,
since we now have more ``room'' for improving and our gradient utilizes this to achieve safe and effective training.

\subsection{Practical Implementation} \label{sec:method_implement} 

In practice, we assume that $\pi_\theta$ is optimized over a given prompt distribution $p(\vc)$, where $p(\vc)=\delta(\vc)$ if we fine-tune $\pi_\theta$ on a single prompt $\vc$, and $p(\vc) = \mathrm{Unif}(\gD(\vc))$ for a dataset $\gD(\vc)$ of prompts if tuning $\pi_\theta$ on multiple prompts.

We implement our algorithm as an online off-policy learning routine.
Similar to prior RL works \citep[\eg,][]{dqn2013,ddpg2016}, we iterate between \textbf{(1)} using the current $\pi_\theta$ to sample $N_{\mathrm{traj}}$ trajectories for each of the $N_\mathrm{pr}$ prompts sampled from $p(\vc)$; and \textbf{(2)} training $\pi_\theta$ via \eqref{eq:pk_nll_main} on mini-batches of trajectories sampled from all stored.
To mimic the classical RLHF settings \citep[\eg,][]{ziegler2019fine,instructgpt2022}, we set $N_{\mathrm{traj}} = 5\geq 2$ for resource efficiency.
In calculating the loss \eqref{eq:pk_nll_main}, we sample $N_\mathrm{step}$ timesteps from $\mathrm{Cat}(\{\gamma^t\})$ to estimate the expectation inside $\sigma(\cdot)$.
Algo.~\ref{algo:main_method} outlines the key steps of our method.

\setlength{\textfloatsep}{0.1cm}
\setlength{\floatsep}{0.1cm}
\begin{algorithm}[tb]
\caption{Outline of Our Off-policy Learning Routine.}
\label{algo:main_method}
\begin{algorithmic}
\STATE \textbf{Input:} Prompt distribution $p(\vc)$, T2I $\pi_\theta$,  training steps $M_\mathrm{tr}$, trajectory collect period $M_\mathrm{col}$, $\#$ prompts to collect trajectories $N_\mathrm{pr}$, $\#$ trajectories for each prompt $N_{\mathrm{traj}}$\,.
\STATE \textbf{Initialization:} Sample $N_\mathrm{pr}$ prompts $\cbr{\vc}\sim p(\vc)$, get $N_{\mathrm{traj}}$ trajectories for each $\vc$\,.
\FOR{$\mathrm{iter} \in \{1, \ldots,M_\mathrm{tr}\}$}
\STATE Sample a mini-batch $\gB \triangleq \{(\tau_i^1, \tau_i^2)_{\vc_i}\}_i$ from storage.
\STATE Optimize $\pi_\theta$ via \eqref{eq:pk_nll_main} using $\gB$.
\IF{$\mathrm{iter}~\%~M_\mathrm{col} == 0$}
\STATE Re-sample $N_\mathrm{pr}$ prompts $\cbr{\vc} \sim p(\vc)$, get $N_{\mathrm{traj}}$ trajectories for each  $\vc$, and update the storage.
\ENDIF
\ENDFOR
\end{algorithmic}
\end{algorithm}
\setlength{\textfloatsep}{0.2cm}
\setlength{\floatsep}{0.1cm}

\subsection{Connection with the DPO Objective.} \label{sec:connect_with_dpo}
The original DPO loss (Eq. (7) in \citet{dpo2023}) can be obtained as a variant of \eqref{eq:pk_nll_main} when setting $\gamma=1$, after factorizing out the probability at each step $t$. 
Using $\gamma=1$ in our formulation is equivalent to the DPO-style trajectory-level bandit setting since $\gamma=1$ makes the contribution of each timestep $t$ to $e(\tau)$ \textit{symmetric}, \ie, each timestep is equally important, and therefore each timestep $t$ is  \textit{symmetric} in the loss  as well. 
Likewise, in DPO-style trajectory-level bandit setting, since the trajectory as a \textit{whole} receives a single reward, this reward/evaluation does not distinguish each step $t$ within the trajectory either, making each timestep $t$ \textit{symmetric} again in the training loss, same as our variant with $\gamma=1$.
Due to this connection in the loss, we refer to this variant as ``trajectory-level reward,'' indistinguishable to whether it actually comes from a trajectory-level bandit setting or our formulation but with $\gamma=1$.
As a reminder, in our formulation, if we set $\gamma < 1$, then the contribution of each timestep $t$ to $e(\tau)$ will not be symmetric, since earlier steps will be emphasized. 
This leads to the desirable \textit{asymmetry} of \mbox{timestep $t$ in the loss, as shown in \eqref{eq:pk_nll_main}.}

\section{Related Work} \label{sec:related_work}

\myparagraph{T2I's Alignment with Preference.}\label{sec:t2ialignment}%
There have been growing interests in aligning T2I's, or more broadly diffusion models', generations to (human) preferences.
Efforts have been putting on tuning the models on curated data 
\citep{podell2023sdxl,dai2023emu} or re-captioning existing image datasets \citep{dalle32023,segalis2023picture}, to bias T2I generation towards better text fidelity and aesthetics.
These data enhancement efforts may complement our method. 

To more directly optimize the feedback, methods have been proposing to fine-tune T2I with respect to (\wrt) reward models pre-trained on large-scale human preference datasets \citep{imagereward2023,hpsv22023,pickapic2023}.
\citet{Lee2023AligningTM} and \citet{hpsv12023} adapt the classical supervised training by fine-tuning T2I via reward-weighted likelihood or discarding low-reward images, with online versions extended by \citet{reft2023}.      
By formulating the denoising process as an MDP, policy gradient methods are adopted to fine-tune T2I for specific rewards \citep{sftpg2023,fan2023dpok,ddpo2023} or polishing the input prompts \citep{Hao2022OptimizingPF}. 
Further assuming a \textit{differentiable} reward function, a more direct alignment/feedback-optimization can be achieved by backpropagating the reward function's gradient through the reverse chain \citep[\eg,][]{draft2023,alignprop2023,doodl2023}.     
Although optimizing \wrt\mbox{} explicit rewards have shown efficacy and efficiency, it requires a stronger assumption than our method on having an explicit scalar reward function, while assuming analytic gradients of the reward function is even stronger.
By contrast, our method only requires  binary comparison between generated images/trajectories, which is among the simplest in T2I's preference alignment.  

\begin{figure*}[tb]
     \centering
     \begin{subfigure}[b]{0.22\textwidth}
         \centering
         \includegraphics[width=\textwidth]{./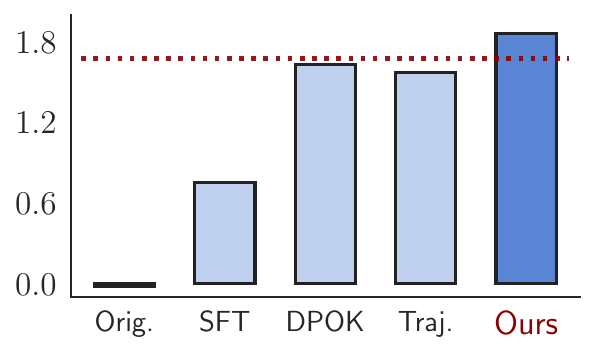}
         \captionsetup{font=footnotesize}
         \vspace{-6mm}
         \caption{\footnotesize{Color}} 
         \label{fig:single_imagerew_Color}
     \end{subfigure}
     \hfill
     \begin{subfigure}[b]{0.22\textwidth}
         \centering
         \includegraphics[width=\textwidth]{./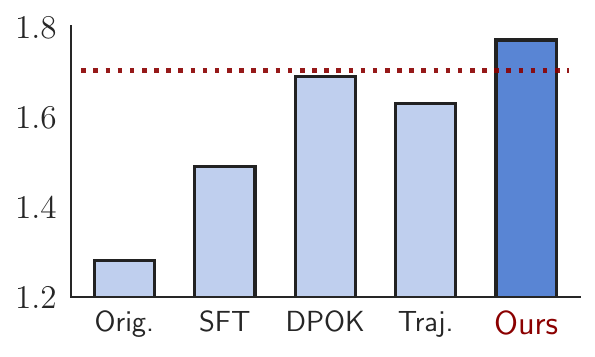}
         \captionsetup{font=footnotesize}
         \vspace{-6mm}
         \caption{\footnotesize{Count}} 
         \label{fig:single_imagerew_Count}
     \end{subfigure}
     \hfill
    \begin{subfigure}[b]{0.22\textwidth}
         \centering
         \includegraphics[width=\textwidth]{./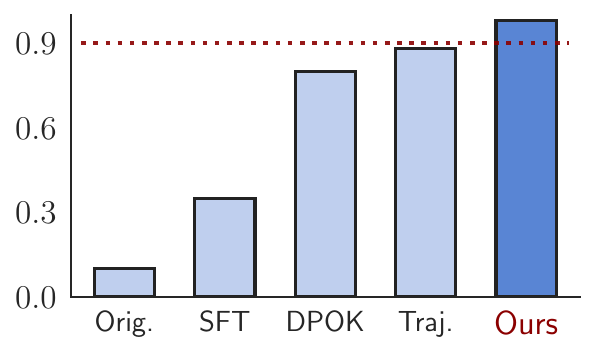}
         \captionsetup{font=footnotesize}
         \vspace{-6mm}
         \caption{\footnotesize{Composition}}    
         \label{fig:single_imagerew_Composition}
     \end{subfigure}
     \hfill
     \begin{subfigure}[b]{0.22\textwidth}
         \centering
         \includegraphics[width=\textwidth]{./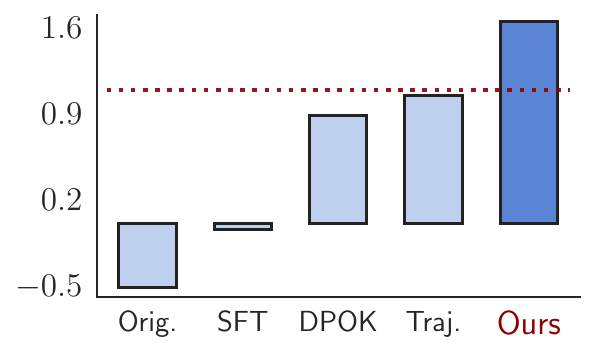}
         \captionsetup{font=footnotesize}
         \vspace{-6mm}
         \caption{\footnotesize{Location}} 
         \label{fig:single_imagerew_Location}
     \end{subfigure}
     \vspace{-.8em}
     \captionsetup{font=small}
        \caption{ 
        \small
        ImageReward scores for the seen prompts in the single prompt experiments.
        ``Orig.'' denotes the original SD1.5.
        ``SFT'' is the supervised fine-tuned model.
        ``Traj.'' denotes the classical DPO-style objective discussed in \Secref{sec:connect_with_dpo}, \ie, assuming trajectory-level reward. 
        All our produced results are the average over $100$ samples.
        Horizontal line indicates the best baseline result.
        }
        \label{fig:single_imagerew}
        \vspace{-1em}
\end{figure*}
\begin{figure*}[tb]
     \centering
     \begin{subfigure}[b]{0.22\textwidth}
         \centering
         \includegraphics[width=\textwidth]{./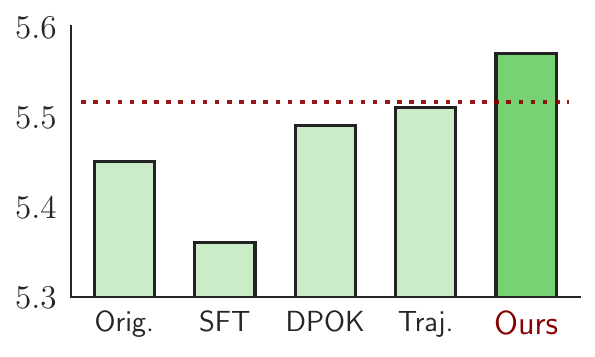}
         \captionsetup{font=footnotesize}
         \vspace{-6mm}
         \caption{\footnotesize{Color}} 
         \label{fig:single_aes_Color}
     \end{subfigure}
     \hfill
     \begin{subfigure}[b]{0.22\textwidth}
         \centering
         \includegraphics[width=\textwidth]{./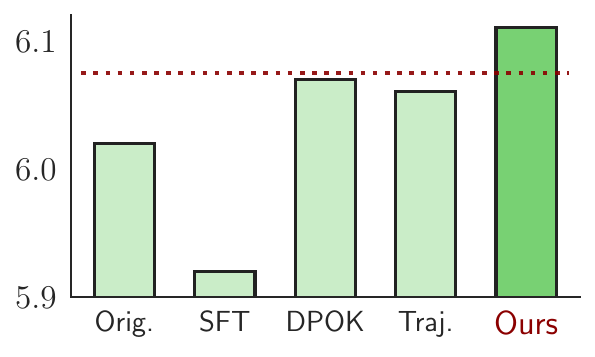}
         \captionsetup{font=footnotesize}
         \vspace{-6mm}
         \caption{\footnotesize{Count}} 
         \label{fig:single_aes_Count}
     \end{subfigure}
     \hfill
    \begin{subfigure}[b]{0.22\textwidth}
         \centering
         \includegraphics[width=\textwidth]{./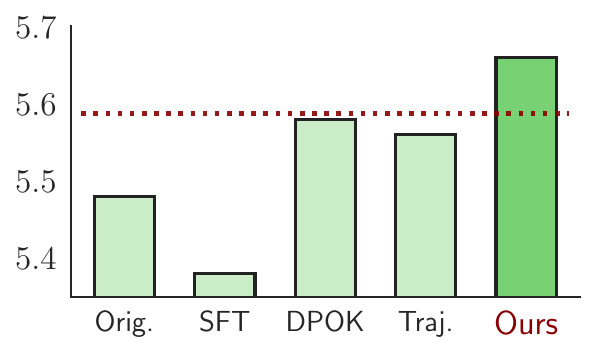}
         \captionsetup{font=footnotesize}
         \vspace{-6mm}
         \caption{\footnotesize{Composition}}    
         \label{fig:single_aes_Composition}
     \end{subfigure}
     \hfill
     \begin{subfigure}[b]{0.22\textwidth}
         \centering
         \includegraphics[width=\textwidth]{./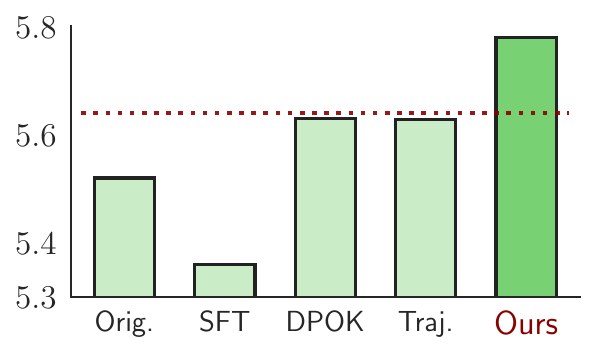}
         \captionsetup{font=footnotesize}
         \vspace{-6mm}
         \caption{\footnotesize{Location}} 
         \label{fig:single_aes_Location}
     \end{subfigure}
     \vspace{-.8em}
     \captionsetup{font=small}
        \caption{ 
        \small
        Aesthetic scores for the seen prompts in the single prompt experiments.
        Number reporting and abbreviations follow Fig.\ref{fig:single_imagerew}. 
        }
        \label{fig:single_aes}
        \vspace{-1.1em}
\end{figure*}

\begin{figure*}[tb]
     \centering
\includegraphics[width=0.9\textwidth]{./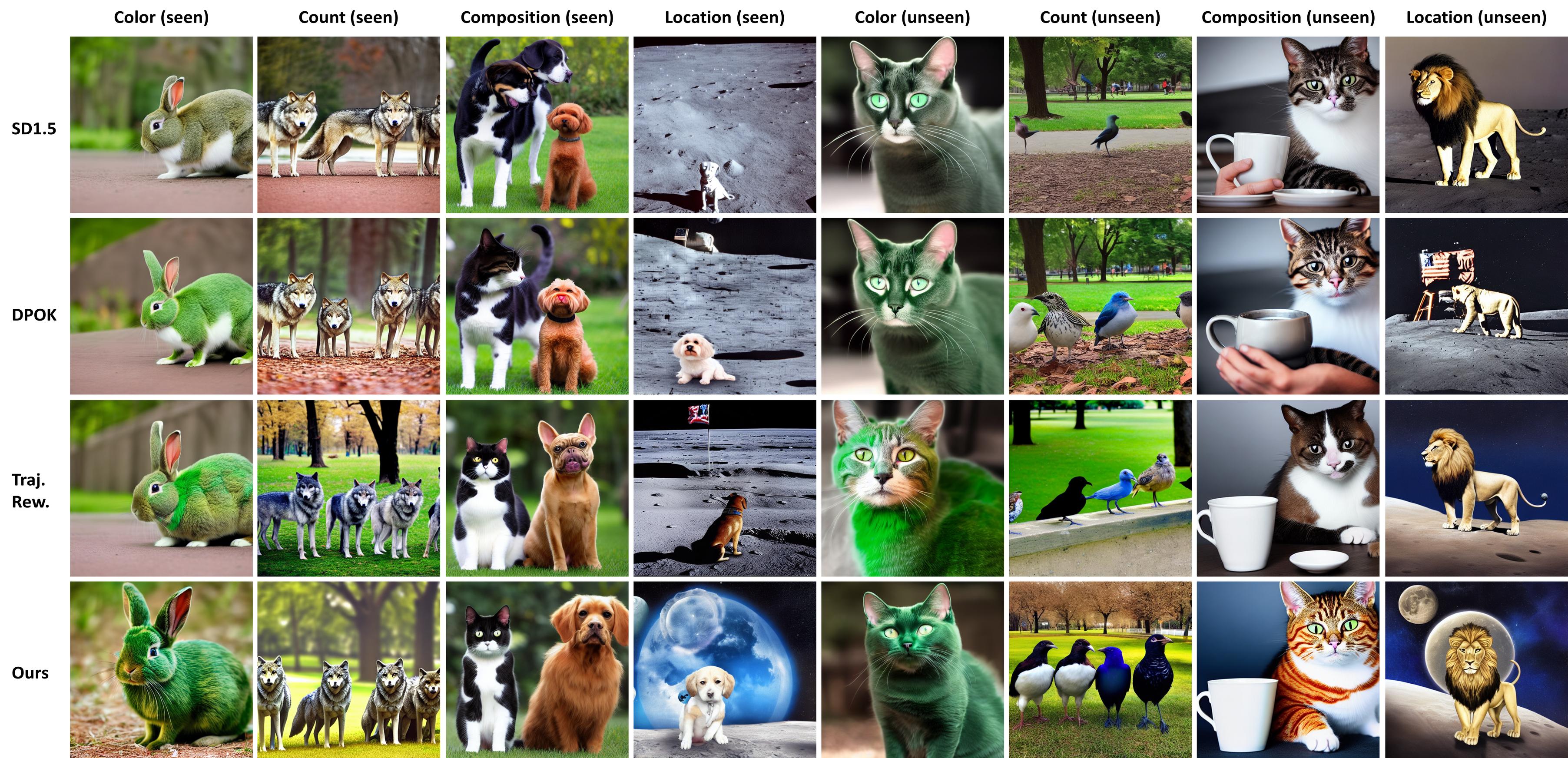}
     \vspace{-2mm}
     \captionsetup{font=small}
        \caption{ 
        \small
        Generated images in the single prompt experiment for both seen and unseen prompts (Table~\ref{table:four_single_prompts}).
        Each comparison is generated from the same random seed.
        ``Traj. Rew.'' denotes the classical DPO-style objective of assuming trajectory-level reward (\Secref{sec:connect_with_dpo}).
        }
        \label{fig:single_prompt_generated_images}
        \vspace{-1.1em}
\end{figure*}

Most close to our work, Diffusion-DPO \citep{wallace2023diffusion} also considers an explicit-reward-free T2I alignment method.
It is nevertheless developed under a different setting where the generation latents are discarded,
and thus it needs to approximate the reverse process with the forward.
However, given the relatively-small scale of the preference alignment stage, storing the reverse chains can be both feasible and straightforward.  
We thereby eschew such an approximation and use the exact generation latents.
More importantly, as in DPO \citep{dpo2023}, Diffusion-DPO is derived by assuming reward on the \textit{whole} chain/trajectory, obtainable as a variant of our method (\Secref{sec:connect_with_dpo}) and distinct from our dense reward perspective.
In experiments, we validate the efficacy of our perspective by comparing with this approach of ``trajectory-level reward.''

\cref{sec:more_related_work} reviews literature on \textbf{(1)} dense \textit{v.s.} sparse training guidance for sequential generative models, \textbf{(2)} characterizing the (latent) preference generation distribution, and \textbf{(3)} learning-from-preference in related fields.


\section{Experiments}\label{sec:exp}

To straightforwardly evaluate our method's ability to satisfy preference, motivated by 
recent papers in directly tuning T2I \wrt\mbox{} pre-trained rewards (\Secref{sec:related_work}) and relevant papers in NLP \citep[\eg,][]{caspi2021,fantasticrewards2022,yang2023preferencegrounded}, in our experiments, we use the following logics:
We obtain preference among multiple trajectories by some 
open-source scorer trained on data of human preference over T2I's generations; and test our method's ability in increasing the score, as an indication of the model's improved alignment with (human) preference.
The scorer factors in text fidelity.
For preference simulation, given a prompt and $N_{\mathrm{traj}}$ corresponding images, the higher the score, the more preferable the image is.

For computational efficiency, our policy $\pi_\theta$ is implemented as LoRA \citep{hu2021lora} added on the U-net \citep{unet2015} module of a frozen pre-trained Stable Diffusion v1.5 \citep[SD1.5,][]{stablediffusion2022}, and we only train the LoRA parameters.
With SD1.5, the generated images are of resolution $512 \times 512$.
For all our main results, we set the discount factor $\gamma$ to be $\gamma=0.9$. 
We perform ablation study on the $\gamma$ value in \Secref{sec:exp_abla} \textbf{(b)}.
As in prior works \citep[\eg,][]{fan2023dpok,ddpo2023}, in both sampling trajectories and generating evaluation images, we use DDPM sampler with $50$ inference steps and classifier-free guidance \citep{ho2022classifier}. 
We use the default guidance scale of $7.5$.
Source code is publicly \href{https://github.com/Shentao-YANG/Dense_Reward_T2I}{released}.

\subsection{Single Prompt}\label{sec:exp_single}

\begin{table}[t]
\captionsetup{font=small}
\caption{
\small
Seen and unseen prompts in \Secref{sec:exp_single} for each domain.
} 
\label{table:four_single_prompts}
\centering 
\vspace{-1em}
\resizebox{.48\textwidth}{!}{
\begin{tabular}{@{}cll@{}}
\toprule
Domain     & \multicolumn{1}{c}{Seen}                     & \multicolumn{1}{c}{Unseen}                  \\ \midrule
Color       & A green colored rabbit.  & A green colored cat.    \\
Count       & Four wolves in the park. & Four birds in the park. \\
Composition & A cat and a dog.         & A cat and a cup.        \\
Location    & A dog on the moon.       & A lion on the moon.     \\ \bottomrule
\end{tabular}
}
\end{table}

\myparagraph{Settings.}
To facilitate investigation, we first test our method on the single-text-prompt setting in DPOK \citep{fan2023dpok},  \ie, using one prompt during  LoRA fine-tuning.
As in DPOK, the goal is to test our method on training the policy T2I to achieve 
generating objects with specified colors, counts, or locations, or generating composition of two objects.
We borrow the seen (training) and unseen prompts from DPOK, which are tabulated in \cref{table:four_single_prompts}.
In all single prompt experiments, we use the explicit reward model in DPOK, ImageReward \citep{imagereward2023}, to generate preference.
We report both ImageReward and (Laion) Aesthetic score \citep{schuhmann2022laion}, averaged over $100$ generated images.

\myparagraph{Implementation.}
For a fair comparison,  we collect the same total amount of $20000$ images/trajectories as DPOK.
Rather than its fully-online image-collection strategy, we are motivated by recent RLHF works \citep[\eg,][]{ziegler2019fine,stiennon2020learning,bai2022training} to more practically divide our trajectory collection into four stages, where each stage collects $5000$ trajectories and discards the previously collected ones.
As in DPOK, we use LoRA with rank $4$ and train the model for a total of $M_\mathrm{tr}=10000$  steps, and hence $M_\mathrm{col}=2500$ steps, $N_\mathrm{pr}=1000$.
We set the KL coefficient $C=10$ and $N_\mathrm{step} = 3$.
\Secref{sec:exp_abla} \textbf{(c)} ablates the value of $C$.
More details on hyperparameters are in \cref{sec:exp_details_single_prompt}. 

\begin{table*}[t]
\captionsetup{font=small}
\caption{
\small
HPSv2 and Aesthetic score for the multiple prompt experiment.
Shown here are results for selected relevant and/or strong baselines, with
full set of results in Table~\ref{table:multiple_promtps_all} of \cref{sec:add_results}.
The first four result columns are the four styles in HPSv2 test set and ``Average'' is the overall average.
Best result in each metric is bold.
Note that HPSv2 paper and Github repository do not report Aesthetics score. 
} 
\label{table:multiple_promtps_small}
\centering 
\vspace{-.6em}
\begin{tabular}{@{}lcccccc@{}}
\toprule
Model                   & Animation & Concept-art & Painting & Photo    & Average & Aesthetic     \\ \midrule
DALL·E 2                & 27.34           & 26.54             & 26.68          & 27.24          & 26.95          & -             \\
Stable Diffusion v1.5   & 27.43           & 26.71             & 26.73          & 27.62          & 27.12          & 5.62          \\
Stable Diffusion v2.0   & 27.48           & 26.89             & 26.86          & 27.46          & 27.17          & -             \\
SDXL Refiner 0.9        & 28.45           & 27.66             & 27.67          & 27.46          & 27.80           & -             \\
Dreamlike Photoreal 2.0 & 28.24           & 27.60              & 27.59          & 27.99          & 27.86          & -             \\ \midrule
Trajectory-level Reward & 29.37           & 28.81             & 28.83          & 29.16          & 29.04          & 5.94          \\
\textbf{Ours}                    & \textbf{30.46}  & \textbf{29.95}    & \textbf{30.01} & \textbf{29.93} & \textbf{30.09} & \textbf{6.31} \\ \bottomrule
\end{tabular}
\end{table*}

\myparagraph{Results.}
We compare our method with the the original SD1.5 (``Orig.''), supervised fine-tuned model (``SFT''), DPOK, and the classical DPO-style objective, \ie, the approach of assuming trajectory-level reward, which is abbreviated as ``Traj.''.    
As discussed in \Secref{sec:connect_with_dpo}, ``Traj.'' can be obtained by setting $\gamma=1$ in our loss \eqref{eq:pk_nll_main}.
We follow the DPOK paper to plot the ImageReward in Fig.~\ref{fig:single_imagerew} and Aesthetic score in   Fig.~\ref{fig:single_aes} for the seen prompts, where the results for ``Orig.'', ``SFT'', and DPOK are directly from the DPOK paper.
Fig.~\ref{fig:single_prompt_generated_images} shows examples of the generated images from both our method and the baselines.
More image comparisons are deferred to \cref{sec:single_prompt_more_images}.

As shown in Fig.~\ref{fig:single_imagerew} and Fig.~\ref{fig:single_aes},
our method can improve both ImageReward, the preference generating metric, and the unseen Aesthetic score.
The higher scores of our method over DPOK  on both metrics validate the efficacy of our method for T2I's preference alignment.
Comparing with ``Traj.'', our method improves more over the original SD1.5, which we attribute to our dense reward perspective, implemented by introducing temporal discounting to emphasize the initial steps of the diffusion reverse chain.
From Fig.~\ref{fig:single_prompt_generated_images}, it is clear that, on both seen and unseen text prompts, our method generates images that are not only faithfully matched with the prompts, but also of higher aesthetic quality, \eg, having more colorful details and/or backgrounds.
\Secref{sec:exp_abla} \textbf{(a)} compares the generation trajectories of our method and the baselines. 
Indeed, our method  generates the desired shapes earlier, which explains why it produces better final images.



\begin{figure*}[tb]
     \centering
\includegraphics[width=0.9\textwidth]{./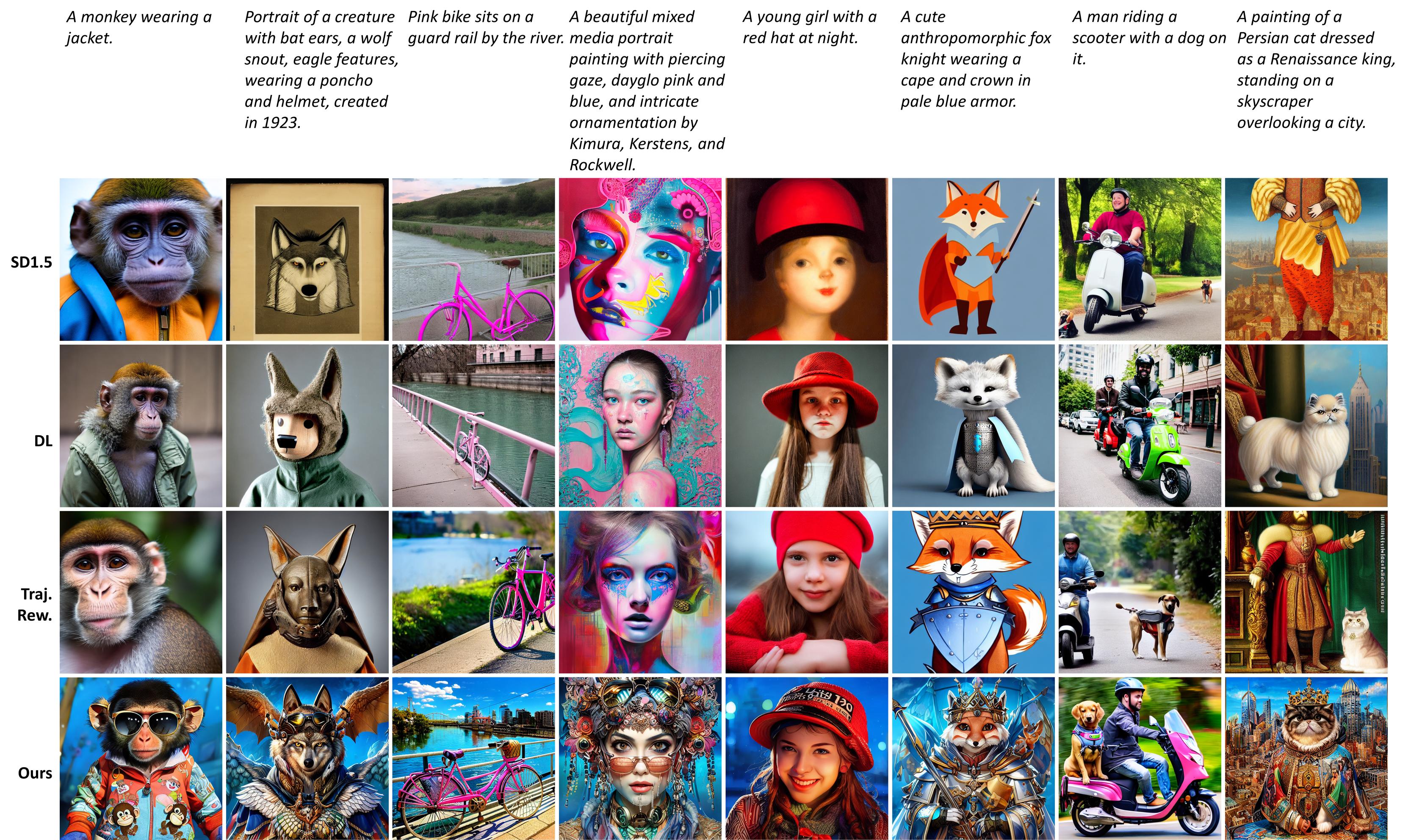}
     \vspace{-2mm}
     \captionsetup{font=small}
        \caption{ 
        \small
        Generated images in the multiple prompt experiment from our method and baselines, with prompts.
        ``DL'' denotes Dreamlike Photoreal 2.0, the best baseline from HPSv2 paper.
        ``Traj. Rew.'' is the classical DPO-style objective of assuming trajectory-level reward.
        }
        \label{fig:multiple_prompt_generated_images}
        \vspace{-1.em}
\end{figure*}

\subsection{Multiple Prompts}\label{sec:exp_multiple_prompts}

\myparagraph{Settings.}
We consider a more challenging setting where we apply our method to train a T2I on the HPSv2 \citep{hpsv22023} train prompts and evaluate on the HPSv2 test prompts, which have no intersection with the train prompts.  
We obtain preference by HPSv2 and report the average of both HPSv2 and Aesthetic score over all HPSv2 test prompts.
Due to  the large test-set size (3200 prompts), we follow the HPSv2 paper to generate one image per prompt for evaluation.

\myparagraph{Implementation.}
We use the same trajectory-collection strategy as in the single prompt experiments (\Secref{sec:exp_single}).
Due to the task complexity and the large size of the HPSv2 train set ($> 100,000$ prompts), we collect a total of $100,000$ trajectories, divided into ten collection stages.
Each stage collects $10,000$ trajectories and discards the previously collected ones.
We use LoRA with rank $32$ and train the model for a total of $M_\mathrm{tr}=40,000$ steps, and hence $M_\mathrm{col}=4000$ steps, $N_\mathrm{pr}=2000$.
We set the  KL coefficient $C=12.5$ and ablates the value of $C$ in \Secref{sec:exp_abla} \textbf{(c)}.
We use $N_\mathrm{step} = 1$ based on compute constraints such as GPU memory.
\cref{sec:exp_details_multiple_prompt} provides more hyperparameter settings.

\myparagraph{Results.}
Table~\ref{table:multiple_promtps_small} shows the HPSv2 and Aesthetic score for our method and selected relevant and/or strong baselines from the HPSv2 paper, with the
full set of baselines deferred to Table~\ref{table:multiple_promtps_all} of \cref{sec:add_results}.
All baselines available in  \href{https://github.com/tgxs002/HPSv2/tree/3ab15c150044de4c3f714493e9902c4ca3d44257}{HPSv2 Github Repository} are directly cited.
As in \Secref{sec:exp_single}, we further compare with the  classical DPO-style objective of assuming trajectory-level reward (\Secref{sec:connect_with_dpo}).
Fig.~\ref{fig:multiple_prompt_generated_images} shows examples of generated images from our method and baselines, with more image comparisons in \cref{sec:multiple_prompt_more_images}.

As seen in Table~\ref{table:multiple_promtps_small}, our method is able to improve the preference generating metric, HPSv2, and the unseen Aesthetic score.
The improvement from our method is larger than the variant of assuming trajectory-level reward, validating our insight of emphasizing the initial part of the T2I generation process, a product of our distinct dense reward perspective.
In Fig.~\ref{fig:multiple_prompt_generated_images}, we see that our method generates images well matched with the text prompts, in some cases better than the baselines, \eg, on the prompts of ``a girl at night,'' ``fox knight,'' and ``scooter with a dog on.''
From both short and the more challenging long prompts, our method is able to generate vivid images, often with sophisticated aesthetic shapes.
Together with the image examples in \cref{sec:multiple_prompt_more_images}, Fig.~\ref{fig:multiple_prompt_generated_images} qualitatively validates the efficacy of our method.

\subsection{Further Study}\label{sec:exp_abla}
This section considers the following four research questions to better understand our method.

\textbf{(a):} \textit{Does the T2I trained by our method indeed generate the desired shapes earlier in the diffusion reverse chain?}


\begin{figure*}[tb]
     \centering
\includegraphics[width=0.9\textwidth]{./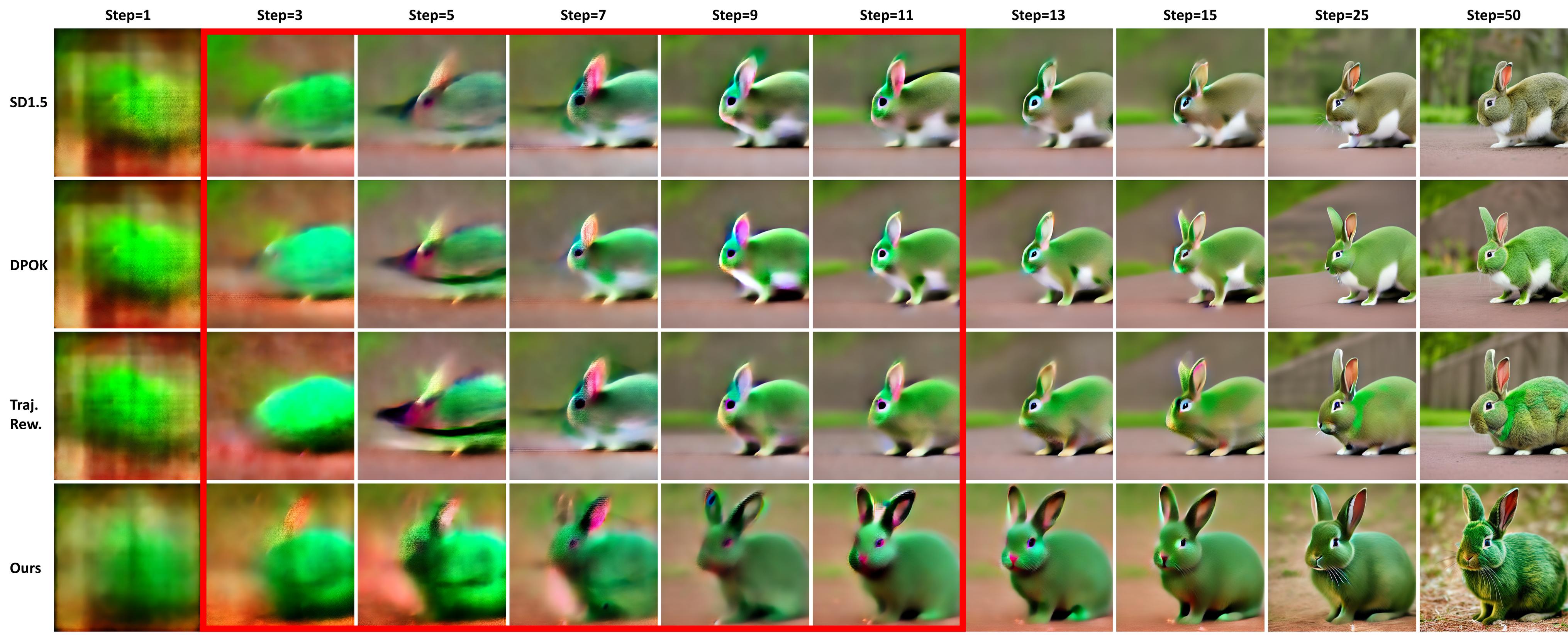}
     \vspace{-2mm}
     \captionsetup{font=small}
        \caption{ 
        \small
        Generation trajectories from our method and the baselines on the prompt ``A green colored rabbit.'' in the single prompt experiment, correspond to the images in Fig.~\ref{fig:single_prompt_generated_images}. 
        Shown are the  $\hat \vx_0$ predicted from the latents at the specified steps of the reverse chain.
        }
        \label{fig:single_prompt_generation_trajs}
        \vspace{-1em}
\end{figure*}

\begin{figure*}[tb]
    \centering
    \begin{minipage}{0.49\textwidth}
        \centering
\includegraphics[width=.704\textwidth]{./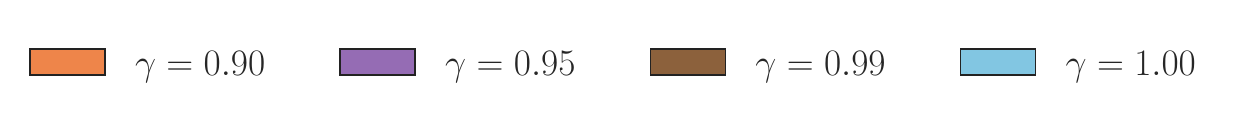}
\vspace{-.5em}
\\
     \begin{subfigure}[b]{0.48\textwidth}
         \centering
         \includegraphics[width=\textwidth]{./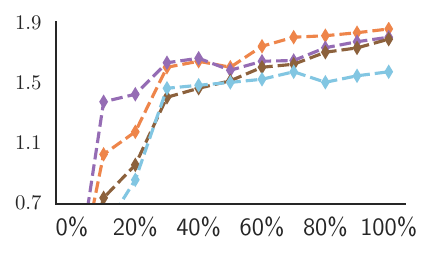}
         \captionsetup{font=footnotesize}
         \vspace{-6mm}
         \caption{\footnotesize{Single (ImageReward)}} 
         \label{fig:single_color_gamma}
     \end{subfigure}
     \hfill
     \begin{subfigure}[b]{0.48\textwidth}
         \centering
         \includegraphics[width=\textwidth]{./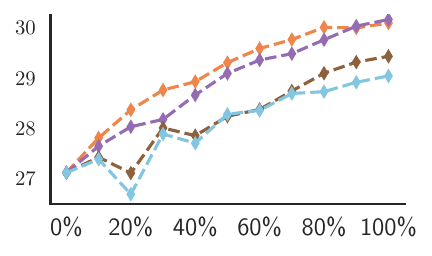}
         \captionsetup{font=footnotesize}
         \vspace{-6mm}
         \caption{\footnotesize{Multiple (HPSv2)}} 
         \label{fig:multiple_gamma}
     \end{subfigure}
     \vspace{-3mm}
     \captionsetup{font=small}
        \caption{ 
        \small
        Preference generating metrics over the training process, for the single and multiple prompt experiments under various discount factor $\gamma$. 
        $x$-axis represents $t\%$ of the training process.
        In \textbf{(a)} all lines start from $-0.02$ at $0\%$, the value of ``Orig.'' in Fig.~\ref{fig:single_imagerew_Color}.
        }
        \label{fig:line_varying_gamma}
    \end{minipage}
    \hfill
    \begin{minipage}{0.49\textwidth}
        \centering
\includegraphics[width=.88\textwidth]{./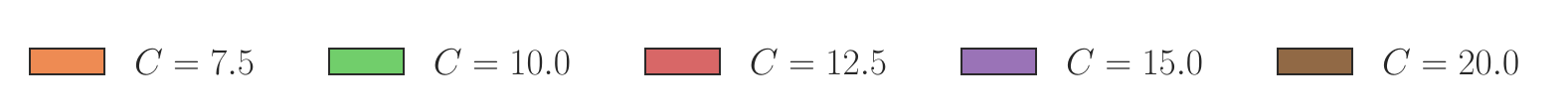}
\vspace{-.5em}
\\
     \begin{subfigure}[b]{0.48\textwidth}
         \centering
         \includegraphics[width=\textwidth]{./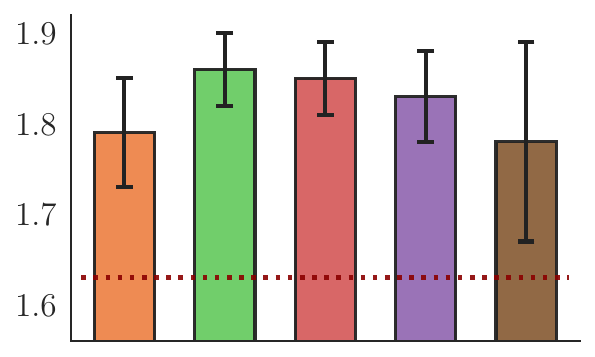}
         \captionsetup{font=small}
         \vspace{-6mm}
         \caption{\footnotesize{Single (ImageReward)}} 
         \label{fig:single_kl}
     \end{subfigure}
     \hfill
     \begin{subfigure}[b]{0.48\textwidth}
         \centering
         \includegraphics[width=\textwidth]{./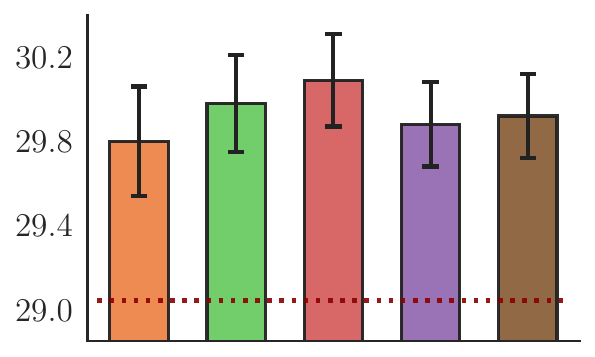}
         \captionsetup{font=small}
         \vspace{-6mm}
         \caption{\footnotesize{Multiple (HPSv2)}}
         \label{fig:multiple_kl}
     \end{subfigure}
     \vspace{-3mm}
     \captionsetup{font=small}
        \caption{ 
       \small
       Preference generating metrics with error bars showing one standard deviation, for the single and multiple prompt experiments under various KL coefficient $C$. 
       Horizontal line indicates the best baseline result from Fig.~\ref{fig:single_imagerew_Color} and Table~\ref{table:multiple_promtps_small}.
        }
        \label{fig:bar_varying_kl}
    \end{minipage}
    \vspace{-1.2em}
\end{figure*}


As discussed in \Secref{sec:intro}, we hypothesize that emphasizing the initial steps of the T2I generation trajectory can help the effectiveness and efficiency of preference alignment.
As a verification, Fig.~\ref{fig:single_prompt_generation_trajs} digs into the generated images of the prompt ``A green colored rabbit.'' in the single prompt experiment, by showing the generation trajectories corresponding to the images in Fig.~\ref{fig:single_prompt_generated_images}.
Specifically, we compare our method and the baselines on the images $\hat \vx_0$ predicted from the latents at the specified timesteps of the reverse chain.
More trajectory comparisions are in \cref{sec:more_traj_single_prompt}.

As shown in Fig.~\ref{fig:single_prompt_generation_trajs}, and in particular the steps circled out by the red rectangle therein, our method can generate identifiable shapes of a rabbit as early as at Steps $3$ and $5$, while the baselines are still largely unrecognizable, \eg, similar to a mouse.
At step $11$, our method is able to produce a relatively complete image to the given prompt, while the baselines are much cruder.
This comparison confirms that, with the incorporation of $\gamma < 1$, our method can match the given prompt earlier in the reverse chain, 
and thereby more steps later in the chain can be allocated to polish pictorial details and aesthetics, leading to  better/preferable final images.

\textbf{(b): } \textit{What will happen if we change the value of $\gamma$?}


To investigate the impact of temporal discount factor $\gamma$ on training T2I for preference alignment,
we consider more values of $\gamma$ between $\gamma=0.9$ used in our main results, and $\gamma=1$ in the classical approach of trajectory-level reward.
Fig.~\ref{fig:line_varying_gamma} plots the preference generating metrics over the training process, under $\gamma \in \{0.9, 0.95, 0.99, 1.0\}$, for the single prompt (``A green colored rabbit.'') and multiple prompt experiments.
We use the same evaluation protocols as in the main results.
For HPSv2, we plot the average over the test set.
Patterns on other single prompts are similar.

As shown in Fig.~\ref{fig:line_varying_gamma}, using a smaller temporal discount factor, such as $\gamma=0.9$ or $\gamma=0.95$, trains T2I faster and better, compared to larger $\gamma$ values, especially the classical DPO-style loss of $\gamma=1$.
Recall from \Secref{sec:intro} that a smaller $\gamma$ emphasizes more on the initial part of the reverse chain, while a sparse trajectory-level reward, equivalent to $\gamma=1$, can incur training instability.
In Fig.~\ref{fig:line_varying_gamma}, on both experiments, $\gamma=0.9$ or $\gamma=0.95$ generally leads to larger improvement at the beginning of the training process.
This validates our intuition and prior study that stressing the earlier steps of the reverse chain could improve the training efficiency of aligning T2I with preference.
From Fig.~\ref{fig:line_varying_gamma}, even using $\gamma=0.99$, a small break on the temporal
symmetry in the DPO-style losses, can improve training efficiency and stability over the classical setting of $\gamma=1$.
This further corroborates the efficacy of our dense reward perspective on T2I's alignment.
\cref{sec:smaller_gamma_not_better} further discusses the effect of $\gamma$ on training T2I.

\textbf{(c):} \textit{Is our method robust to the choice of KL coefficient $C$?}


To study the sensitivity of our method to the KL coefficient $C$ in our loss \eqref{eq:pk_nll_main}, we vary the value of $C$ from the values set in Sections~\ref{sec:exp_single}~and~\ref{sec:exp_multiple_prompts}.
Fig.~\ref{fig:bar_varying_kl} plots the scores of the preference generating metrics for experiments in the single prompt (``A green colored rabbit.'') and multiple prompts.
Other single prompts show similar patterns.
For HPSv2, we again plot the average over the test set, with Aesthetic and breakdown scores for each style in Table~\ref{table:multiple_varying_c} at \cref{sec:add_results}.

From Fig.~\ref{fig:bar_varying_kl}, we see that our method is generally robust across a range of KL coefficient $C$.
A small value of $C$ may be prone to overfitting while a large value may distract/slow the training process, both of which deteriorate the results.

\textbf{(d):} \textit{Are the images from our method preferred by humans?}

\begin{table}[tb]
\caption{
\small
Human evaluation on the multiple prompt experiment.
Shown are our ``win rate'' against the baselines specified in Fig.~\ref{fig:multiple_prompt_generated_images}, \ie, the percentage of times our method is preferred in binary comparisons.
Detailed description on the setup is in \cref{sec:human_eval_setup}.
} 
\label{table:multiple_prompt_human}
\centering 
\vspace{-.7em}
\begin{tabular}{cccc}
\toprule
  Opponent                   & SD1.5 & Dreamlike & Traj. Rew. \\ \midrule
Win Rate &   $76.8\%$    &      $68.3\%$      &     $65.1\%$               
\\ \bottomrule
\end{tabular}
\end{table}

To further verify our method, we collect human evaluations on the generated images in the multiple prompt experiment, where binary comparisons between two images from two models are conducted.
Table~\ref{table:multiple_prompt_human} shows the ``win rate'' of our method over each of the baselines in Fig.~\ref{fig:multiple_prompt_generated_images}.
Detailed setups of the human evaluation are provided in \cref{sec:human_eval_setup}.

The preference for our method over each baseline is evident in Table~\ref{table:multiple_prompt_human}.
Recall that the preference source, HPSv2 scorer, is trained on human preference data.
The gain of our method over raw SD1.5 verifies the efficacy of our method in aligning T2I with preference.
Further, images from our method are more often preferred over the corresponding images from the classical trajectory-level reward approach. 
This again validates our dense reward perspective that introduces temporal discounting into T2I's preference alignment.

\section{Conclusion}
To suit the explicit-reward-free preference-alignment loss to the sequential generation nature of T2I and improve on the classical trajectory-level reward assumption,
in this paper, we take on a dense reward perspective and introduce temporal discounting into the alignment objective, motivated by both an easier learning task in RL and the generation hierarchy of T2I reverse chain.
By experiments and further studies, we validate the efficacy of our method and reveal its key insight.
Future work may involve extending our method to noisy preference labels and applying it to broader applications, such as text-to-video or image-to-image generation.

\section*{Impact Statement}

Our paper contributes to the ongoing research on increasing helpfulness and decreasing harmfulness of generative models, by proposing a method that seeks to improve the efficacy and efficiency of aligning T2I with preference.
Of a special note, our method does not require training an explicit reward model, which can potentially save some compute and resources.
On the other hand, as prior preference alignment methods,
it is possible that our method will be misused to train malicious T2I by aligning with some unethical or ill-intended preference.
This potential negative impact may be alleviated by a more closer monitoring on the datasets and preference sources to which our method is applied.

\section*{Limitations}

As with classical off-policy RL and RLHF methods, our method’s iteration between model training and data collection incurs additional complexity and costs, compared to the pure offline approach of gathering data only once prior to policy training. On the other hand, it is known that off-policy methods can reduce the mismatch between learning policy’s generation distribution and the data distribution, and generally lead to more stable training and better results than pure offline methods.
Another limitation of our method is that our method requires storing the generation reverse chains.
Though this is feasible and straightforward given the
relatively-small scale of the preference alignment stage, our approach does raise extra CPU-memory and/or storage requirements, compared to only storing the final images and discarding all generation latents.
As an example, in our experiments with SD1.5, storing the generation latents requires about two times more CPU memory (\textit{not} GPU memory), calculated as $50 \times 4 \times 64^2 / (512^2 \times 3) \times (16 / 8) \approx 2.08$, where the last multiplier comes from the fact that our generation latents are stored in \texttt{bfloat16} format and the final images are in \texttt{uint8}.
This limitation may be further alleviated by using a more advanced diffusion/T2I sampler.

\section*{Acknowledgements}
The authors acknowledge the support of NSF-IIS 2212418, NIH-R37 CA271186, McCombs REG, and TACC. S. Yang acknowledges the support of the University of Texas Graduate Continuing Fellowship.




\bibliographystyle{icml2024}
\bibliography{ref}

\newpage
\appendix
\onecolumn

\begin{center}
\Large
\textbf{Appendix}
\end{center}

\setcounter{tocdepth}{1}
\tableofcontents
\newpage

\section{Tabular Results}
\label{sec:add_results}

\begin{table}[H]
\captionsetup{font=small}
\caption{
\small
HPSv2 and Aesthetic score for the multiple prompt experiment in \Secref{sec:exp_multiple_prompts}. 
The first four result columns are the four styles in the HPSv2 test set and ``Average'' is the overall average.
``Trajectory-level Reward'' is the classical DPO-style objective discussed in \Secref{sec:connect_with_dpo}, which assumes a latent trajectory-level reward function evaluating the entire T2I reverse chain as a whole.
All baselines benchmarked in the HPSv2 paper are directly cited from the official \href{https://github.com/tgxs002/HPSv2/tree/3ab15c150044de4c3f714493e9902c4ca3d44257}{Github Repository}.
Our produced results follow the testing principle in the HPSv2 paper and GitHub Repository.
We bold the best result in each metric.
Note that the HPSv2 paper and Github Repository do not report the Aesthetic score. 
} 
\label{table:multiple_promtps_all}
\centering 
\begin{tabular}{@{}lccccccc@{}}
\toprule
Model                   & Animation & Concept-art & Painting & Photo    & Average & Aesthetic     \\ \midrule
GLIDE                   & 23.34           & 23.08             & 23.27          & 24.50           & 23.55          & -             \\
LAFITE                  & 24.63           & 24.38             & 24.43          & 25.81          & 24.81          & -             \\
VQ-Diffusion            & 24.97           & 24.70              & 25.01          & 25.71          & 25.10           & -             \\
FuseDream               & 25.26           & 25.15             & 25.13          & 25.57          & 25.28          & -             \\
Latent Diffusion        & 25.73           & 25.15             & 25.25          & 26.97          & 25.78          & -             \\
DALL·E mini             & 26.10            & 25.56             & 25.56          & 26.12          & 25.83          & -             \\
VQGAN + CLIP            & 26.44           & 26.53             & 26.47          & 26.12          & 26.39          & -             \\
CogView2                & 26.50            & 26.59             & 26.33          & 26.44          & 26.47          & -             \\
Versatile Diffusion     & 26.59           & 26.28             & 26.43          & 27.05          & 26.59          & -             \\
DALL·E 2                & 27.34           & 26.54             & 26.68          & 27.24          & 26.95          & -             \\
Stable Diffusion v1.4   & 27.26           & 26.61             & 26.66          & 27.27          & 26.95          & -             \\
Stable Diffusion v1.5   & 27.43           & 26.71             & 26.73          & 27.62          & 27.12          & 5.62          \\
Stable Diffusion v2.0   & 27.48           & 26.89             & 26.86          & 27.46          & 27.17          & -             \\
Epic Diffusion          & 27.57           & 26.96             & 27.03          & 27.49          & 27.26          & -             \\
DeepFloyd-XL            & 27.64           & 26.83             & 26.86          & 27.75          & 27.27          & -             \\
Openjourney             & 27.85           & 27.18             & 27.25          & 27.53          & 27.45          & -             \\
MajicMix Realistic      & 27.88           & 27.19             & 27.22          & 27.64          & 27.48          & -             \\
ChilloutMix             & 27.92           & 27.29             & 27.32          & 27.61          & 27.54          & -             \\
Deliberate              & 28.13           & 27.46             & 27.45          & 27.62          & 27.67          & -             \\
SDXL Base 0.9           & 28.42           & 27.63             & 27.60           & 27.29          & 27.73          & -             \\
Realistic Vision        & 28.22           & 27.53             & 27.56          & 27.75          & 27.77          & -             \\
SDXL Refiner 0.9        & 28.45           & 27.66             & 27.67          & 27.46          & 27.80           & -             \\
Dreamlike Photoreal 2.0 & 28.24           & 27.60              & 27.59          & 27.99          & 27.86          & -             \\ \midrule
Trajectory-level Reward & 29.37           & 28.81             & 28.83          & 29.16          & 29.04          & 5.94          \\
\textbf{Ours}                    & \textbf{30.46}  & \textbf{29.95}    & \textbf{30.01} & \textbf{29.93} & \textbf{30.09} & \textbf{6.31} \\ \bottomrule
\end{tabular}
\end{table}

\begin{table}[H]
\captionsetup{font=small}
\caption{
\small
HPSv2 and Aesthetic score for the ablation study on KL coefficient $C$ in \Secref{sec:exp_abla} \textbf{(c)}.
Shown here are breakdown scores of our main method ($\gamma=0.9$) in the \textit{multiple} prompt experiment under various value of $C$, together with
the best baseline in Table~\ref{table:multiple_promtps_all} of \cref{sec:add_results}.
The first four result columns are the four styles in HPSv2 test set and ``Average'' is the overall average.
Within subscript is one standard deviation, as plotted in Fig.~\ref{fig:multiple_kl}, calculated by the principle described in the HPSv2 paper and GitHub Repository.
} 
\label{table:multiple_varying_c}
\centering 
\begin{tabular}{@{}lcccccc@{}}
\toprule
\multirow{2}{*}{Model} & \multicolumn{5}{c}{HPSv2}                             &           \\ \cmidrule(lr){2-6}
                       & Animation & Concept-art & Painting & Photo & Averaged & Aesthetic \\ \midrule
Baseline               & 29.37     & 28.81       & 28.83    & 29.16 & 29.04    & 5.94      \\
$C=7.5$                  & 30.16     & 29.59       & 29.64    & 29.76 & 29.79 {\scriptsize (0.26) }    & 6.24      \\
$C=10.0$                 & 30.36     & 29.87       & 29.91    & 29.80  & 29.99 {\scriptsize (0.23) }    & 6.29      \\
$C=12.5$                 & 30.46     & 29.95       & 30.01    & 29.93 & 30.09 {\scriptsize (0.22) }    & 6.31      \\
$C=15.0$                 & 30.30     & 29.72       & 29.74    & 29.77 & 29.88 {\scriptsize (0.20) }    & 6.23      \\
$C=20.0$                 & 30.28     & 29.73       & 29.76    & 29.95 & 29.93 {\scriptsize (0.20) }    & 6.17      \\ \bottomrule
\end{tabular}
\end{table}

\section{Detailed Method Derivation and Proofs}\label{sec:detailed_method_derive_proof}

In this section, we provide a detailed step-by-step derivation of our method. 
For completeness and better readability, some materials in \cref{sec:method} will be restated.

\subsection{Notation and Assumptions}

This section restates the notations and assumptions in \cref{sec:notation} for convenience.

\AssumpDenseRew*

We adopt the notations in prior works \citep[\eg,][]{fan2023dpok,ddpo2023} to formulate the diffusion reverse  process under the conditional generation setting as an Markov decision process (MDP), specified by $\gM = \br{\sS, \sA, \gP, r, \gamma, \rho}$. Specifically, let $\pi_\theta$ be the T2I with trainable parameters $\theta$, \ie, the policy network; $\cbr{\vx_t}_{t=T}^0$ be the  diffusion reverse chain of length $T$; and $\vc$ be the conditioning variable, \ie, the text conditional in our setting. We have, $\forall \, t$,

\begin{equation*}
    \begin{array}{ccc}
      s_t \triangleq (\vx_t, t, \vc) \,, & a_t \triangleq \vx_{t-1}  \,,  &  \pi_\theta(a_t \given s_t) \triangleq p_\theta(\vx_{t-1} \given \vx_t, t, \vc) \,, \\
       \gP(s_{t+1}\given s_t, a_t) \triangleq \delta(\vx_{t-1}, t-1, \vc) \,, & \rho(s_0) \triangleq (\gN(\vzero, \mI), \delta(T), \delta(\vc)) \,, & r(s_t, a_t)\,,\; \gamma \in [0,1]\,,
    \end{array}
\end{equation*}
where $\delta(\cdot)$ is the delta measure
and $\gP(s_{t+1}\given s_t, a_t)$ is a deterministic transition.
We denote the reverse chain generated by a (generic) T2I under the text conditional $\vc$ as a trajectory $\tau$, \ie, $ \tau \triangleq (s_0, a_0, s_1, a_1, \ldots, s_T) \iff (\vx_T, \vx_{T-1}, \ldots, \vx_0) \given \vc $.
Note that for notation simplicity, $\vc$ is absorbed into the state part of trajectory $\tau$.

Similar to \citet{wallace2023diffusion}, in the method derivation, we consider the setting where we are given two  diffusion reverse chains (trajectories) with equal length $T$. For presentation simplicity, assume that $\tau^1$ is the better trajectory, \ie, $\tau^1 \succ \tau^2$. Let tuple $\ord\triangleq (1,2)$ and $\sigma(\cdot)$ denotes the sigmoid function, \ie, $\sigma(x) = \frac{1}{1+\exp\br{-x}}$.

Since in practice the state space of the T2I reverse chain is the continuous embedding space, it is self-evident to assume that any two trajectories do not cross with each other, as follows.

\begin{assumption}[No Crossing Trajectories] 
     $\forall\,\tau^i \ne \tau^j, s^i_t \ne s^j_t,\, \forall\, t \in \cbr{0,\ldots, T}$.
\end{assumption}

Furthermore, as in the standard RL setting \citep{rlintro2018,jointmatching2022}, the reward function $r$ in $\gM$ needs to be bounded. Without loss of generality, we assume $r(s,a) \in [0,1]$, and thus $r(s,a)$ may be interpreted as the probability of satisfying the preference when taking action $a$ at state $s$.

\AssumpBddRew*

In RL problems, the performance of a (generic) policy $\pi$ is typically evaluated by the expected cumulative discounted rewards \citep{rlintro2018}, which is defined as,  
\begin{equation}\label{eq:rl_objective}
    \eta(\pi) \triangleq \E\sbr{\sum_{t=0}^T \gamma^t\, r(s_t, a_t) \given s_0\sim \rho(\cdot), a_t \sim \pi(\cdot \given s_t), s_{t+1} \sim \gP(\cdot \given s_t, a_t), \forall\, t\geq 0}\,.
\end{equation}
Note that \eqref{eq:rl_objective} above is an extended version of \eqref{eq:rl_objective_main} in \cref{sec:notation}.

\AssumpEvalETau*

\cref{remark:rationality_e_tau} in \cref{sec:notation} provides a discussion on the practical rationality of $e(\tau)$ in T2I's preference alignment.

\subsection{Step-by-step Derivation of Our Method}\label{sec:step_by_step_derivation}

\subsubsection{Expression of $e(\tau)$} \label{sec:exp_e_tau}

We can express $\eta(\pi)$ in \eqref{eq:rl_objective} by the (discounted) stationary distribution $d_\pi(s)$ of the policy $\pi$, defined as $d_\pi(s) \propto \sum_{t=0}^T \gamma^t \Pr(s_t = s \given \pi, \gP)$, up to a (positive) normalizing constant \citep{sdmgan2022,wmbrl2022}.
We have
\begin{equation*}
    \begin{split}
\eta(\pi) &= \E_{a_t \sim \pi, s_{t+1}\sim \gP}\sbr{\sum_{t=0}^T \gamma^t\, r(s_t, a_t)} = \sum_{t=0}^T \int_\sS \Pr(s_t = s \given \pi, \gP) \int_{\sA} \pi(a\given s)\, \gamma^t \, r(s,a) \intd a \intd s  \\
&=  \int_\sS \sum_{t=0}^T \gamma^t \Pr(s_t = s \given \pi, \gP) \int_{\sA} \pi(a\given s) \, r(s,a) \intd a \intd s \\ 
&\propto  \int_\sS d_\pi(s) \int_{\sA} \pi(a\given s) \, r(s,a) \intd a \intd s 
=  \E_{s\sim d_\pi(s)}\E_{a\sim \pi(a\given s)}\sbr{r(s,a)} \,.
    \end{split}
\end{equation*}

The goal of RL is to maximize the expected cumulative discounted rewards $\eta(\pi)$,  which is unfortunately difficult due to the complicate relationship between $d_\pi(s)$ and $\pi$. 
We therefore optimize an off-policy approximation of $\eta(\pi)$ by employing an approximation approach common in prior RL works \citep[\eg,][]{kakade2002approximately,peters2010relative,trpo2015,abdolmaleki2018maximum,awr2019}. 
Specifically, we change $d_\pi(s)$ to $d_{\pi_O}(s)$ for some ``old'' policy $\pi_O$, from which we generate the off-policy trajectories/data. 
We further add a KL regularization on $\pi$ towards the initial pre-trained model $\pi_I$ to avoid generating unnatural images. 
In sum, we arrive at the following constrained policy search problem
\begin{equation}\label{eq:optim_strict}
    \begin{split}
        \arg\max_\pi \quad & \E_{s\sim d_{\pi_O}(s)}\E_{a\sim \pi(a\given s)}\sbr{r(s,a)} \\
        \mathrm{s.t.} \quad 
        & \KL\br{\pi(\cdot \given s) \,\|\, \pi_I(\cdot \given s)} \leq \epsilon \,,\quad \forall \, s\in \sS \\
        & \int_\sA \pi(a\given s) \intd a = 1 ,\quad \forall \, s\in \sS,
    \end{split}
\end{equation}
where $\pi_O$ may be chosen as $\pi_I$ or some saved policy checkpoint not far away from $\pi_I$.

Enforcing the pointwise KL-regularization in \eqref{eq:optim_strict} is difficult, as in AWR \citep{awr2019}, we change the pointwise KL-regularization into enforcing the regularization only in expectation $\E_{s\sim d_{\pi_O}}\sbr{\cdots}$  and change \eqref{eq:optim_strict} into a regularized maximization problem  
\begin{equation}\label{eq:optim_reg}
    \begin{split}
        \arg\max_\pi \quad & \E_{s\sim d_{\pi_O}(s)}\E_{a\sim \pi(a\given s)}\sbr{r(s,a)} - C \cdot \E_{s\sim d_{\pi_O}(s)}\sbr{\KL\br{\pi(\cdot \given s) \,\|\, \pi_I(\cdot \given s)}}  \\
        \mathrm{s.t.} \quad
        & \int_\sA \pi(a\given s) \intd a = 1 ,\quad \forall \, s\in \sS.
    \end{split}
\end{equation}
The Lagrange form of the maximization problem \eqref{eq:optim_reg} is
\begin{equation}
    \begin{split}
          \gL(\pi) \triangleq \E_{s\sim d_{\pi_O}(s)}\E_{a\sim \pi(a\given s)}\sbr{r(s,a)} -  C \cdot \E_{s\sim d_{\pi_O}(s)}\sbr{\KL\br{\pi(\cdot \given s) \,\|\, \pi_I(\cdot \given s)}}  + \int_\sS \alpha_s \br{1-\int_\sA \pi(a\given s) \intd a} \intd s \,
           .
    \end{split}
\end{equation}
$\forall\, s\in\sS, a \in \sA$, the optimal policy under $\gL(\pi)$ can be obtained by setting the derivatives \wrt\mbox{} $\pi(a\given s)$ equal to $0$. 
We have
\begin{equation}\label{eq:formula_r}
    \begin{split}
         \frac{\partial\, \gL(\pi)}{\partial\, \pi(a\given s)} &= d_{\pi_O}(s)r(s,a) - C d_{\pi_O}(s)\log\pi(a\given s)- C d_{\pi_O}(s)  +C d_{\pi_O}(s)\log \pi_I(a\given s) - \alpha_s = 0 \\
        \implies\;\,  r(s,a) 
        &= C\log\frac{\pi^*(a\given s)}{\pi_I(a\given s)} +C + \frac{ \alpha_s}{d_{\pi_O}(s)}
    \end{split}
\end{equation}
where $\pi^*$ is the optimal policy under $r$.

From \eqref{eq:formula_r}, we can also get the formula for the optimal policy $\pi^*$ as 
\begin{equation}\label{eq:optimal_policy}
    \begin{split}
        \pi^*(a\given s) &= \exp\br{\frac{1}{C} r(s,a)} \pi_I(a\given s) \exp\br{-1-\frac{\alpha_s}{C  d_{\pi_O}(s)}}  \triangleq \exp\br{\frac{1}{C} r(s,a)} \pi_I(a\given s) \frac{1}{Z(s)},
    \end{split}
\end{equation}
where $Z(s)$ denotes the partition function, taking the form
\begin{equation*}
    Z(s) = \int_\sA \exp\br{\frac{1}{C} r(s,a)}  \pi_I(a\given s)  \intd a.
\end{equation*}

For a \textit{given} trajectory $\tau = (s_0, a_0, s_1, a_1, \ldots, s_T)$, the quality evaluation $e(\tau)$ can be expressed by $\pi^*$ as 
\begin{equation}
    \begin{split}
        e(\tau) &\triangleq  \sum_{t=0}^T \gamma^t\, r(s_t,a_t) 
        = C\sum_{t=0}^T \gamma^t \log\frac{\pi^*(a_t\given s_t)}{\pi_I(a_t\given s_t)} + C \sum_{t=0}^T\gamma^t \br{1+\frac{ \alpha_{s_t}}{C d_{\pi_O}(s_t)}} \\
        &= C\sum_{t=0}^T \gamma^t \log\frac{\pi^*(a_t\given s_t)}{\pi_I(a_t\given s_t)} + C \sum_{t=0}^T\gamma^t \log Z(s_t).
    \end{split}
\end{equation}
Since the trajectory $\tau$ and hence all $s_t$'s are given, $Z(s_t)$'s are constant and the summation over $\log Z(s_t)$ is a ``property'' of the trajectory $\tau$, we thus denote $\log Z(\tau) \triangleq \sum_{t=0}^T\gamma^t \log Z(s_t)$ for notation simplicity. Then the formula for $e(\tau)$ becomes 
\begin{equation}\label{eq:e_tau_with_log_z}
    e(\tau) = C\sum_{t=0}^T \sbr{ \gamma^t \log\frac{\pi^*(a_t\given s_t)}{\pi_I(a_t\given s_t)} } + C \log Z(\tau) \,.
\end{equation}

\subsubsection{Loss Function for T2I/Policy Training}\label{sec:mle_obj_derive}
Recall that we are given two  diffusion reverse chains (trajectories) $\cbr{\tau^1, \tau^2}$ with equal length $T$. 
Also recall the notation that $\tau^1$ is the better trajectory, \ie, $\tau^1 \succ \tau^2$, the tuple $\ord\triangleq (1,2)$ and $\sigma(\cdot)$ denotes the sigmoid function. 
Under the Bradley-Terry model of pairwise preference, the probability of $\ord$ under $\cbr{e\br{\tau^k}}_{k=1}^2$ and hence $\pi^*$ is
\begin{equation}\label{eq:pl_original}
    \Pr\br{\ord \given \pi^*, \cbr{e\br{\tau^k}}_{k=1}^2} = \frac{\exp\br{e\br{\tau^1}}}{\exp\br{e\br{\tau^1}} + \exp\br{e\br{\tau^2}}} = \sigma\br{e\br{\tau^1} - e\br{\tau^2}}\,,
\end{equation}
where we explicitly put $\pi^*$ into the conditioning variables for better readability.

From \eqref{eq:e_tau_with_log_z}, $e\br{\tau^1}$ and $e\br{\tau^2}$ respectively contains the ``partition functions'' $Z(\tau^1)$ and $Z(\tau^2)$, both of which are intractable.
We argue that $Z(\tau^1)\geq Z(\tau^2)$, which will be critical for providing a tractable lower bound of \eqref{eq:pl_original} that cancels out these partition functions. 
Our argument is based on the reward-shaping technique \citep{rewardshaping1999}, as follows.

\DefRewShape*

\LemmaInvOptPol*

\begin{remark}
    The only difference between the MDPs $\gM$ and $\gM'$ is the reward function ($r$ \textit{v.s.} $r'$). 
    In particular, they share the same state and action space.
    Therefore, it make sense to consider the invariance of the optimal policy in these two MDPs.
    Invariance means that, in these two MDPs, at each state, the optimal policies take the same action with the same probability.
\end{remark}

\begin{proof}[Proof of \cref{lemma:pi_star_inv}]
Denote the optimal policy under the MDP $\gM'$ as $\pi^{*'}$, we have
\begin{equation}\label{eq:proof_inv_opt_pol}
    \begin{split}
        \pi^{*'}(a\given s) &= \frac{\exp\br{\frac{1}{C} \br{r(s,a) + \Phi(s)}} \pi_I(a\given s) }{\int_\sA \exp\br{\frac{1}{C} \br{r(s,a) + \Phi(s)}}\pi_I(a\given s)  \intd a} =  \frac{\exp\br{\frac{1}{C}\Phi(s)}\exp\br{\frac{1}{ C} r(s,a)} \pi_I(a\given s) }{\exp\br{\frac{1}{C}\Phi(s)} \int_\sA \exp\br{\frac{1}{C} r(s,a) } \pi_I(a\given s)  \intd a}\\
        &= \pi^*(a \given s),
    \end{split}
\end{equation}    
since $\exp\br{\frac{1}{C}\Phi(s)}$ is independent of the integration dummy-variable $a$ in the denominator.
\end{proof}

\DefEquivClassR*
\RemarkInvEquivClass*

With the reshaping technique, we can justify our previous argument that $Z(\tau^1)\geq Z(\tau^2)$ as follows. 

\ThmOrderPartitionAppendix*

We defer the proof of Theorem~\ref{theorem:order_partition} to \Secref{sec:proofs}.

\RemarkValueC*

\RemarkCallRprimtAsR*

\label{sec:loss_function_derivation}
With Theorem~\ref{theorem:order_partition}, we can lower bound $\Pr\br{\ord \given \pi^*, \cbr{e\br{\tau^k}}_{k=1}^2}$ in \eqref{eq:pl_original} by a simpler formula.
After plugging the expression of $e(\tau)$ \wrt\mbox{} the optimal policy $\pi^*$ in \eqref{eq:e_tau_with_log_z}, we have,
\begin{equation}\label{eq:pl_lower_bound}
    \begin{split}
        \Pr\br{\ord \given \pi^*, \cbr{e\br{\tau^k}}_{k=1}^2} &=  \frac{\exp\br{C\sum_{t=0}^T\gamma^t \log \frac{\pi^*\br{a^1_t \given s^1_t}}{\pi_I\br{a^1_t \given s^1_t}}}Z\br{\tau^1}^{C}}{\sum_{i=1}^2 \exp\br{C\sum_{t=0}^T\gamma^t \log \frac{\pi^*\br{a^i_t \given s^i_t}}{\pi_I\br{a^i_t \given s^i_t}}}Z\br{\tau^{\textcolor{burgundy}{i}}}^{C}} \\
        &\geq  \frac{\exp\br{C\sum_{t=0}^T\gamma^t \log \frac{\pi^*\br{a^1_t \given s^1_t}}{\pi_I\br{a^1_t \given s^1_t}}}Z\br{\tau^1}^{C}}{\sum_{i=1}^2 \exp\br{C\sum_{t=0}^T\gamma^t \log \frac{\pi^*\br{a^i_t \given s^i_t}}{\pi_I\br{a^i_t \given s^i_t}}}Z\br{\tau^{\textcolor{burgundy}{1}}}^{C}} \\
        &=  \frac{\exp\br{C\sum_{t=0}^T\gamma^t \log \frac{\pi^*\br{a^1_t \given s^1_t}}{\pi_I\br{a^1_t \given s^1_t}}}}{\sum_{i=1}^2 \exp\br{C\sum_{t=0}^T\gamma^t \log \frac{\pi^*\br{a^i_t \given s^i_t}}{\pi_I\br{a^i_t \given s^i_t}}}} \;.
    \end{split}
\end{equation}
By our definition on the quality evaluation $e(\tau)$, a better trajectory $\tau$ comes with a higher $e(\tau)$.
Hence $\exp\br{e\br{\tau^1}} / \br{\sum_{i=1}^2 \exp\br{e\br{\tau^i}}} \geq \exp\br{e\br{\tau^2}} / \br{\sum_{i=1}^2 \exp\br{e\br{\tau^i}}}$.
In other words, among $\cbr{\tau^1, \tau^2}$, $\tau^1$ should have the highest chance of being ranked top under the Bradley-Terry preference model \eqref{eq:pl_original} induced by the true reward $r(s,a)$.
Thus we conclude that $\Pr\br{\ord \given \pi^*, \cbr{e\br{\tau^k}}_{k=1}^2} = \max \Pr\br{\cdot \given \pi^*, \cbr{e\br{\tau^k}}_{k=1}^2}$, \ie, in the MDP $\gM$ (or $\gM'$) with the addition of $(\pi_I, C)$ and conditioning on $\pi^*$, $\ord$ should be the most probable ordering under the Bradley-Terry model \eqref{eq:pl_original}.
Thus, in order to approximate $\pi^*$, our parametrized policy $\pi_\theta$ ought to maximize the likelihood of $\ord$ under the \textit{corresponding Bradley-Terry model}  constructed by substituting $\pi^*$ with $\pi_\theta$.
Based on this intuition, we train $\pi_\theta$ by maximizing the lower bound of the Bradley-Terry likelihood of $\ord$ in \eqref{eq:pl_lower_bound}, which leads to the negative-log-likelihood objective for an minimization problem for training $\pi_\theta$ as
\begin{equation}\label{eq:pk_nll}
\begin{aligned}
    \gL_\gamma\br{\theta \given \ord, \cbr{e\br{\tau^k}}_{k=1}^2} 
    =& -\log\sigma\br{ C\sum_{t=0}^T\gamma^t  \sbr{\log\frac{\pi_\theta\br{a^1_t \given s^1_t}}{\pi_I\br{a^1_t \given s^1_t}} - \log\frac{\pi_\theta\br{a^2_t \given s^2_t}}{\pi_I\br{a^2_t \given s^2_t}}}} \\
    =& -\log \sigma  \br{ \textcolor{burgundy}{C \times \frac{1 - \gamma^{T+1}}{1-\gamma}} \E_{t \sim \mathrm{Cat}\br{\{\gamma^t\}}} \sbr{\log\frac{\pi_\theta(a^1_t \mid s^1_t)}{\pi_I(a^1_t \mid s^1_t)} - \log\frac{\pi_\theta(a^2_t \mid s^2_t)}{\pi_I(a^2_t \mid s^2_t)}} }\\
    =& -\log \sigma  \br{\textcolor{burgundy}{ \bigg(C \times \frac{1 - \gamma^{T+1}}{1-\gamma}\bigg)} \E_{t \sim \mathrm{Cat}\br{\{\gamma^t\}}} \sbr{\log\frac{\pi_\theta(a^1_t \mid s^1_t)}{\pi_I(a^1_t \mid s^1_t)} - \log\frac{\pi_\theta(a^2_t \mid s^2_t)}{\pi_I(a^2_t \mid s^2_t)}} } \\
    =& -\log \sigma  \br{ \textcolor{burgundy}{C}\, \E_{t \sim \mathrm{Cat}\br{\{\gamma^t\}}} \sbr{\log\frac{\pi_\theta(a^1_t \mid s^1_t)}{\pi_I(a^1_t \mid s^1_t)} - \log\frac{\pi_\theta(a^2_t \mid s^2_t)}{\pi_I(a^2_t \mid s^2_t)}} } \\
    \text{with} \quad \textcolor{burgundy}{C} \leftarrow & \; C \times \frac{1 - \gamma^{T+1}}{1-\gamma} \;,
\end{aligned}    
\end{equation}
where $\mathrm{Cat}\br{\cbr{\gamma^t}}$ denotes the categorical distribution on $\cbr{0,\ldots,T}$ with the probability vector $\{\gamma^t/ \sum_{t'} \gamma^{t'}\}_{t=0}^T$; and $C$ is overloaded to absorb the normalization constant, which is legitimated given that $C$ itself is a hyperparameter and so does $C$ times the normalization constant.

\subsection{Proofs}\label{sec:proofs}

\subsubsection{Derivation of the Gradient in \eqref{eq:derivative_l_gamma}} \label{sec:derive_loss_gradient}

Here we derive the gradient of $\gL_\gamma\br{\theta \given \ord, \cbr{e\br{\tau^k}}_{k=1}^2}$ in \eqref{eq:pk_nll} with respect to $\theta$, which is presented in \Secref{sec:interpretation}.

Since \eqref{eq:pk_nll} is an objective for a minimization problem, the gradient update direction is
$
-\nabla_\theta \gL_\gamma\br{\theta \given \ord, \cbr{e\br{\tau^k}}_{k=1}^2} = \nabla_\theta \br{-\gL_\gamma\br{\theta \given \ord, \cbr{e\br{\tau^k}}_{k=1}^2}}
$.
The gradient can be derived by chain rule as follows.
For notation simplicity, we denote $\widetilde e(\tau^k) \triangleq  C\sum_{t=0}^T\gamma^t \log \frac{\pi_\theta\br{a^k_t \given s^k_t}}{\pi_I\br{a^k_t \given s^k_t}}$.
We have
\begin{equation*}
    \begin{split}
        -\gL_\gamma\br{\theta \given \ord, \cbr{e\br{\tau^k}}_{k=1}^2} &= -\log\br{1+\exp\br{\widetilde e\br{\tau^2}-\widetilde e\br{\tau^1}}} 
         \end{split}
\end{equation*}
\begin{equation*}
    \begin{split}
        \frac{\partial \br{-\gL_\gamma\br{\theta \given \ord, \cbr{e\br{\tau^k}}_{k=1}^2}}}{\partial \br{\widetilde e\br{\tau^2}-\widetilde e\br{\tau^1}}} &= -\frac{\exp\br{\widetilde e\br{\tau^2}}}{\exp\br{\widetilde e\br{\tau^1}} + \exp\br{\widetilde e\br{\tau^2}}} 
            \end{split}
\end{equation*}
\begin{equation*}
    \begin{split}
        \forall\, k = 1,2,\quad \frac{\partial\, \widetilde e\br{\tau^k}}{\partial\, \theta } &= C\sum_{t=0}^T\gamma^t \nabla_\theta \log \frac{\pi_\theta\br{a^k_t \given s^k_t}}{\pi_I\br{a^k_t \given s^k_t}} = C\sum_{t=0}^T\gamma^t   \frac{\pi_I\br{a^k_t \given s^k_t}}{\pi_\theta\br{a^k_t \given s^k_t}} \nabla_\theta \pi_\theta\br{a^k_t \given s^k_t}\\
        &= C\sum_{t=0}^T\gamma^t   \pi_I\br{a^k_t \given s^k_t} \nabla_\theta \log \pi_\theta\br{a^k_t \given s^k_t}     
        \end{split}
\end{equation*}
\begin{equation*}
\begin{aligned}
        \frac{\partial \br{-\gL_\gamma\br{\theta \given \ord, \cbr{e\br{\tau^k}}_{k=1}^2}}}{\partial\, \theta} &= \frac{\partial \br{-\gL_\gamma\br{\theta \given \ord, \cbr{e\br{\tau^k}}_{k=1}^2}}}{\partial \br{\widetilde e\br{\tau^2}-\widetilde e\br{\tau^1}}} \br{\frac{\partial\, \widetilde e\br{\tau^2}}{\partial\, \theta } - \frac{\partial\, \widetilde e\br{\tau^1}}{\partial \,\theta }} \\
        & = -\frac{\exp\br{\widetilde e\br{\tau^2}}}{\exp\br{\widetilde e\br{\tau^1}} + \exp\br{\widetilde e\br{\tau^2}}} \br{\frac{\partial \,\widetilde e\br{\tau^2}}{\partial\, \theta } - \frac{\partial\, \widetilde e\br{\tau^1}}{\partial\, \theta }}\\
        & = \frac{\exp\br{\widetilde e\br{\tau^2}}}{\exp\br{\widetilde e\br{\tau^1}} + \exp\br{\widetilde e\br{\tau^2}}} \br{\frac{\partial \,\widetilde e\br{\tau^1}}{\partial\, \theta } - \frac{\partial\, \widetilde e\br{\tau^2}}{\partial \,\theta }} \\
        &= \frac{\exp\br{\widetilde e\br{\tau^2} - \widetilde e\br{\tau^1}}}{1+\exp\br{\widetilde e\br{\tau^2} - \widetilde e\br{\tau^1}}} \times C \times \sum_{t=0}^T\gamma^t     \bigg(
        \pi_I\br{a^1_t \given s^1_t} \nabla_\theta \log \pi_\theta\br{a^1_t \given s^1_t}  \\
        &\qquad\qquad\qquad\qquad\qquad\qquad\qquad\qquad\qquad\quad  - \pi_I\br{a^2_t \given s^2_t} \nabla_\theta \log \pi_\theta\br{a^2_t \given s^2_t}
        \bigg).
    \end{aligned}    
\end{equation*}

\subsubsection{Proof of Theorem~\ref{theorem:order_partition}}   \label{sec:proof_order_partitions}

As a reminder, in \cref{theorem:order_partition} we consider a more general case where we are given a finite number $K$ of trajectories whose preference ordering is assume to be $\tau^1 \succ \tau^2 \succ \cdots \succ \tau^K$.
Each trajectory $\tau^k$ takes the form $\tau^k = (s^k_0, a^k_0, s^k_1, a^k_1, \ldots, s^k_T)$.

\myparagraph{A Simplified Case without Reward Shaping.}\label{sec:proof_simplified}

To gain some intuitions, we first present a simplified setting where the distribution $\pi_I$ is deterministic on the given samples, \ie, $\pi_I\br{a^i_t\given s^i_t} = \delta\br{a^i_t \given s^i_t}$.
In this scenario, Theorem~\ref{theorem:order_partition} can be proved without using the reward-shaping argument.

We now state and proof this special case of Theorem~\ref{theorem:order_partition}.

\begin{theorem}[A special case of Theorem~\ref{theorem:order_partition}]
If the sampling distribution $\pi_I\br{a^i_t\given s^i_t} = \delta\br{a^i_t \given s^i_t}$, then
     the original reward function $r(s,a)$ satisfies $Z(\tau^1)\geq Z(\tau^2)\geq \cdots \geq Z(\tau^K)$\,.
\end{theorem}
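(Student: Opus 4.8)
The plan is to exploit the deterministic-sampling hypothesis to collapse each per-state partition function into a single exponential, thereby reducing $\log Z(\tau^i)$ to a positive scalar multiple of the trajectory quality $e(\tau^i)$; the claimed ordering then follows immediately from the preference ordering together with monotonicity, so no reward-shaping machinery is needed here.

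First I would evaluate $Z(s^i_t)$ under the assumption $\pi_I(a \given s^i_t) = \delta(a \given s^i_t)$. Since the delta measure concentrates all its mass at $a = a^i_t$, the defining integral $Z(s^i_t) = \int_\sA \exp\br{r(s^i_t, a)/C}\,\pi_I(a \given s^i_t)\intd a$ collapses to $Z(s^i_t) = \exp\br{r(s^i_t, a^i_t)/C}$, and hence $\log Z(s^i_t) = r(s^i_t, a^i_t)/C$. Next I would substitute this into the definition $\log Z(\tau^i) \triangleq \sum_{t=0}^T \gamma^t \log Z(s^i_t)$ to obtain $\log Z(\tau^i) = \tfrac{1}{C}\sum_{t=0}^T \gamma^t\, r(s^i_t, a^i_t) = \tfrac{1}{C}\, e(\tau^i)$, the last equality being exactly the definition of the quality evaluation $e(\tau^i)$. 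Because $C > 0$, the map $e \mapsto \log Z$ is strictly increasing, so the ordering $\tau^1 \succ \tau^2 \succ \cdots \succ \tau^K$, which by our convention means $e(\tau^1) \geq e(\tau^2) \geq \cdots \geq e(\tau^K)$, transfers directly to $\log Z(\tau^1) \geq \cdots \geq \log Z(\tau^K)$. Applying the (monotone) exponential then yields $Z(\tau^1) \geq Z(\tau^2) \geq \cdots \geq Z(\tau^K)$, as desired.

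For this special case there is essentially no obstacle: the whole argument is a one-line computation once the delta measure eliminates the integral, and it does not even require the boundedness of $r$ from Assumption~\ref{assump:bounded_rew}. The only point worth flagging is \emph{why} the shortcut fails in the general Theorem~\ref{theorem:order_partition}: when $\pi_I$ is non-deterministic, $Z(s^i_t)$ is a genuine average of $\exp\br{r(s^i_t,a)/C}$ over \emph{all} actions rather than a single evaluation at $a^i_t$, so $\log Z(\tau^i)$ is no longer a clean multiple of $e(\tau^i)$ and the ordering does not follow for free. That gap is precisely what the reward-shaping construction (and the condition on $C$ in Remark~\ref{remark:value_C}) is introduced to bridge in the full proof.
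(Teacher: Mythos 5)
Your proof is correct and follows essentially the same route as the paper's: the delta measure collapses $Z(s^i_t)$ to $\exp\br{r(s^i_t,a^i_t)/C}$, so $\log Z(\tau^i) = e(\tau^i)/C$ and the ordering follows from the preference ordering. Your packaging is marginally cleaner (the identity $\log Z(\tau) = e(\tau)/C$ makes the full chain immediate, whereas the paper argues pairwise and then checks transitivity), but the mathematical content is identical, including your correct observations that Assumption~\ref{assump:bounded_rew} is not needed here and that non-deterministic $\pi_I$ is exactly what forces the reward-shaping argument in the general case.
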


\begin{proof}
    Our target is $\forall\, k \in \cbr{1,\ldots, K}, i \in \cbr{k,\ldots, K}$,
    \begin{equation*}
        \begin{split}
            Z(\tau^k) \geq Z(\tau^i) &\iff \log Z(\tau^k)\geq \log Z(\tau^i) \iff \sum_{t=0}^T \gamma^t \log Z\br{s^k_t} \geq \sum_{t=0}^T \gamma^t \log Z\br{s^i_t} \\
            & \iff \sum_{t=0}^T \gamma^t \br{\log Z\br{s^k_t} -\log Z\br{s^i_t}} \geq 0.
        \end{split}
    \end{equation*}

In the special case of $\pi_I\br{a^i_t\given s^i_t} = \delta\br{a^i_t \given s^i_t}$, with the \textit{original} reward function $r(s,a)$, we have
\begin{equation*}
    \begin{split}
        Z\br{s^i_t} = \int_\sA \exp\br{\frac{1}{C} r\br{s^i_t, a}} \pi_I\br{a\given s^i_t} \intd a &= \exp\br{\frac{1}{C} r\br{s^i_t, a^i_t}} 
        \implies \log Z\br{s^i_t} = \frac{1}{C} r\br{s^i_t, a^i_t} \\
       \implies \log Z\br{s^k_t} - \log Z\br{s^i_t}  &= \frac{1}{C} \br{r\br{s^k_t, a^k_t} - r\br{s^i_t, a^i_t}} \\
       \implies \sum_{t=0}^T \gamma^t \br{\log Z\br{s^k_t} - \log Z\br{s^i_t}}  &= \frac{1}{C} \sum_{t=0}^T \gamma^t \br{r\br{s^k_t, a^k_t} - r\br{s^i_t, a^i_t}} \,.
    \end{split}
\end{equation*}
Since $\tau^k \succ \tau^i \iff e\br{\tau^k} > e\br{\tau^i}$, plugging in the definition of $e\br{\tau}$, we get,
\begin{equation*}
    \begin{split}
      e\br{\tau^k} > e\br{\tau^i} \iff   \sum_{t=0}^T \gamma^t r(s^k_t,a^k_t) \geq \sum_{t=0}^T \gamma^t r(s^i_t,a^i_t) &\iff \sum_{t=0}^T \gamma^t \br{r(s^k_t,a^k_t)  - r(s^i_t,a^i_t)} \geq 0 \iff \\
         \sum_{t=0}^T \gamma^t \log Z\br{s^k_t} \geq \sum_{t=0}^T \gamma^t \log Z\br{s^i_t} &\iff \log Z\br{\tau^k} \geq \log Z\br{\tau^i} \iff Z\br{\tau^k} \geq  Z\br{\tau^i} \,.
    \end{split}
\end{equation*}
Hence the original reward function $r(s,a)$, \textit{without shaping}, satisfies the ordering $Z\br{\tau^k} \geq  Z\br{\tau^i}$.
Notice that all the above steps are ``$\iff$'' and recall our assumption that $\tau^1 \succ \tau^2 \succ \cdots \succ \tau^K \implies \tau^k \succ \tau^i \iff k \leq i$.
It is clear that such an ordering is transitive, in a sense that, if $\tau^k \succ \tau^i \succ \tau^j$, then
\begin{equation*}
\left.\begin{aligned}
    \tau^k \succ \tau^i \implies k\leq i \\
    \tau^i \succ \tau^j \implies i\leq j \\
\end{aligned}\right\} 
\implies k \leq j \implies Z\br{\tau^k} \geq  Z\br{\tau^j} \,.
\end{equation*}
Since $k$ is arbitrary, we conclude that $Z(\tau^1) \geq Z(\tau^2) \geq \cdots \geq Z(\tau^K)$, as desired.
\end{proof}

\myparagraph{The General Case}

We repead Theorem~\ref{theorem:order_partition} here for better readability.

\ThmOrderPartitionAppendix*

As discussed in Remark~\ref{remark:value_C}, for the value of $C$, we technically requires that $\forall\, (s,a) \in \sS \times \sA, r(s,a) / C \leq \mathrm{const} \approx 1.79$.
Hence, under Assumption~\ref{assump:bounded_rew}, $C\geq 0.56$ will suffice.
This provides some information on the setting of hyperparameter $C$.

\begin{proof}

Under the shaped reward $r'(s,a) = r(s,a) + \Phi(s)$, $Z\br{s^k_t}$ takes the form
\begin{equation*}
    \begin{split}
        Z\br{s^k_t} &= \int_\sA \exp\br{\frac{1}{C} r\br{s^k_t,a} + \frac{1}{C} \Phi\br{s^k_t}} \pi_I\br{a \given s^k_t}  \intd a \\
        &= \exp\br{\frac{1}{C} \Phi\br{s^k_t}}  \int_\sA \exp\br{\frac{1}{C} r\br{s^k_t,a} } \pi_I\br{a \given s^k_t}  \intd a \\
        &= \exp\br{\frac{1}{C}  \Phi\br{s^k_t}} \E_{a\sim \pi_I\br{\cdot \given s^k_t}}\sbr{
        \exp\br{\frac{1}{C} r\br{s^k_t,a} } 
}.
    \end{split}
\end{equation*}
Taking $\log$ on both sides of the equation, by Jensen's inequality, we have
\begin{equation*}
    \begin{split}
        \log Z\br{s^k_t} &= \frac{1}{C}  \Phi\br{s^k_t} + \log \E_{a\sim \pi_I\br{\cdot \given s^k_t}}\sbr{
        \exp\br{\frac{1}{C} r\br{s^k_t,a} }}  \geq \frac{1}{C}  \Phi\br{s^k_t} +  \E_{a\sim \pi_I \br{\cdot \given s^k_t}}\sbr{
        \frac{1}{C} r\br{s^k_t,a} }.
    \end{split}
\end{equation*}
On the other hand, we also have
\begin{equation*}
    \begin{split}
        Z\br{s^i_t} &= \exp\br{\frac{1}{C} \Phi\br{s^i_t}}  \int_\sA \exp\br{\frac{1}{C} r\br{s^i_t,a} } \pi_I\br{a \given s^i_t}  \intd a = \exp\br{\frac{1}{C} \Phi\br{s^i_t}} \E_{a\sim \pi_I\br{\cdot \given s^i_t}} \sbr{\exp\br{\frac{1}{C} r\br{s^i_t,a} }}   ,
    \end{split}
\end{equation*}
Taking $\log$ again on both sides of the equations, we have
\begin{equation*}
    \begin{split}
        \log Z\br{s^i_t} & = \frac{1}{C} \Phi\br{s^i_t} + \log\br{\E_{a\sim \pi_I\br{\cdot \given s^i_t}}\sbr{\exp\br{\frac{1}{C} r\br{s^i_t,a} }}}  \\
        &\leq \frac{1}{C} \Phi\br{s^i_t} + \E_{a\sim \pi_I\br{\cdot \given s^i_t}}\sbr{\exp\br{\frac{1}{C} r\br{s^i_t,a} }} - 1 \\
        &\leq \frac{1}{C} \Phi\br{s^i_t} + \E_{a\sim \pi_I\br{\cdot \given s^i_t}}\sbr{1+\frac{1}{C} r\br{s^i_t,a} + \frac{1}{C^2} r^2\br{s^i_t,a} } - 1 \\
        &\leq \frac{1}{C} \Phi\br{s^i_t} + \E_{a\sim \pi_I\br{\cdot \given s^i_t}}\sbr{\frac{1}{C} r\br{s^i_t,a} + \frac{1}{C^2} r\br{s^i_t,a} } \\
        &=  \frac{1}{C} \Phi\br{s^i_t} + \frac{C+1}{C^2} \E_{a\sim \pi_I\br{\cdot \given s^i_t}}\sbr{ r\br{s^i_t,a}  }
    \end{split}
\end{equation*}
where the first inequality is because $\forall\, x > 0, \log x \leq x - 1$; the second inequality is because $e^x \leq 1 + x + x^2, \forall\, x < \mathrm{const} \approx 1.79$; the third inequality is because Assumption~\ref{assump:bounded_rew}, \ie, $\forall\, (s,a) \in \sS \times \sA, \, 0\leq r(s,a) \leq 1$.

Combining the above analysis, we have
\begin{equation*}
    \begin{split}
        \log Z\br{s^k_t} - \log Z\br{s^i_t} & \geq \frac{1}{C}  \Phi\br{s^k_t} +  \E_{a\sim \pi_I\br{\cdot \given s^k_t}}\sbr{
        \frac{1}{C} r\br{s^k_t,a} } - \frac{1}{C} \Phi\br{s^i_t} - \frac{C+1}{C^2} \E_{a\sim \pi_I\br{\cdot \given s^i_t}}\sbr{ r\br{s^i_t,a}  } \\
        & \geq \frac{1}{C}  \br{\Phi\br{s^k_t}   -  \Phi\br{s^i_t}} - \frac{C+1}{C^2} \\
    \end{split}
\end{equation*}
where the second inequality is again due to Assumption~\ref{assump:bounded_rew}, \ie, $0\leq r(s,a) \leq 1$.

Summing over $t$, we have
\begin{equation*}
    \begin{split}
        \sum_{t=0}^T \gamma^t \br{\log Z\br{s^k_t} - \log Z\br{s^i_t}} \geq \frac{1}{C} \sum_{t=0}^T \gamma^t
        \br{ \Phi\br{s^k_t} - \Phi\br{s^i_t}}  -  \frac{C+1}{C^2} \frac{1-\gamma^{T+1}}{1-\gamma}\, {\geq}_{?}\, 0 \,,
    \end{split}
\end{equation*}
where the desired final inequality of $\,\geq 0\,$ holds if 
\begin{equation*}
    \sum_{t=0}^T \gamma^t \br{\Phi\br{s^k_t} - \Phi\br{s^i_t}} \geq \frac{C+1}{C} \frac{1-\gamma^{T+1}}{1-\gamma} \,,
\end{equation*}
where $\frac{C+1}{C} \frac{1-\gamma^{T+1}}{1-\gamma} < \infty$ is finite.
Therefore, there exists a finite shaping function $\Phi(s)$ satisfying the above constraint, which can restore the order of $Z(\tau^k)$ and $Z(\tau^i)$ to be $Z(\tau^k) \geq Z(\tau^i)$.

Furthermore, this restoration is transitive in the sense that, for $\tau^k \succ \tau^i \succ \tau^j$ and the corresponding $Z(\tau^k)$, $Z(\tau^i)$, and $Z(\tau^j)$, if
\begin{equation*}
    \begin{split}\textstyle
        \sum_{t=0}^T \gamma^t \br{\Phi\br{s^k_t} - \Phi\br{s^i_t}} &\geq \frac{C+1}{C} \frac{1-\gamma^{T+1}}{1-\gamma}   \implies Z(\tau^k) \geq Z(\tau^i) \\
        \textstyle
        \sum_{t=0}^T \gamma^t \br{\Phi\br{s^i_t} - \Phi\br{s^j_t}} &\geq \frac{C+1}{C} \frac{1-\gamma^{T+1}}{1-\gamma}  \implies Z(\tau^i) \geq Z(\tau^j),
    \end{split}
\end{equation*}
then $\log Z(\tau^k) - \log Z(\tau^j) \geq 0 \iff Z(\tau^k) \geq Z(\tau^j)$, because,
\begin{equation*}
    \begin{split}
        \log Z(\tau^k) - \log Z(\tau^j) &= \sum_{t=0}^T \gamma^t \br{\log Z\br{s^k_t} - \log Z\br{s^j_t}} \\
        &= \sum_{t=0}^T \gamma^t \br{\log Z\br{s^k_t} - \log Z\br{s^i_t} + \log Z\br{s^i_t} - \log Z\br{s^j_t}} \\
        &= \sum_{t=0}^T \gamma^t \br{\log Z\br{s^k_t} - \log Z\br{s^i_t}} + \sum_{t=0}^T \gamma^t\br{\log Z\br{s^i_t} - \log Z\br{s^j_t}} \\
        &\geq \frac{1}{C} \br{\sum_{t=0}^T \gamma^t \br{ \Phi\br{s^k_t} - \Phi\br{s^i_t}}  -  \frac{C+1}{C} \frac{1-\gamma^{T+1}}{1-\gamma}} \\
        & \quad\quad + \frac{1}{C} \br{\sum_{t=0}^T \gamma^t \br{ \Phi\br{s^i_t} - \Phi\br{s^j_t}}  -  \frac{C+1}{C} \frac{1-\gamma^{T+1}}{1-\gamma}} \geq 0\,,
    \end{split}
\end{equation*}
since by  Assumption~\ref{assump:bounded_rew}, $C\geq 0.56$ is positive. 

It follows that for $K$ trajectories $\tau^1 \succ \tau^2 \succ \cdots \succ \tau^K$, we can restore the order of $Z\br{\tau^k}$'s by \textbf{at most} $(K-1)$ requirements on the reward-shaping function $\Phi(s)$, taking the form,
\begin{equation*}
    \begin{split}
        \sum_{t=0}^T \gamma^t \br{\Phi\br{s^1_t} - \Phi\br{s^2_t}} &\geq  \frac{C+1}{C} \frac{1-\gamma^{T+1}}{1-\gamma}, \\[-0.5ex]
                        & \vdotswithin{ \dots } \\[-0.5ex] 
  \sum_{t=0}^T \gamma^t \br{\Phi\br{s^{K-1}_t} - \Phi\br{s^K_t}} &\geq \frac{C+1}{C} \frac{1-\gamma^{T+1}}{1-\gamma} \,.
    \end{split}
\end{equation*}
Since each of these $(K-1)$ requirements only specify a finite lower bound on the discounted sum of the difference of the reward-shaping function $\Phi(\cdot)$ on two trajectories, it is clear that there exists a finite reward-shaping function $\Phi(s)$ satisfying these   requirements.
Hence, there exists a shaped reward function $r' \in [r], r'(s,a) = r(s,a) + \Phi(s)$, such that $Z(\tau^1) \geq \cdots \geq Z(\tau^K)$ under $r'$.
\end{proof}

\section{The Smaller $\gamma$, The Better?}\label{sec:smaller_gamma_not_better}

Though it would be great if ``the smaller $\gamma$, the better result'', this is unfortunately not true. 
In the multiple prompt experiment, as shown in Fig.~\ref{fig:multiple_gamma}, $\gamma=0.95$ is slightly better than $\gamma=0.9$ towards the end of training. 

\begin{figure*}[tb]
    \centering
        \centering
\includegraphics[width=0.4\textwidth]{./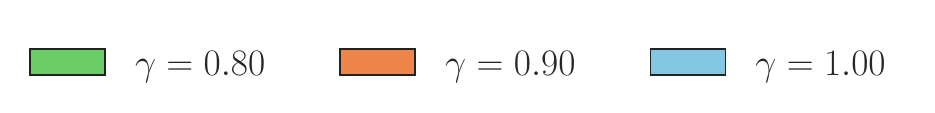}
\vspace{-.5em}
\\
     \begin{subfigure}[b]{0.5\textwidth}
         \centering
         \includegraphics[width=\textwidth]{./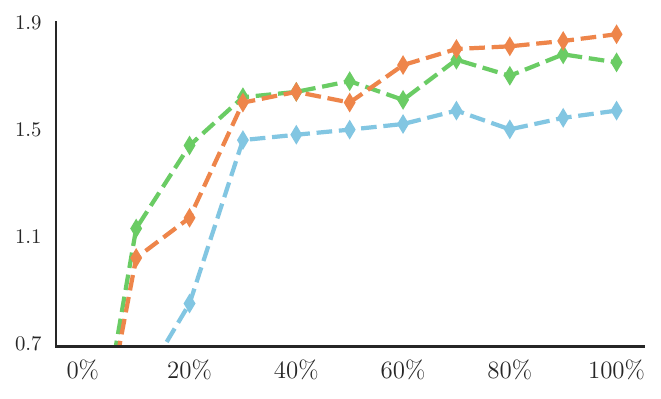}
         \captionsetup{font=footnotesize}
         \vspace{-6mm}
     \end{subfigure}
     \vspace{-2mm}
     \captionsetup{font=small}
        \caption{ 
        \small
        ImageReward over the training process for the single prompt experiment on the color-domain prompt ``A green colored rabbit.'', with $\gamma \in \{0.8, 0.9, 1.0\}$.
        As in Fig.~\ref{fig:single_color_gamma}, $x$-axis represents $t\%$ of the training process and all lines start from $-0.02$ at $0\%$, the value of ``Orig.'' in Fig.~\ref{fig:single_imagerew_Color}.
        }
        \label{fig:line_varying_gamma_single_with_08}
\end{figure*}

As another verification, we re-run our single prompt experiment (“A green colored rabbit.”) under $\gamma=0.8$. Fig.~\ref{fig:line_varying_gamma_single_with_08} compares its performance with $\gamma\in\{0.9, 1.0\}$ at each decile of the training process.
From Fig.~\ref{fig:line_varying_gamma_single_with_08}, we see that $\gamma=0.8$ is again better than the classical setting of $\gamma=1.0$ and indeed trains faster than $\gamma=0.9$ in the first $20\%$ of the training process. However, in the second half of training, $\gamma=0.8$ is less stable and its performance is inferior to $\gamma=0.9$. 

Recall that during training, the smaller $\gamma$, the more emphasis is on the initial steps of the reverse chain. As shown in Fig.~\ref{fig:line_varying_gamma_single_with_08}, a too-small $\gamma$ may thus have a stronger tendency of overfitting, leading to a more varying training process and inferior final result. Further, during training, a too-small $\gamma$ may pay too-few attention to the later steps of the reverse chain that generate image details, resulting in less preferable image generations. 

From Figs.~\ref{fig:multiple_gamma} and \ref{fig:line_varying_gamma_single_with_08}, we conclude that while a \textit{sensible} incorporation of $\gamma < 1$ can outperform the classical setting of $\gamma = 1$, final performance is not monotone with $\gamma$. The optimal $\gamma$ value can be task specific. In our experiments, we find that $\gamma=0.9$ or $0.95$ can be a good starting point.

\section{Discussion on Our Method's Applicability to Real Human Preference}

In the experiments (\cref{sec:exp}), we use human-preference scorers for quantitatively verifying our method’s ability to satisfy (human) preferences, which also facilitates reproducibility. 
Human-preference scorers are also essential for further studies of our proposed method in \cref{sec:exp_abla} \textbf{(b)} and \textbf{(c)}.
Apart from the numeric scores, we present image samples and conduct a human evaluation (\cref{sec:exp_abla} \textbf{(d)}) to verify our method’s ability in generating (human) preferable images.

As presented in \cref{sec:method}, our method does not make assumptions about the preference source. Thus, a reward function is not an intrinsic requirement of our method. Being agnostic to the preference source, our method is readily applicable to (real) human preferences as well. 

Adapting the classical off-policy RLHF paradigm in the literature \citep[\eg,][]{ziegler2019fine,stiennon2020learning,menick2022teaching,bai2022training}, a simple workflow of applying our method to real human preferences iterate on:
\begin{enumerate}[topsep=0em,itemsep=0em]
    \item Generate trajectories from the latest policy, gather human preferences on the corresponding images, and store the quantities required in \cref{sec:method}; %
    \item Continue training the T2I by our proposed loss for a chosen number of steps, utilizing the newly collected human data. %
\end{enumerate}
Given its similarity with the cited RLHF literature, we believe that this workflow is indeed practical for human-in-the-loop.

\section{More Related Works} \label{sec:more_related_work}

\myparagraph{Dense \textit{v.s.} Sparse Training Guidance for Sequential Generative Models.}
By its sequential generation nature, T2Is are instances of generative models with \textit{sequential} nature, which further includes, \eg, text generation models, \citep{devlin2018bert,bart2019,radford2019language} and dialog systems \citep{chen2017survey,kwan2022survey}.
Similar to T2I's alignment (\cref{sec:t2ialignment}), a classical guiding signal for training  sequential generative models is the native trajectory-level feedback such as the downstream test metric \citep[\eg,][]{ryang2012framework,ranzato2015sequence,rennie2017self,paulus2017deep,shu2021reward,quark2022,snell2022offline}. 
As discussed in \Secref{sec:intro}, ignoring the sequential-generation nature can incur optimization difficulty and training instability due to the sparse reward issue \citep{sqltext2021,snell2022offline}.
In RL-based methods for training text generation models, in particular, it has become popular to incorporate into the training objective a per-step KL penalty towards the uniform distribution \citep{sqltext2021,rlprompt2022}, the initial pre-trained model \citep{ziegler2019fine,nlpo2022}, the supervised fine-tuned model \citep{offlinerldialog2019,stiennon2020learning,jaques2020human,instructgpt2022}, or some base momentum model \citep{castricato2022robust}, to ``densify'' the sparse reward.  
Although a per-step KL penalty does help the RL-based training, it can be less task-tailored should one 
regularizes the generative models towards those generic distributions, especially regarding the ultimate training goal --- optimizing the desired trajectory-level feedback.
As discussed in \citet{yang2023preferencegrounded} (Appendix F), when combined with the sparse reward issue, such a KL regularization can in fact distract the training of text generation models from improving the received feedback, 
especially for the initial steps of the generation process, which unfortunately will affect all subsequent generation steps.

In some relatively restricted settings, task-specific \textit{dense} rewards have been explored for training text generation models.
With the assumption of abundant expert data for supervised (pre-)training,
\citet{shi2018toward} use inverse RL \citep{russell1998learning} to infer a per-step reward;  \citet{leakgan2018} propose a hierarchical approach; \citet{yang2018unsupervised} learn LM discriminators; while \citet{adversarialranking2017} and \citet{yu2017seqgan} first learn a trajectory-level adversarial reward function similar to a GAN discriminator, before applying the expensive and high-variance Monte-Carlo rollout to simulate per-step rewards.
In the code generation domain, \citet{le2022coderl} use some heuristic values related to the trajectory-level evaluation, without explicitly learning per-step rewards.

Inspired by preference learning in robotics \citep[\eg,][]{christiano2017deep}, methods have  been recently developed to learn a \textit{dense} per-step reward function whose trajectory-level aggregation aligns with the preference ordering among multiple alternative generations. 
These methods have been applied to both sufficient-data and low-data regime, in applications of training task-oriented dialog systems \citep[\eg,][]{caspi2021,fantasticrewards2022} and fine-tuning text-sequence generation models \citep{yang2023preferencegrounded}.

Motivated by this promising direction in prior work and an easier learning problem in RL,
in this paper, we continue the research on \textit{dense} training guidance for sequential generative models, by assuming that the trajectory-level preferences are generated by a latent \textit{dense} reward function.
Through incorporating the key RL ingredient of temporal discounting factor $\gamma$, we break the temporal symmetry in the DPO-style explicit-reward-free alignment loss.
Our training objective naturally suits the T2I generation hierarchy by emphasizing the initial steps of the T2I generation process, which benefits all subsequent generation steps and thereby improves both effectiveness and efficiency of training, as shown in our experiments (\Secref{sec:exp}).

\myparagraph{Characterizing the (Latent) Preference Generation Distribution.} 
Since preference comparisons are typically performed only among the fully-generated trajectories, 
aligning trajectory generation with preference mostly requires characterizing how preference is originated from per-step rewards, as part of the preference model's assumptions.
In the imitation learning literature, preference model is classical chosen to be the Boltzmann  distribution over the \textit{undiscounted} sum of per-step rewards \citep{christiano2017deep,trex2019,drex2020}.
Several advances have been made on the characterization of the preference model, especially for accommodating the specific nature of concrete tasks. 
In robotics, \citet{kim2023preference} proposes to model the (negative) potentials of the Boltzmann  distribution by \textit{learning} a \textit{weighted-sum} to aggregate the per-step rewards over the entire trajectory. 
Motivated by the simulated robotics benchmark of location/goal reaching, an alternative formulation has been developed that models the potentials of the preference Boltzmann distribution  by the optimal advantage function or regret \citep{Knox2022ModelsOH,Knox2023LearningOA,hejna2023contrastive}.
Of a special note, though the objective in CPL (Eq. (5) in \citet{hejna2023contrastive}) looks similar to our \eqref{eq:pk_nll_main}, \textbf{in experiments, CPL actually sets $\gamma=1$} (Page 29 Table 6 of \citet{hejna2023contrastive}), making \textbf{their actual loss indeed being the ``trajectory-level reward'' variant} discussed in \Secref{sec:connect_with_dpo}.
Apart from robotic tasks, in text-sequence generation, \citet{yang2023preferencegrounded}  take into account the variable-length nature of the tasks, \eg, text summarization, and propose to incorporate inductive bias into modelling the potentials of the preference Boltzmann  distribution, through a \textit{task-specific selection} on how the per-step rewards should be aggregated over the trajectory.
In this paper, we are among the earliest works to consider the characterization of the
preference model in T2I's alignment.
By incorporating temporal discounting ($\gamma < 1$) into the preference Boltzmann  distribution, we cater for the generation hierarchy of the diffusion and T2I reverse chain \citep{ho2020denoising,wang2023diffusion}.
Through experiment results and further study (\Secref{sec:exp}, especially \Secref{sec:exp_abla} \textbf{(a)} \& \textbf{(b)}), we demonstrate that temporal discounting can be useful for effective and efficient T2I preference alignment.

\myparagraph{Learning-from-preference in Related Fields.} 
As discussed before, learning-from-preference has been a longstanding problem in robotics/control tasks \citep{akrour2011preference,akrour2012april,furnkranz2012preference} and has recently been scaled up to train deep-neural-network-based policies \citep{christiano2017deep,ibarz2018reward,biyik2019green,trex2019,drex2020,Lee2021PEBBLEFI,shin2021offline,hejna2023few,hejna2023inverse}.
These methods typically start by learning a reward function from data of pairwise comparisons or rankings, before using RL algorithms for policy optimization.
Motivated by its success in robotics, learning-from-preference is adopted in the field of natural language generation to improve text summarization \citep{ziegler2019fine,stiennon2020learning} and has become a \textit{de-facto} ingredient in the recent trend of LLMs and conversational agent \citep[\eg,][]{instructgpt2022,bai2022training,menick2022teaching,gpt42023}.
Apart from the fine-tuning stage, learning-from-preference has also been applied to the pre-training stage, though only use the  sparse trajectory-level evaluation \citep{korbak2023pretraining}.
To alleviate the modelling and compute complexity of an explicit reward model, 
following the  maximum-entropy principle in control and RL \citep{ziebart2008maximum,ziebart2010modeling,Finn2016ACB},
DPO-style objectives \citep[\eg,][]{dpo2023, tunstall2023zephyr, azar2023general, yuan2023rrhf, zhao2023slic,ethayarajh2023halos} directly train the LLMs to align with the preference data, without explicitly learning a deep neural network for the reward function.
In this paper, we are among the earliest to study the extension of learning-from-preference into T2I's preference alignment.
By taking a dense-reward perspective, we contribute to the DPO-style explicit-reward-free methods by developing a novel objective that emphasizes the initial part of the sequential generation process, which better accommodates the generation hierarchy of diffusion models and T2Is \citep{ho2020denoising,wang2023diffusion}. 
We validate our perspective through experiments in \Secref{sec:exp}.

\section{Experiment Details} \label{sec:exp_details}

We note that in mini-batch training of \eqref{eq:pk_nll_main}, for the the sampled mini-batch $\gB \triangleq \{(\tau_i^1, \tau_i^2)_{\vc_i}\}_i$, each trajectories in the trajectory tuple $(\tau_i^1, \tau_i^2)$  corresponds to the same text prompt $\vc_i$, which makes the preference comparison between trajectories valid.
Different trajectory tuples may correspond to different text prompts in the multiple prompt experiments. 

We implement our method based on the \href{https://github.com/google-research/google-research/tree/master/dpok}{source code of DPOK} \citep{fan2023dpok}, and inherit as many of their designs and hyperparameter settings as possible, \eg, the specific U-net layers to add LoRA.
In the notation of \Secref{sec:method}, the LoRA parameters are our trainable policy parameter $\theta$. 
To further save GPU memory, the entire training process is conducted under \texttt{bfloat16} precision.
For training stability, we are motivated by PPO \citep{schulman2017proximal} and DPOK to clip all log density ratios $\log\frac{\pi_\theta\br{a_t \given s_t}}{\pi_I\br{a_t \given s_t}}$ to be within $[-\epsilon, \epsilon]$, since $\log\br{1\pm \epsilon} \approx \pm \epsilon$.
Without further tuning, we set $\epsilon=\mathrm{1e-4}$ in single prompt experiments as in DPOK, and $\epsilon=5e-4$ in multiple prompt experiments.

Below we discuss the hyperparameter settings specific to our single and multiple prompt experiments.
\begin{multicols}{2}
  \begin{minipage}{0.45\textwidth}
    \begin{table}[H]
\captionsetup{font=small}
\caption{
\small Key hyperparameters for T2I (policy) training in the single prompt experiments.
} 
\label{table:exp_single_hyperparam} 
\centering 
\vspace{-.8em}
\begin{tabular}{@{}lll@{}}
\toprule
Hyperparameter              & & Value                                       \\ \midrule
$M_\mathrm{tr}$ & & 10000 \\
$M_\mathrm{col}$ & & 2500 \\
$N_\mathrm{pr}$ & & 1000 \\
$N_{\mathrm{traj}}$ & & 5 \\
$N_\mathrm{step}$ & & 3 \\
$C$ & & 10.0 \\
$\gamma$ & & 0.9 \\
Batch Size & & 4 \\
LoRA Rank & & 4 \\
Optimizer & & AdamW   \\
Learning Rate & & 3e-5 \\
Weight Decay & & 2e-3 \\ 
Gradient Norm Clipping & & 1.0 \\
Learning Rate Scheduler & & Constant \\
Preference Source & & ImageReward \\
\bottomrule
\end{tabular}
\end{table}
  \end{minipage}
  \hfill
\begin{minipage}{0.45\textwidth}
\begin{table}[H]
\captionsetup{font=small}
\caption{
\small Key hyperparameters for T2I (policy) training in the multiple prompt experiments.
} 
\label{table:exp_multi_hyperparam} 
\centering 
\vspace{-.8em}
\begin{tabular}{@{}llll@{}}
\toprule
Hyperparameter        &   &     & Value                                       \\ \midrule
$M_\mathrm{tr}$ & & & 40000 \\
$M_\mathrm{col}$ & & & 4000 \\
$N_\mathrm{pr}$ &  & & 2000 \\
$N_{\mathrm{traj}}$ & & & 5 \\
$N_\mathrm{step}$ & & & 1 \\
$C$ & & & 12.5 \\
$\gamma$ & & & 0.9 \\
Batch Size & & & 32 \\
LoRA Rank & & & 32 \\
Optimizer & & & AdamW   \\
Learning Rate & & & 2e-5 \\
Weight Decay & & & 1.5e-3 \\ 
Gradient Norm Clipping & & & 0.05 \\
Learning Rate Scheduler & & & Constant \\
Preference Source & & & HPSv2 \\
\bottomrule
\end{tabular}
\end{table}
\end{minipage}
\end{multicols}

\subsection{Single Prompt Experiments} \label{sec:exp_details_single_prompt}

\cref{table:exp_single_hyperparam} tabulates the key training hyperparameters, where we use the Adam optimizer with decoupled weight decay \citep[AdamW,][]{adamw2017}.

\subsection{Multiple Prompt Experiments} \label{sec:exp_details_multiple_prompt}

We note that we obtained the HPSv2 train prompts by email correspondence with HPSv2's authors.
We produce all results by following the testing principle in the HPSv2 paper and the official GitHub Repository.
\cref{table:exp_multi_hyperparam} tabulates the key training hyperparameters, where we again use the AdamW optimizer.
In the qualitative comparisons (Fig.~\ref{fig:multiple_prompt_generated_images} and \cref{sec:multiple_prompt_more_images}), image samples for the baseline ``Dreamlike Photoreal 2.0'' are directly from the officially released \href{https://huggingface.co/datasets/zhwang/HPDv2/tree/main/benchmark/benchmark_imgs}{HPSv2 benchmark images}.

\subsection{Setups of the Human Evaluation} \label{sec:human_eval_setup}

In our human evaluation (\Secref{sec:exp_abla} \textbf{(d)}), we generally adopt the principle in prior work \citep[\eg,][]{hpsv22023,imagereward2023,wallace2023diffusion} to evaluate the generated images' fidelity  to the text prompt, as well as their overall quality.
We use the same set of baseline methods as in Fig.~\ref{fig:multiple_prompt_generated_images}, since we view this set as both representative and minimal.
In conducting this evaluation, we \textit{randomly sampled} $200$ prompts from the HPSv2 test set.
Note that though we use the same set of baseline methods, the sampled text prompts are \textit{not} necessarily the same as those shown in Fig.~\ref{fig:multiple_prompt_generated_images} and \cref{sec:multiple_prompt_more_images}.
We asked $20$ qualified evaluators for binary comparisons between two images, each from a different model, based on the provided corresponding text prompt.
The method names were anonymized.
The evaluators were asked to read the text prompt and select which one of the two images is better, in terms of both text fidelity and image quality.
To reduce randomness and bias in human judgement, we ensured that all binary comparisons would be evaluated multiple times by the same or a different evaluator.
In Table~\ref{table:multiple_prompt_human}, we report the ``win rate'' of our method, \ie, the percentage of binary comparisons with the stated opponent where the image from our method is preferred.
Note that the ``win rate'' is averaged over all comparisons between the specified two parties.
We leave as future work a more comprehensive and larger scale human evaluation for our method.

\section{More Generated Images}\label{sec:more_images}

\subsection{More Images from the Single Prompt Experiment} \label{sec:single_prompt_more_images}


\begin{figure}[H]
     \centering
\includegraphics[width=1.\textwidth]{./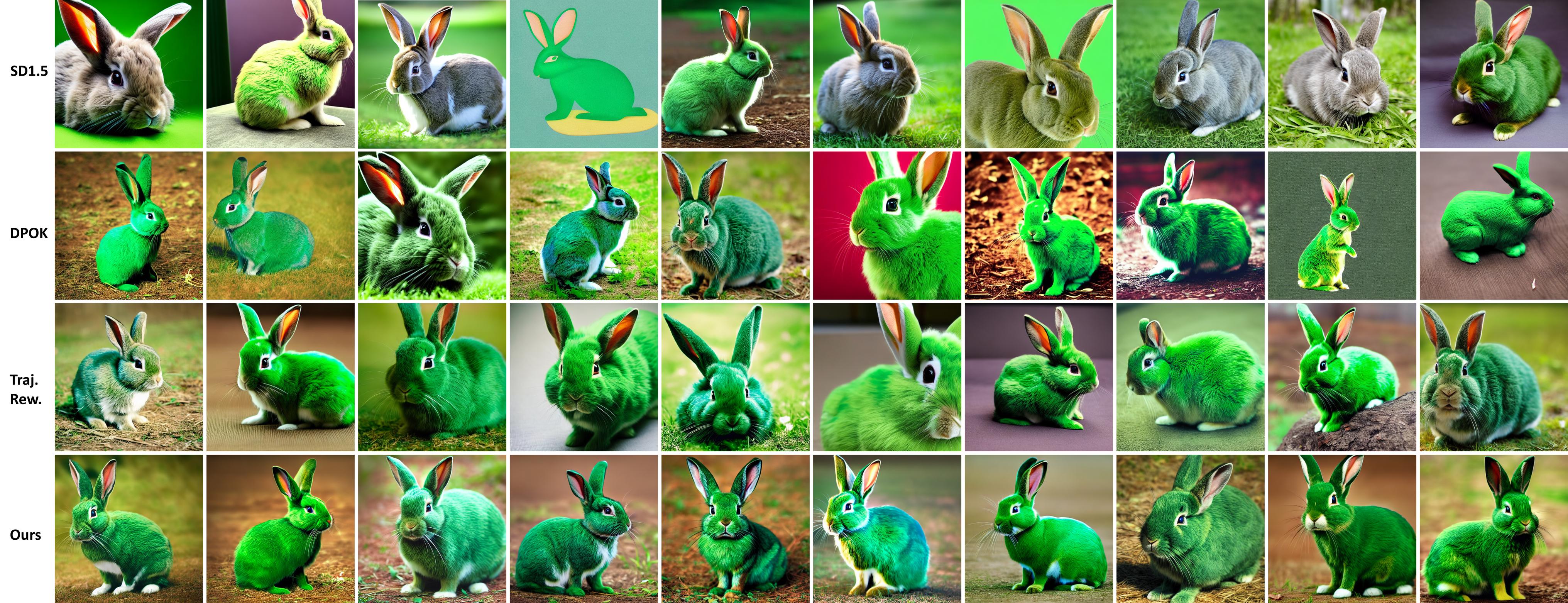}
     \vspace{-1.5em}
     \captionsetup{font=small}
        \caption{ 
        \small
        Single prompt experiment: \textit{randomly sampled} generated images for the prompt ``A green colored rabbit.'', from our method and the baselines in Fig.~\ref{fig:single_prompt_generated_images}.
        ``Traj. Rew.'' denotes the classical DPO-style objective of assuming trajectory-level reward (\Secref{sec:connect_with_dpo}).
        }
\end{figure}

\begin{figure}[H]
     \centering
\includegraphics[width=1.\textwidth]{./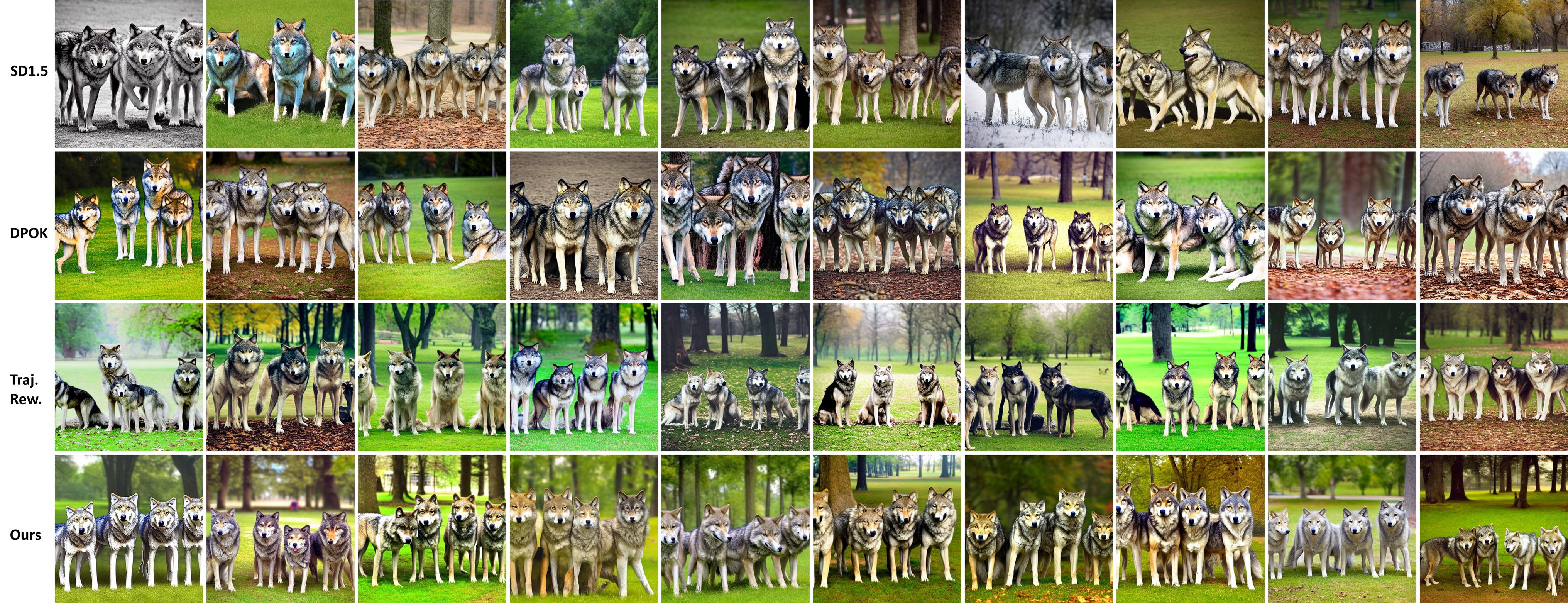}
     \vspace{-1.5em}
     \captionsetup{font=small}
        \caption{ 
        \small
        Single prompt experiment: \textit{randomly sampled} generated images for the prompt ``Four wolves in the park.'', from our method and the baselines in Fig.~\ref{fig:single_prompt_generated_images}.
        ``Traj. Rew.'' denotes the classical DPO-style objective of assuming trajectory-level reward (\Secref{sec:connect_with_dpo}).
        }
\end{figure}
\mbox{}

\begin{figure}[H]
     \centering
\includegraphics[width=1.\textwidth]{./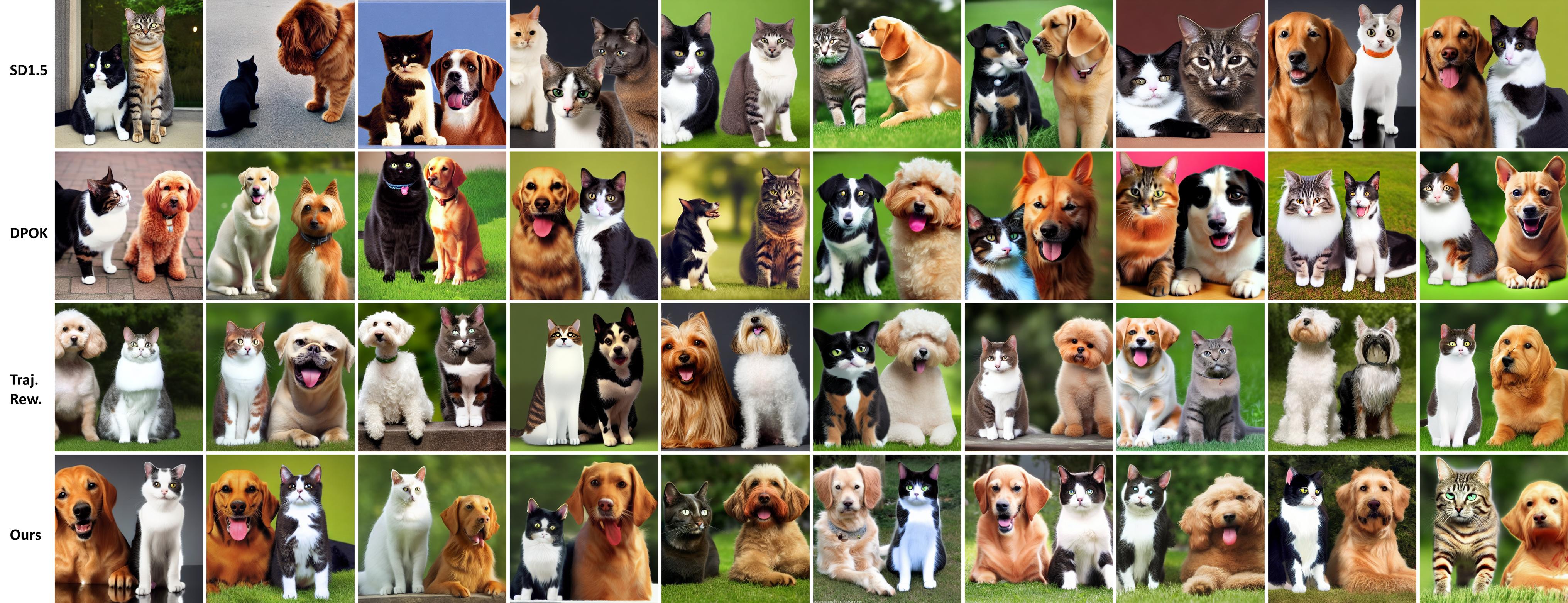}
     \vspace{-1.5em}
     \captionsetup{font=small}
        \caption{ 
        \small
        Single prompt experiment: \textit{randomly sampled} generated images for the prompt ``A cat and a dog.'', from our method and the baselines in Fig.~\ref{fig:single_prompt_generated_images}.
        ``Traj. Rew.'' denotes the classical DPO-style objective of assuming trajectory-level reward (\Secref{sec:connect_with_dpo}).
        }
\end{figure}

\begin{figure}[H]
     \centering
\includegraphics[width=1.\textwidth]{./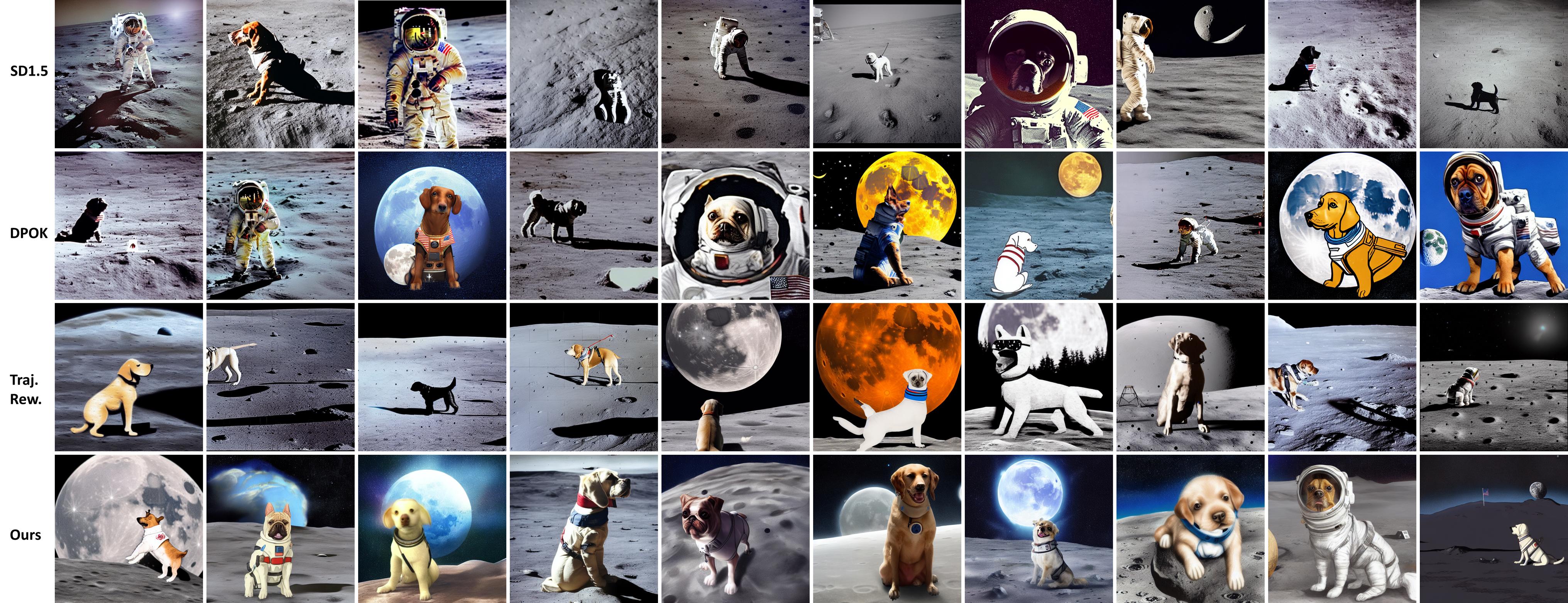}
     \vspace{-1.5em}
     \captionsetup{font=small}
        \caption{ 
        \small
        Single prompt experiment: \textit{randomly sampled} generated images for the prompt ``A dog on the moon.'', from our method and the baselines in Fig.~\ref{fig:single_prompt_generated_images}.
        ``Traj. Rew.'' denotes the classical DPO-style objective of assuming trajectory-level reward (\Secref{sec:connect_with_dpo}).
        }
\end{figure}
\mbox{}

\begin{figure}[H]
     \centering
\includegraphics[width=1.\textwidth]{./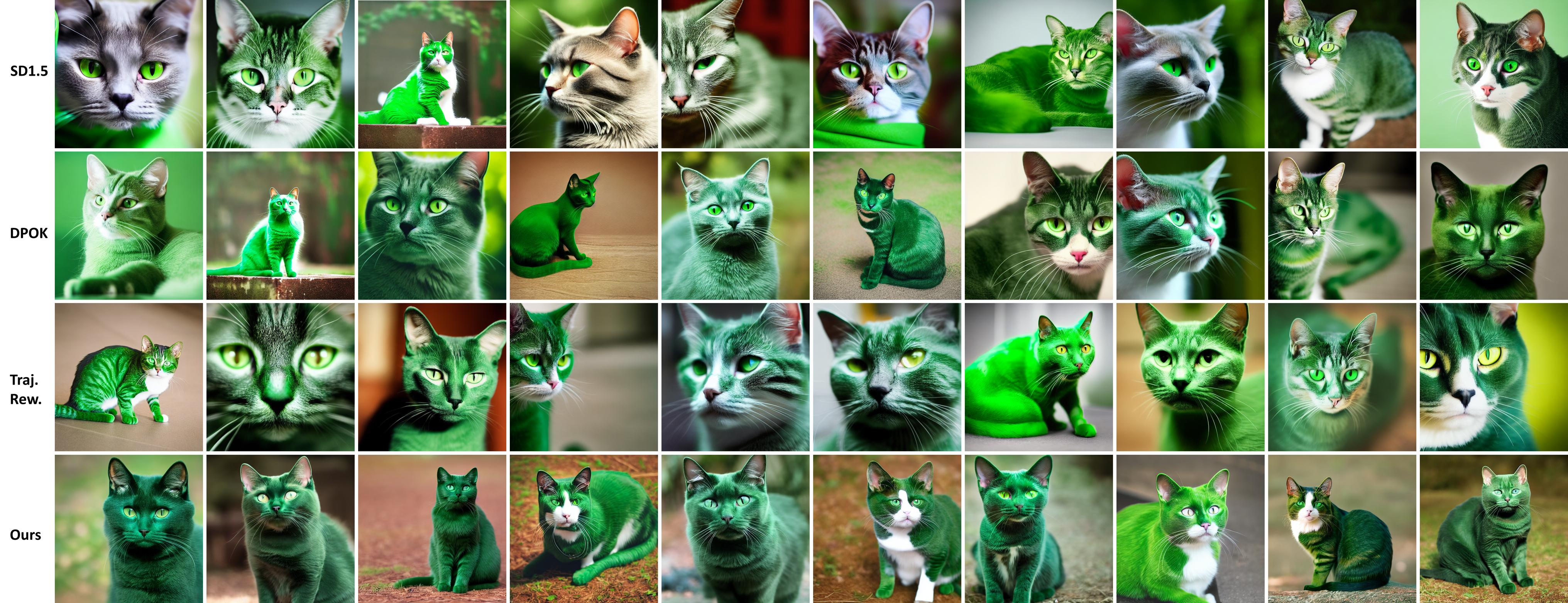}
     \vspace{-1.5em}
     \captionsetup{font=small}
        \caption{ 
        \small
        Single prompt experiment: \textit{randomly sampled} generated images for the prompt ``A green colored cat.'', from our method and the baselines in Fig.~\ref{fig:single_prompt_generated_images}.
        ``Traj. Rew.'' denotes the classical DPO-style objective of assuming trajectory-level reward (\Secref{sec:connect_with_dpo}).
        }
\end{figure}

\begin{figure}[H]
     \centering
\includegraphics[width=1.\textwidth]{./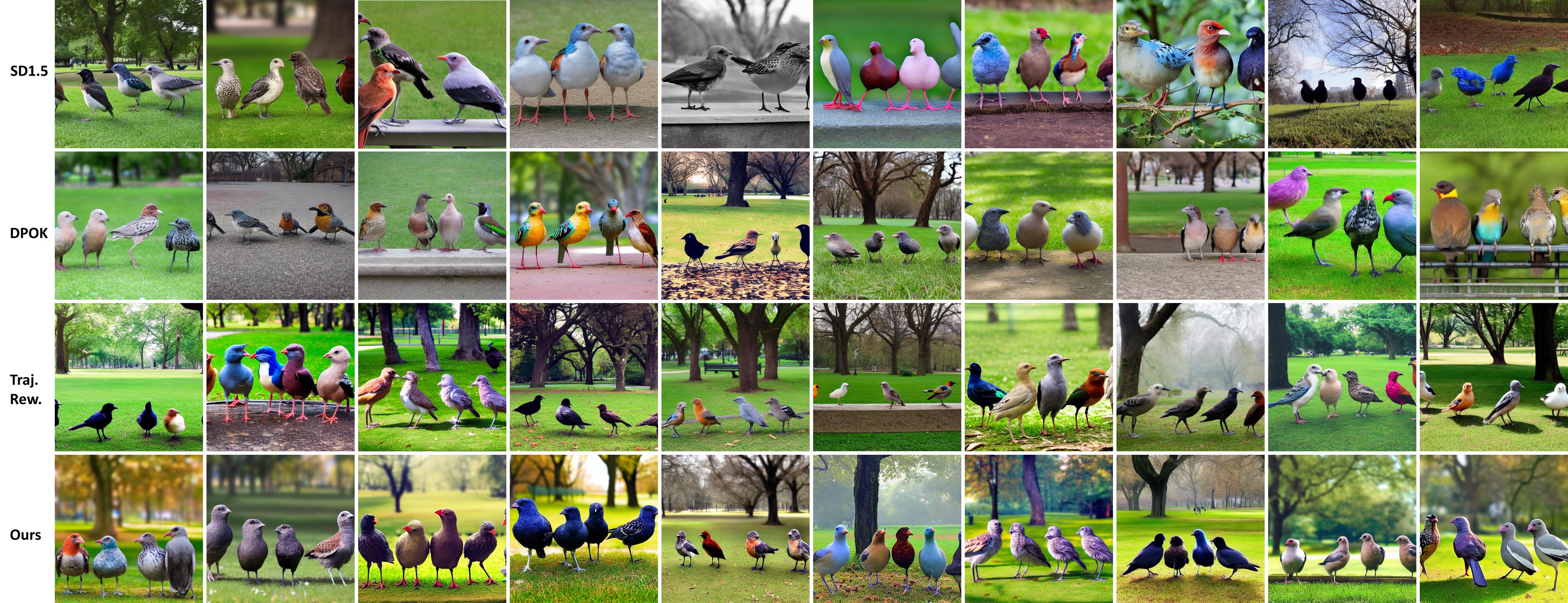}
     \vspace{-1.5em}
     \captionsetup{font=small}
        \caption{ 
        \small
        Single prompt experiment: \textit{randomly sampled} generated images for the prompt ``Four birds in the park.'', from our method and the baselines in Fig.~\ref{fig:single_prompt_generated_images}.
        ``Traj. Rew.'' denotes the classical DPO-style objective of assuming trajectory-level reward (\Secref{sec:connect_with_dpo}).
        }
\end{figure}
\mbox{}

\begin{figure}[H]
     \centering
\includegraphics[width=1.\textwidth]{./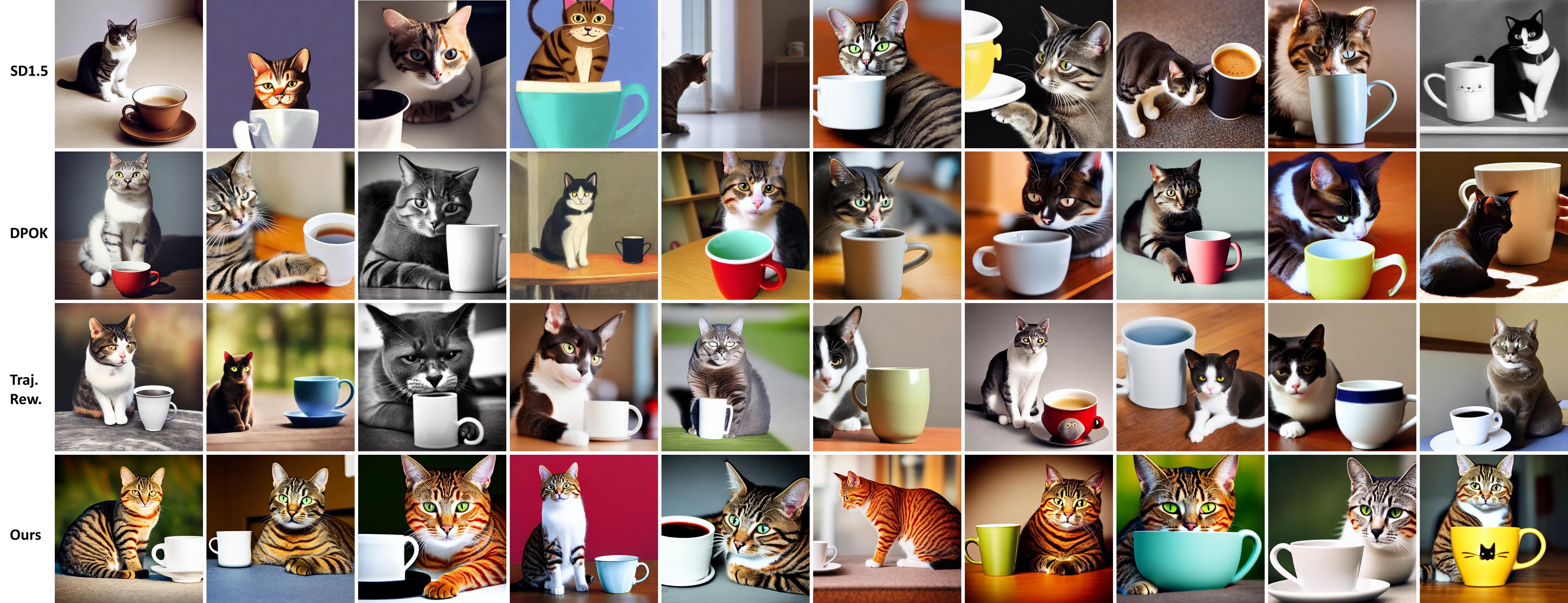}
     \vspace{-1.5em}
     \captionsetup{font=small}
        \caption{ 
        \small
        Single prompt experiment: \textit{randomly sampled} generated images for the prompt ``A cat and a cup.'', from our method and the baselines in Fig.~\ref{fig:single_prompt_generated_images}.
        ``Traj. Rew.'' denotes the classical DPO-style objective of assuming trajectory-level reward (\Secref{sec:connect_with_dpo}).
        }
\end{figure}

\begin{figure}[H]
     \centering
\includegraphics[width=1.\textwidth]{./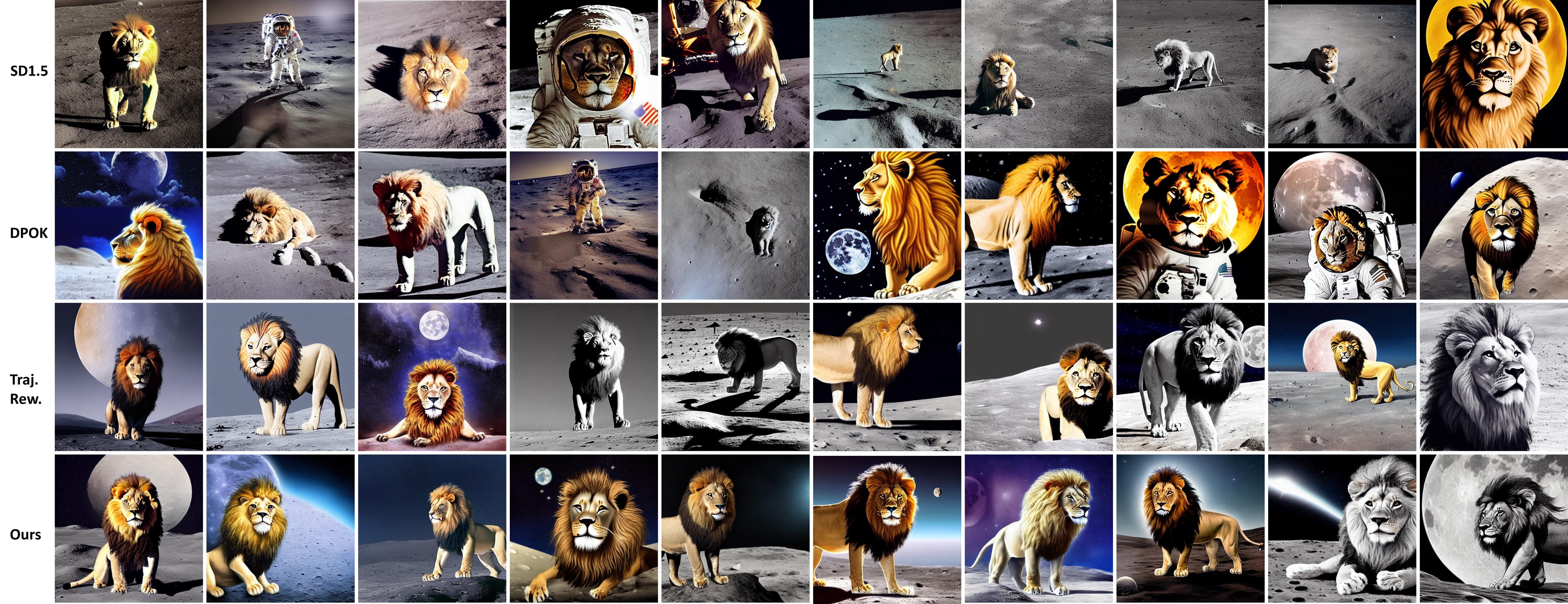}
     \vspace{-1.5em}
     \captionsetup{font=small}
        \caption{ 
        \small
        Single prompt experiment: \textit{randomly sampled} generated images for the prompt ``A lion on the moon.'', from our method and the baselines in Fig.~\ref{fig:single_prompt_generated_images}.
        ``Traj. Rew.'' denotes the classical DPO-style objective of assuming trajectory-level reward (\Secref{sec:connect_with_dpo}).
        }
\end{figure}
\mbox{}

\subsection{More Images from the Multiple Prompt Experiment} \label{sec:multiple_prompt_more_images}

\begin{figure}[H]
     \centering
\includegraphics[width=1.\textwidth]{./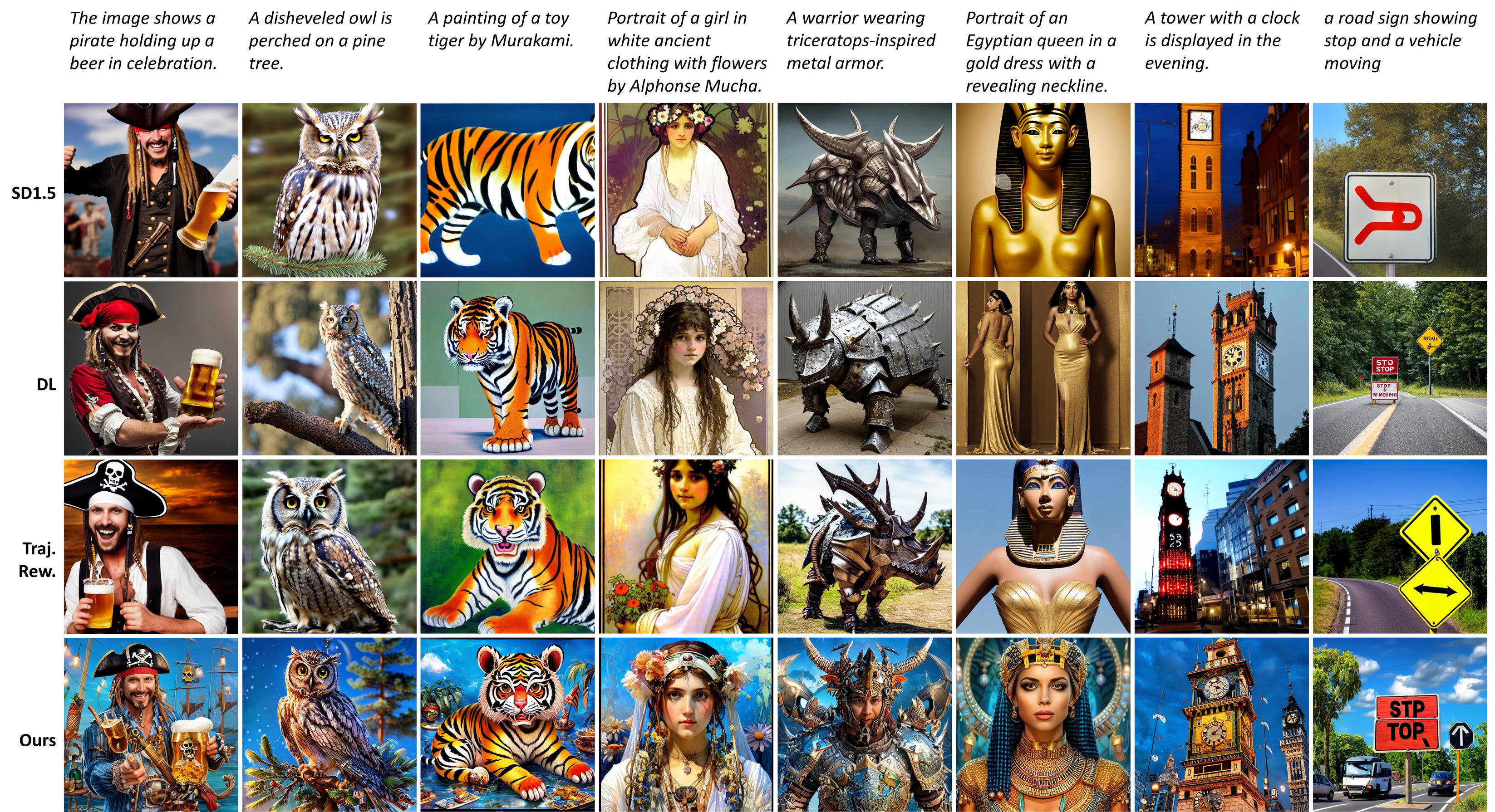}
     \vspace{-1.5em}
     \captionsetup{font=small}
        \caption{ 
        \small
        Multiple prompt experiment: generated images from our method and the baselines in Fig.~\ref{fig:multiple_prompt_generated_images} on \textit{randomly sampled} prompts from the HPSv2 test set. 
        ``DL'' denotes Dreamlike Photoreal 2.0, the best baseline from the HPSv2 paper.
        ``Traj. Rew.'' denotes the classical DPO-style objective of assuming trajectory-level reward (\Secref{sec:connect_with_dpo}).
        }
        \vspace{-1em}
\end{figure}

\begin{figure}[H]
     \centering
\includegraphics[width=1.\textwidth]{./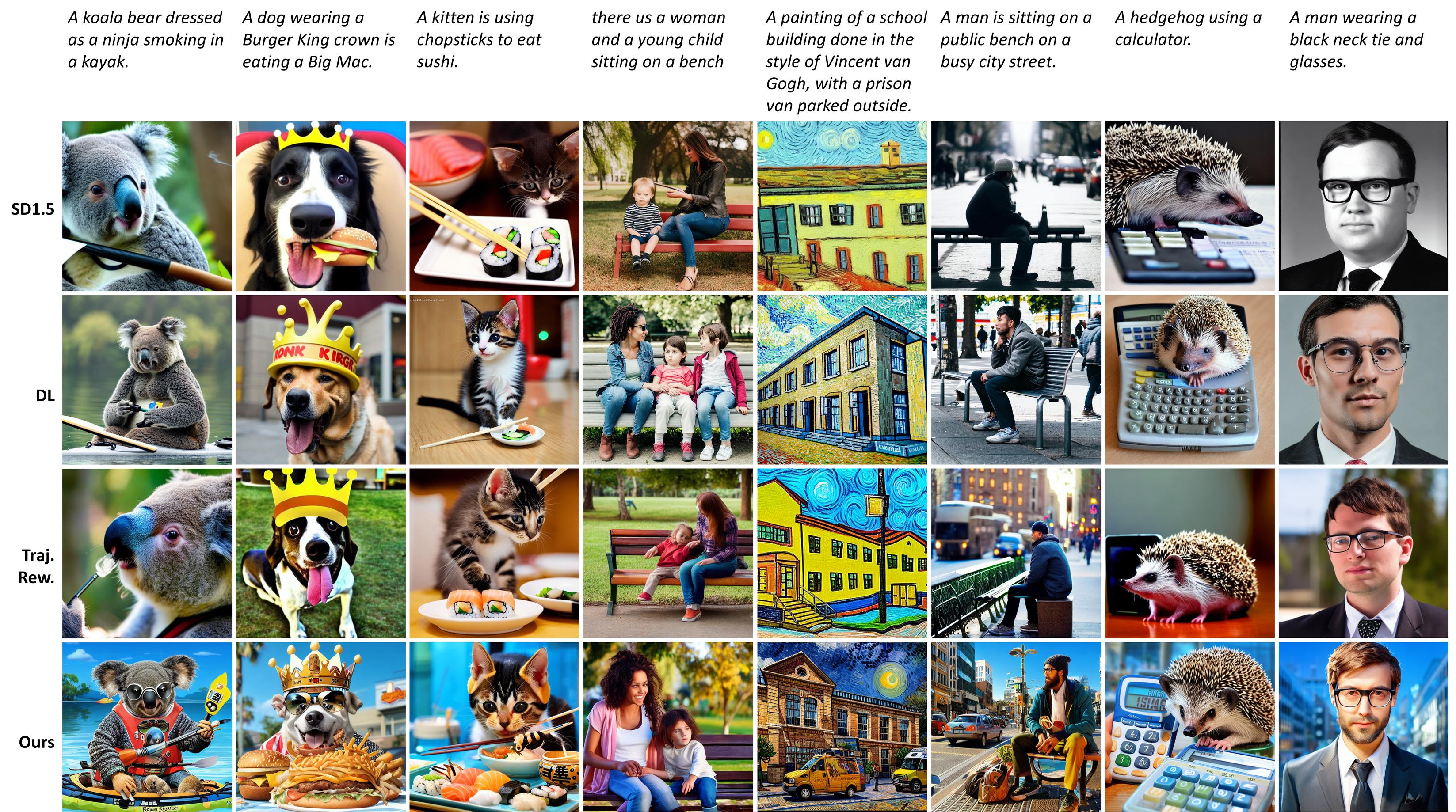}
     \vspace{-1.5em}
     \captionsetup{font=small}
        \caption{ 
        \small
        Multiple prompt experiment: generated images from our method and the baselines in Fig.~\ref{fig:multiple_prompt_generated_images} on \textit{randomly sampled} prompts from the HPSv2 test set. 
        ``DL'' denotes Dreamlike Photoreal 2.0, the best baseline from the HPSv2 paper.
        ``Traj. Rew.'' denotes the classical DPO-style objective of assuming trajectory-level reward (\Secref{sec:connect_with_dpo}).
        }
\end{figure}

\begin{figure}[H]
     \centering
\includegraphics[width=1.\textwidth]{./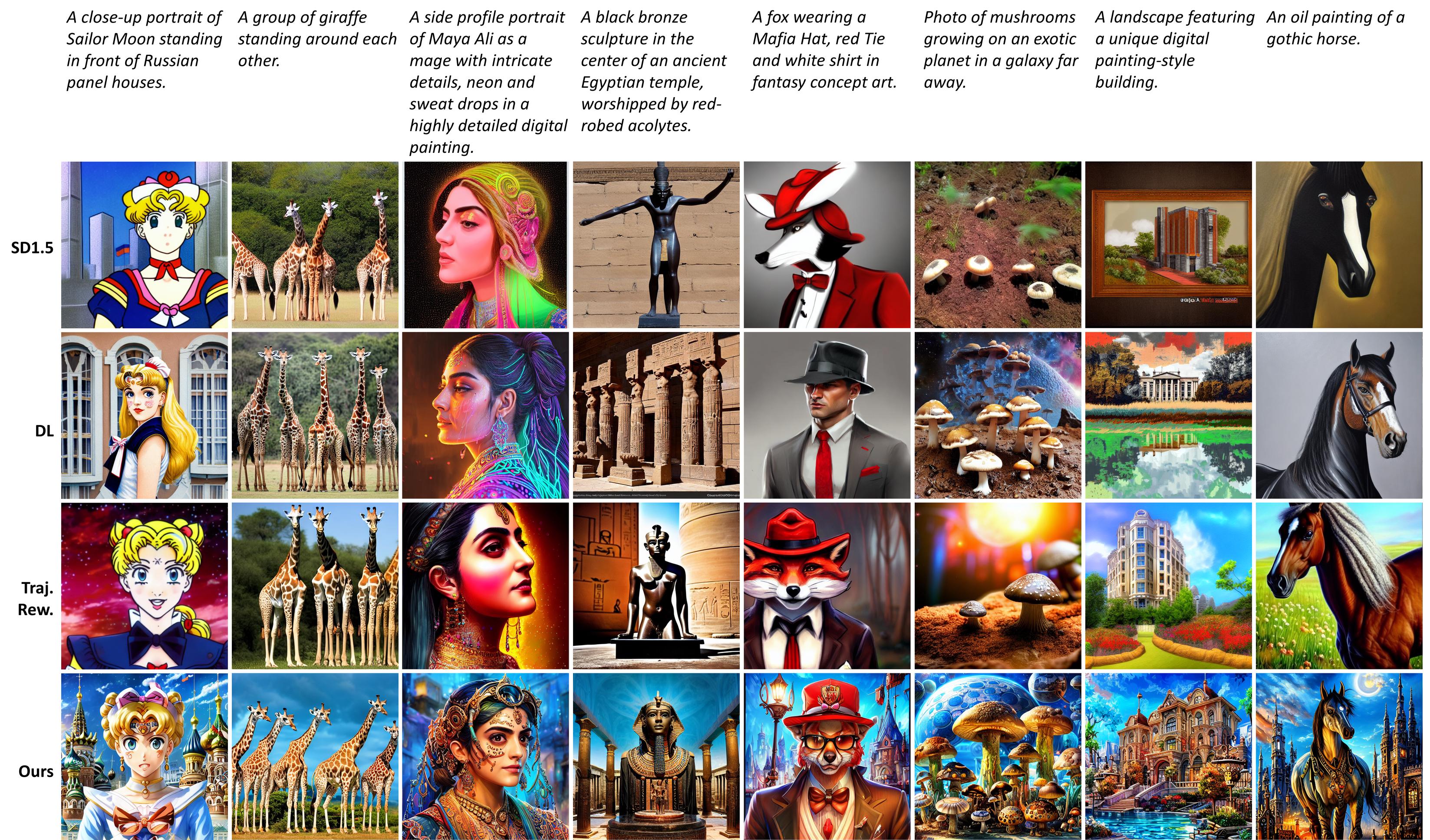}
     \vspace{-1.7em}
     \captionsetup{font=small}
        \caption{ 
        \small
        Multiple prompt experiment: generated images from our method and the baselines in Fig.~\ref{fig:multiple_prompt_generated_images} on \textit{randomly sampled} prompts from the HPSv2 test set. 
        ``DL'' denotes Dreamlike Photoreal 2.0, the best baseline from the HPSv2 paper.
        ``Traj. Rew.'' denotes the classical DPO-style objective of assuming trajectory-level reward (\Secref{sec:connect_with_dpo}).
        }
        \vspace{-1.4em}
\end{figure}

\begin{figure}[H]
     \centering
\includegraphics[width=1.\textwidth]{./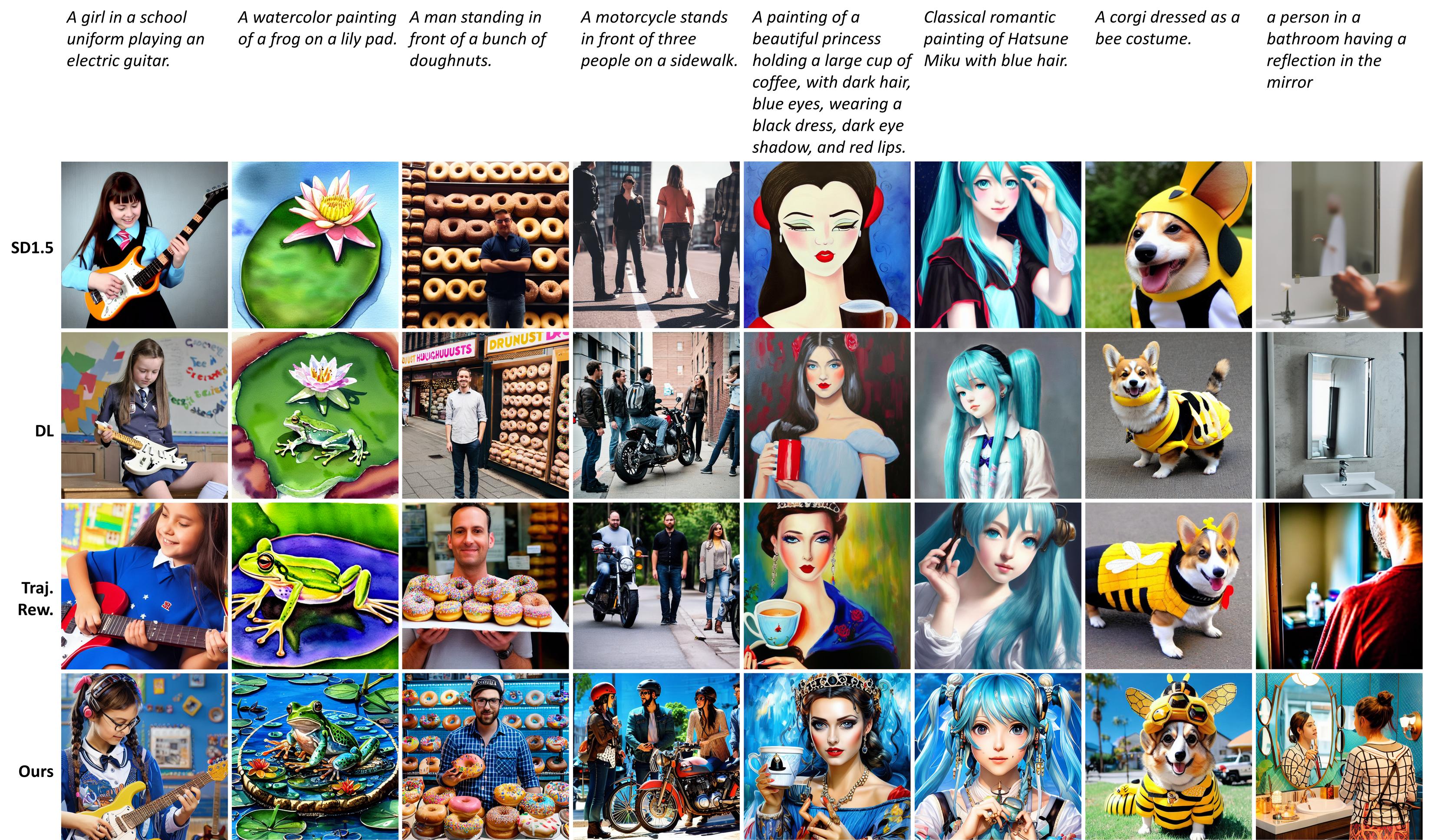}
     \vspace{-1.7em}
     \captionsetup{font=small}
        \caption{ 
        \small
        Multiple prompt experiment: generated images from our method and the baselines in Fig.~\ref{fig:multiple_prompt_generated_images} on \textit{randomly sampled} prompts from the HPSv2 test set. 
        ``DL'' denotes Dreamlike Photoreal 2.0, the best baseline from the HPSv2 paper.
        ``Traj. Rew.'' denotes the classical DPO-style objective of assuming trajectory-level reward (\Secref{sec:connect_with_dpo}).
        }
\end{figure}

\begin{figure}[H]
     \centering
\includegraphics[width=1.\textwidth]{./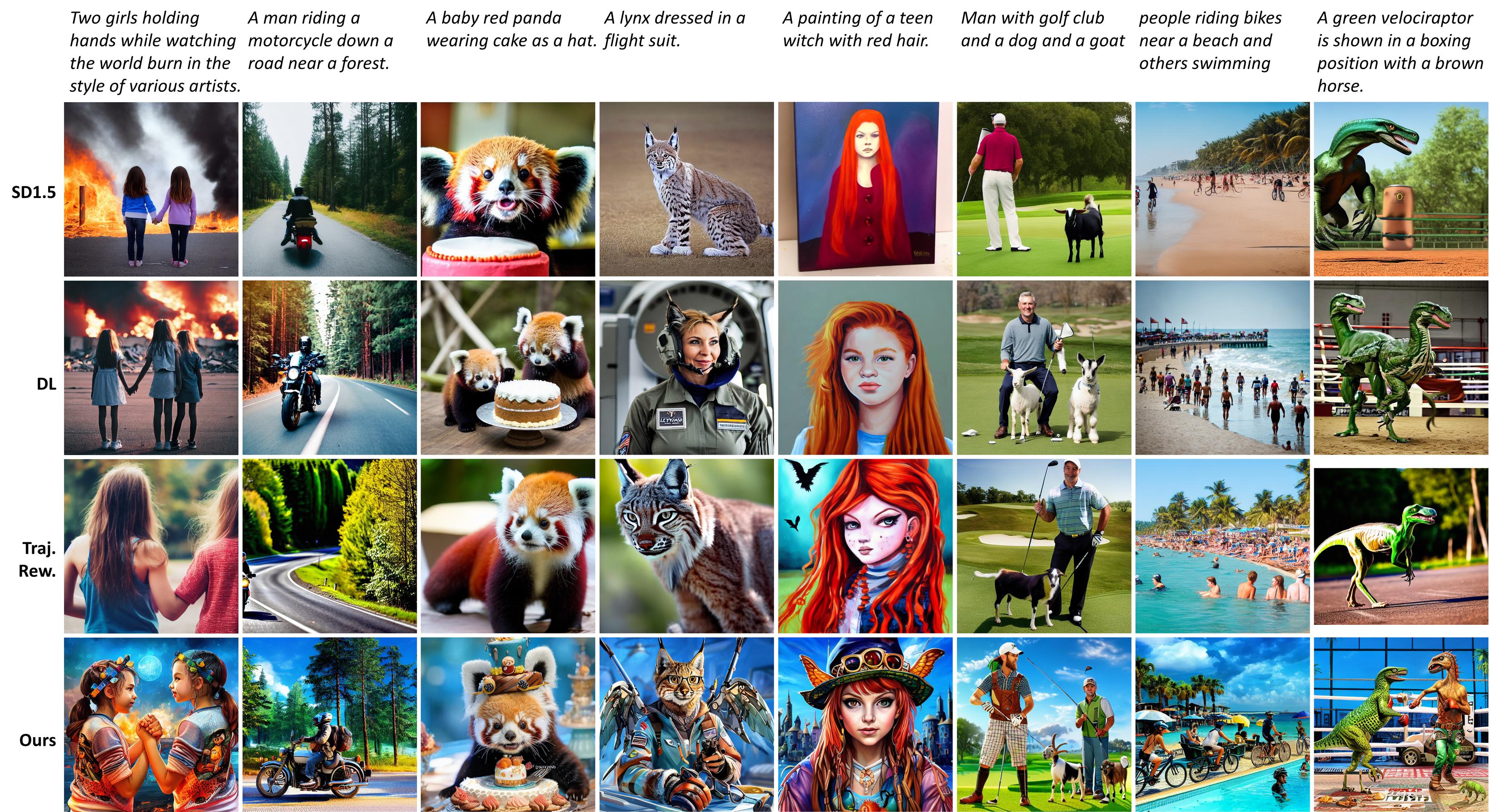}
     \vspace{-1.5em}
     \captionsetup{font=small}
        \caption{ 
        \small
        Multiple prompt experiment: generated images from our method and the baselines in Fig.~\ref{fig:multiple_prompt_generated_images} on \textit{randomly sampled} prompts from the HPSv2 test set. 
        ``DL'' denotes Dreamlike Photoreal 2.0, the best baseline from the HPSv2 paper.
        ``Traj. Rew.'' denotes the classical DPO-style objective of assuming trajectory-level reward (\Secref{sec:connect_with_dpo}).
        }
\end{figure}

\subsection{More Generation Trajectories}\label{sec:more_traj_single_prompt}
Recall that for all generation trajectories, we present the (decoded)  $\hat \vx_0$ predicted from the latents at the specified timesteps of the diffusion/T2I reverse chain.
A brief discussion on each figure is in its caption.

\begin{figure}[H]
     \centering
\includegraphics[width=1.\textwidth]{./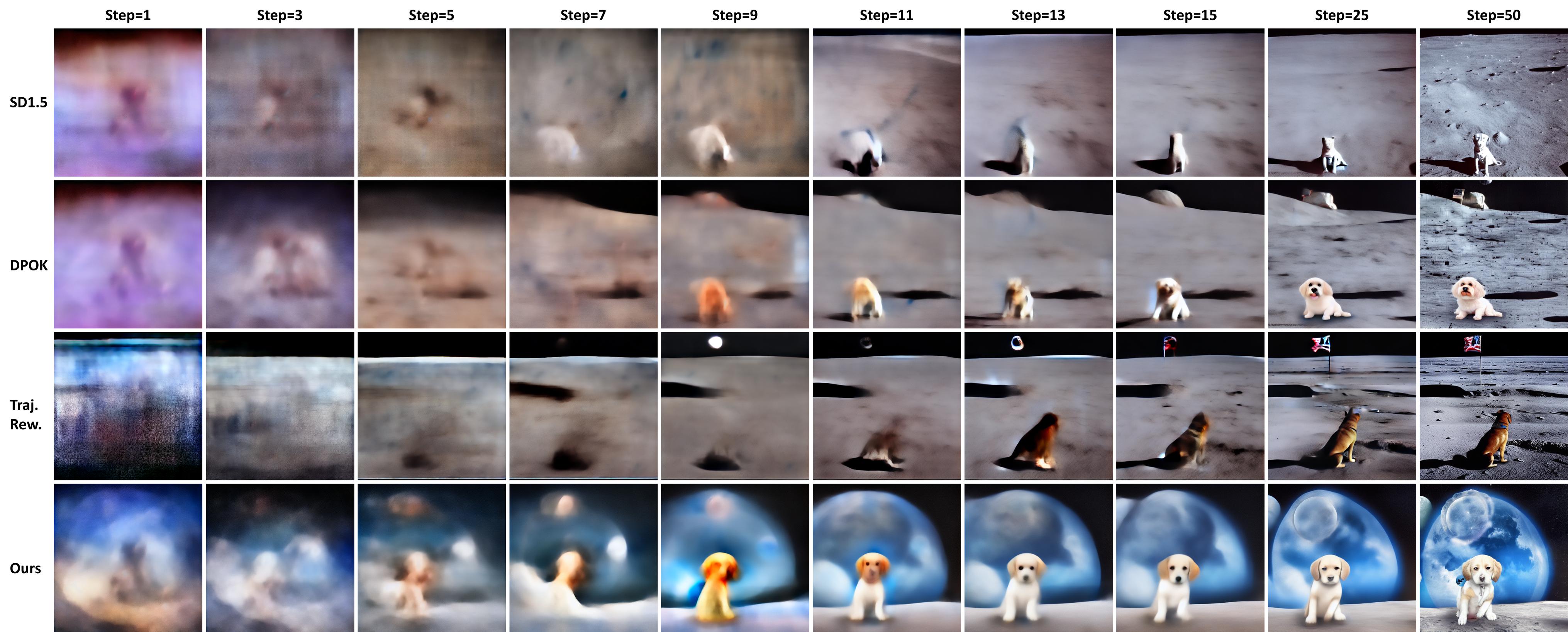}
     \vspace{-1.5em}
     \captionsetup{font=small}
        \caption{ 
        \small
        Generation trajectories for the prompt ``A dog on the moon.'', correspond to the images in Fig.~\ref{fig:single_prompt_generated_images} from our method and the baselines.
        ``Traj. Rew.'' denotes the classical DPO-style objective of assuming trajectory-level reward (\Secref{sec:connect_with_dpo}).
        Our method generates the required shape of a \textit{dog} as early as at Step $11$, when the shapes in the baselines are mostly unrecognizable.
        At Step $13$, our method is able to give a rather complete generation for the input prompt.
        Subsequent steps in the reverse chain are then allocated to polish the image details, leading to better final image.
        }
\end{figure}
\mbox{}

\begin{figure}[H]
     \centering
\includegraphics[width=1.\textwidth]{./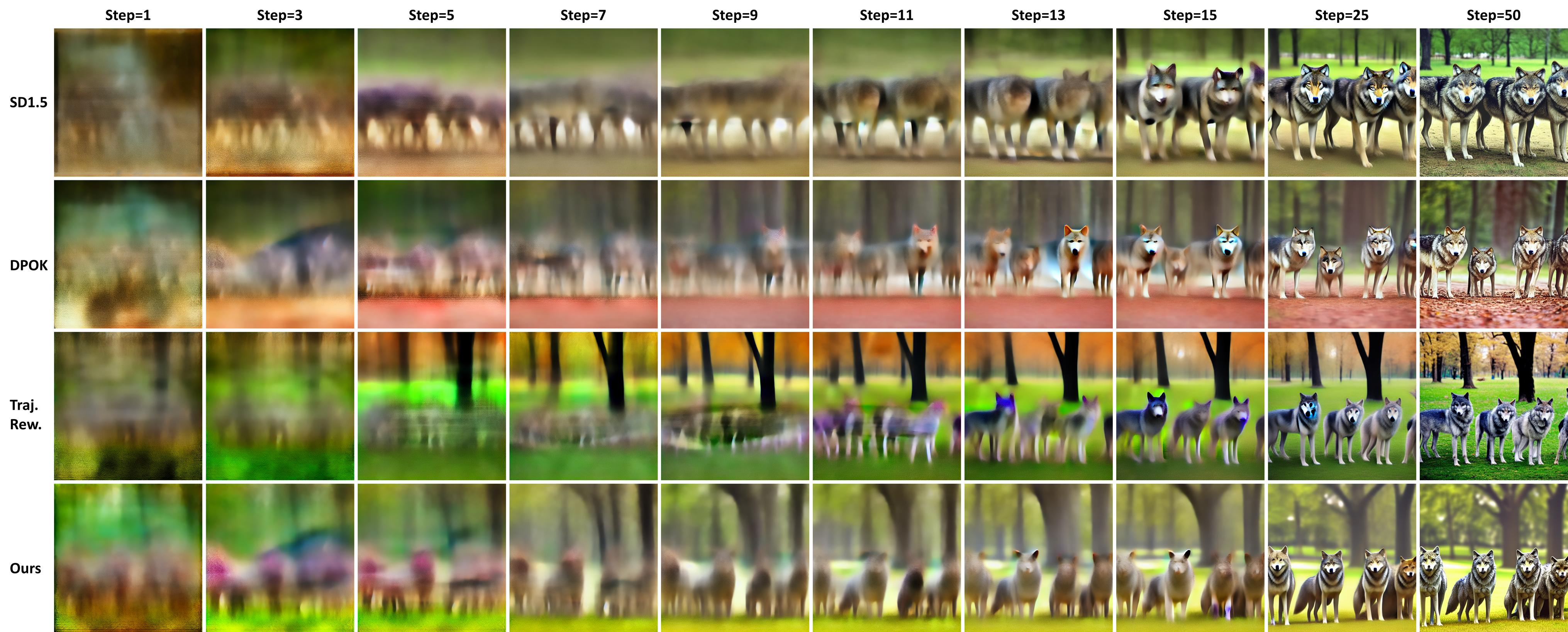}
     \vspace{-1.5em}
     \captionsetup{font=small}
        \caption{ 
        \small
        Generation trajectories for the prompt ``Four wolves in the park.'', correspond to the images in Fig.~\ref{fig:single_prompt_generated_images} from our method and the baselines.
        ``Traj. Rew.'' denotes the classical DPO-style objective of assuming trajectory-level reward (\Secref{sec:connect_with_dpo}).
        Our method generates outlines of the requires shapes (four wolves) as early as at Steps $9$ and $11$, earlier than the  baselines especially the ``Traj. Rew.''.
        }
\end{figure}
\mbox{}

\begin{figure}[H]
     \centering
\includegraphics[width=1.\textwidth]{./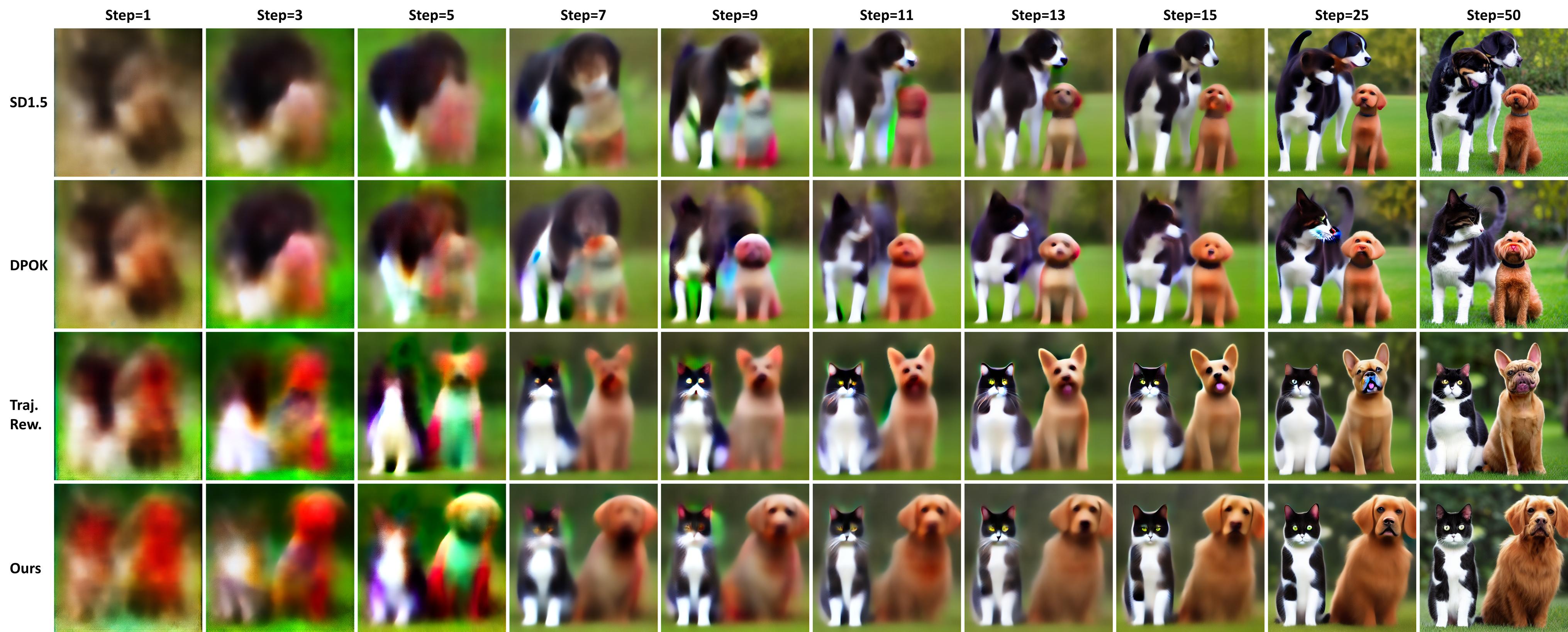}
     \vspace{-1.5em}
     \captionsetup{font=small}
        \caption{ 
        \small
        Generation trajectories for the prompt ``A cat and a dog.'', correspond to the images in Fig.~\ref{fig:single_prompt_generated_images} from our method and the baselines.
        ``Traj. Rew.'' denotes the classical DPO-style objective of assuming trajectory-level reward (\Secref{sec:connect_with_dpo}).
        Our method generates the outlines of the desired shapes as fast as at Steps $3$ and $5$, especially when compared to the baselines DPOK and raw SD1.5.
        This helps our method in generating a more reasonable and better final image.
        }
\end{figure}


\end{document}